\renewcommand*{\backrefalt}[4]{%
    \ifcase #1 \footnotesize{(Not cited.)}%
    \or        \footnotesize{(Cited on page~#2.)}%
    \else      \footnotesize{(Cited on pages~#2.)}%
    \fi}
\newtheorem{theorem}{Theorem}[section]
\newtheorem{corollary}[theorem]{Corollary}
\newtheorem{lemma}[theorem]{Lemma}
\newtheorem{proposition}[theorem]{Proposition}
\newtheorem{definition}{Definition}[section]
\newtheorem{example}{Example}[section]
\newtheorem{assumption}[theorem]{Assumption}
\newcommand{\EE}{\mathbb{E}}
\newcommand{\PP}{\mathbb{P}}
\newcommand{\tr}{\textnormal{Trace}\,}
\newcommand{\diag}{\textnormal{diag}}
\newcommand{\x}{\mathbf x}
\newcommand{\y}{\mathbf y}
\newcommand{\cb}{\mathbf c}
\newcommand{\z}{\mathbf z}
\newcommand{\w}{\mathbf w}
\newcommand{\sv}{\mathbf v}
\newcommand{\argmin}{\mathop{\rm argmin}}
\newcommand{\HCal}{\mathcal{H}}
\newcommand{\SCal}{\mathcal{S}}
\newcommand{\WCal}{\mathcal{W}}
\newcommand{\zero}{\textbf{0}}
\newcommand{\br}{\mathbb{R}}
\newcommand{\ba}{\begin{array}}
\newcommand{\ea}{\end{array}}
\newcommand{\ZCal}{\mathcal{Z}}
\newcommand{\XCal}{\mathcal{X}}
\newcommand{\YCal}{\mathcal{Y}}
\title{\bf{\LARGE{A Variational Inequality Approach to Bayesian Regression Games}}}
\author[1]{Wenshuo Guo\thanks{\{wguo, jordan, darren\_lin\}@cs.berkeley.edu. Authors are ordered alphabetically.}}
\author[1,2]{Michael I. Jordan}
\author[1]{Tianyi Lin}
\affil[1]{Department of Electrical Engineering and Computer Sciences, UC Berkeley}
\affil[2]{Department of Statistics, UC Berkeley}
\begin{document}









\maketitle

\begin{abstract} 
Bayesian regression games are a special class of two-player general-sum Bayesian games in which the learner is partially informed about the adversary's objective through a Bayesian prior. This formulation captures the uncertainty in regard to the adversary, and is useful in problems where the learner and adversary may have conflicting, but not necessarily perfectly antagonistic objectives. Although the Bayesian approach is a more general alternative to the standard minimax formulation, the applications of Bayesian regression games have been limited due to computational difficulties, and the existence and uniqueness of a Bayesian equilibrium are only known for quadratic cost functions. First, we prove the existence and uniqueness of a Bayesian equilibrium for a class of convex and smooth Bayesian games by regarding it as a solution of an infinite-dimensional variational inequality (VI) in Hilbert space. We consider two special cases in which the infinite-dimensional VI reduces to a high-dimensional VI or a nonconvex stochastic optimization, and provide two simple algorithms of solving them with strong convergence guarantees. Numerical results on real datasets demonstrate the promise of this approach.
\end{abstract}


\section{Introduction}
Adversarially robust models have seen a tremendous surge in research activity by various communities over the past decades, including statistics~\citep{Huber-2004-Robust}, optimization~\citep{Bental-2009-Robust}, and machine learning~\citep{Globerson-2006-Nightmare, Biggio-2018-Wild}. In machine learning, there has been renewed interest in the topic driven by work on adversarial methods in deep learning~\citep{Bruna-2014-Intriguing, Goodfellow-2015-Explaining}. There are two main considerations underlying this line of work: (1) real-world deployments of machine-learning methods require robustness to malicious data and it is an ongoing challenge to provide such robustness~\citep{Madry-2018-Towards, Wong-2020-Fast}; (2) adversarially robust models may generalize better~\citep{Zhu-2020-FreeLB} and have better interpretability properties than non-robust methods~\citep{Tsipras-2018-Robustness, Santurkar-2019-Image}. To this end, adversarially robust models are often preferred by practitioners in real-world applications.

From a game-theoretic point of view, adversarially robust models naturally form a two-player game between a learner and an adversary. Both players choose actions simultaneously, and their interacting dynamics constitute a \textit{noncooperative game}~\citep{Nash-1951-Non}. The learner's action is to select the best set of model parameters while maximizing the test accuracy; the adversary's action is to impose a perturbation on the input data distribution while paying a perturbation cost. Based on this general framework, existing approaches have mostly been restricted to the \textit{zero-sum} case,  in which the learner and adversary have fully conflicting goals. In this case, the learner is best off by choosing a \textit{minimax strategy}; i.e., minimizing the worst-case cost over the action space of the adversary. For classification and regression problems, properties of the minimax solutions have been derived under a variety of assumptions~\citep{Lanckriet-2002-Robust, El-2003-Robust, Globerson-2006-Nightmare, Sayed-2002-Uniqueness, Teo-2008-Convex}. 

On the positive side, the minimax solutions are computationally tractable in several specific settings~\citep{Sayed-2002-Uniqueness}, or can be approximated through a convex relaxation~\citep{Teo-2008-Convex}. However, they come with a few limitations. First, zero-sum games are not flexible enough to capture cases in which the learner and the adversary do not have perfectly antagonistic goals~\citep{Bruckner-2012-Static}. For example, a credit card defrauder’s goal of maximizing the illicit profit made from exploiting phished account information via spam emails is not the exact inverse of an email service provider's goal of achieving a close-to-zero false positives rate at spam recognition. In these cases, a minimax strategy can make overly pessimistic assumptions about the adversary's behavior and lead to an optimal outcome. 
One approach to filling this gap is by relaxing the zero-sum assumption to \textit{general-sum} games, in which the learner is fully aware of the costs of the adversary~\citep{Bruckner-2012-Static}. 

Moreover, in many practical applications, the cost function of the adversary is simply unknown to the learner. For instance, it is hard for an internet security service provider to know exactly the profit of the attackers. Therefore, the standard minimax solutions, which require \textit{complete information} of the game, become infeasible to compute.
This strong assumption of complete information can be lifted via a Bayesian game-theoretic framework~\citep{Harsanyi-1967-Games}. This has been pursued, for example, by~\citet{Grosshans-2013-Bayesian}, who propose \emph{Bayesian regression games}. In this class of games, the learner's uncertainty regarding the adversary's costs is reflected in a Bayesian prior over the parameters of the cost function. These authors derive sufficient conditions for the existence and uniqueness of a Bayesian equilibrium, as well as a graduated optimization algorithm to compute the equilibrium. 

Despite the appealing conceptual framework, the sufficient conditions for the equilibrium uniqueness for Bayesian regression games that are known to date~\citep[Theorem~2]{Grosshans-2013-Bayesian} are designed for quadratic cost functions. This excludes other common choices, e.g., logistic or smooth hinge functions, which are widely used in practice. Moreover, the algorithms studied in this line of work are heuristic, providing no theoretical guarantee of convergence. Thus we are motivated to tackle the following important open questions: \textit{Can we generalize existing sufficient conditions for the existence and uniqueness of a Bayesian equilibrium in Bayesian regression games to more general cost functions?} \emph{Can we develop efficient algorithms to compute the equilibrium?}

\paragraph{Contributions.} We present an affirmative answer to these questions in this paper. First, we prove sufficient conditions for the existence and uniqueness of a Bayesian equilibrium in a general class of convex and smooth Bayesian games using a variational inequality (VI) approach. In particular, we show that computing a Bayesian equilibrium amounts to solving an infinite-dimensional VI in a Hilbert space under certain conditions. This allows sufficient conditions to be derived using classical results from the optimization literature. Second, we consider two special settings with either a finite Bayesian prior or a quadratic adversarial loss and show that the infinite-dimensional VI reduces to a high-dimensional Euclidean VI or a nonconvex Euclidean stochastic optimization problem in these two settings. Third, we propose new algorithms to solve for the equilibrium with theoretical guarantees. The first algorithm uses the idea of projected reflected gradient with inertial extrapolation and achieves the strong convergence. The second algorithm uses the idea of randomized block coordinate descent, and is specialized to the finite Bayesian prior setting. We provide an analysis of its iteration complexity and convergence guarantee. Lastly, we empirically demonstrate the effectiveness of the proposed approach on real dataset.
  
\paragraph{Organization.} In Section~\ref{sec:related_work}, we overview  related work on Bayesian games and the computation of equilibria. In Section~\ref{sec:prelim}, we present background on Bayesian regression games and prove that the computation of a Bayesian equilibrium amounts to solving an infinite-dimensional VI. We also treat the existence and uniqueness of Bayesian equilibria and consider two special settings of a finite Bayesian prior and a quadratic adversarial loss. We propose specific algorithms and analyze their convergence and iteration complexity in Sections~\ref{sec:alg_inertial} and~\ref{sec:alg_rand}.  Numerical results demonstrating the favorable practical performance of our algorithms are presented in Section~\ref{sec:experiment}. We conclude in Section~\ref{sec:conclu} and provide all the missing proof details in the appendix. 

\paragraph{Notation.} We use bold lower-case letters such as $\x$ to denote vectors, upper-case letters such as $X$ to denote matrices, and calligraphic upper-case letters such as $\XCal$ to denote sets. The notation $[n]$ refers to $\{1, 2, \ldots, n\}$ for some integer $n > 0$. The symbols $\zero_n$ and $\zero_{n \times m}$ refer to the vector and the matrix in $\br^n$ and $\br^{n \times m}$ whose entries are all zeroes. We let $\EE[\cdot]$ denote an expectation and use $\EE_q[\cdot]$ to denote an expectation over a distribution $q$. For a differentiable function $f: \br^d \rightarrow \br$, we let $\nabla f(\x)$ denote the gradient of $f$ at $\x$. For a vector $\x \in \br^d$, we denote $\|\x\|$ as its $\ell_2$-norm. For a matrix $X \in \br^{d \times r}$, we let $\|X\|_F$ denote its Frobenius norm. As an abuse of notation, we denote $\langle\x, \y\rangle = x^\top y$ as the inner product between two vectors $\x, \y \in \br^d$ and $\langle X, Y\rangle = \tr(X^\top Y)$ as the inner product between two matrices $X, Y \in \br^{d \times r}$, where $\tr(\cdot)$ stands for the trace of a matrix. For $\XCal \subseteq \br^d$, we let $D_\XCal$ denote its diameter: $D_\XCal = \max_{\x, \x'\in \XCal} \|\x-\x'\|$. Given $\epsilon > 0$, $a=O(b(\epsilon))$ stands for the upper bound $a \leq C \cdot b(\epsilon)$, where $C>0$ is independent of $\epsilon$. Similarly, $a=\tilde{O}(b(\epsilon))$ indicates that the inequality may depend on a logarithmic function of $1/\epsilon$, where $C>0$ is independent of $\epsilon$.

\subsection{Related Work}\label{sec:related_work}
We refer to~\citet{Kolter-2018-Adversarial} as a reference point for the burgeoning literature on adversarially robust models. Despite the attention devoted to this topic,~\citet{Grosshans-2013-Bayesian} is one of very few papers that models the general-sum nature and the uncertainty in the adversary via the classical formalism of Bayesian games.

\paragraph{Bayesian games.} Bayesian game has been a classical approach in game theory to model the situation of asymmetric or incomplete information in games. In his seminal work,~\citet{Harsanyi-1967-Games} proved the existence of a \textit{Bayesian Nash equilibrium} in finite Bayesian games given a common prior and common knowledge of that prior among all the players. Subsequently,~\citet{Mertens-1985-Formulation} introduced a relaxed notion of a ``universal prior space'', which is a sufficiently large space that captures players’ higher-order beliefs. Further,~\citet{Aghassi-2006-Robust} considered the special case where the payoffs are drawn from a bounded uncertainty set but the distribution is fully unknown. 

More recently, the appealing formulation of Bayesian games which captures the players' uncertainty has led to many works in economics settings, in particular for auctions. Specifically, each bidder has a private valuation function that expresses complex preferences over all subsets of items, and bidders have beliefs about the valuation functions of the other bidders, in the form of probability distributions~\citep{Myerson-1985-Bayesian}. In this setting, a Bayesian equilibrium can be viewed as an approximation for the optimal social welfare value. Unfortunately, most existing results on the complexity of finding a Bayesian equilibrium in various auctions are negative~\cite{Christodoulou-2008-Bayesian, Bhawalkar-2011-Welfare, Feldman-2013-Simultaneous}. Computing a Bayesian equilibrium is in PP and even finding an $\epsilon$-approximate Bayesian equilibrium is NP-hard when $\epsilon>0$ is small~\citep{Cai-2014-Simultaneous}.

\paragraph{Equilibrium existence and computation.} The existence of a mixed strategy Nash equilibrium is well known in finite games with complete information~\citep{Nash-1950-Equilibrium}. Such existence results are derived via the Brouwder fixed-point theorem~\citep{Kakutani-1941-Generalization}, which suggests intuitively that a fixed-point iteration might be an efficient approach for computing a Nash equilibrium. Similar existence and uniqueness results are derived for concave games with complete information~\citep{Rosen-1965-Existence}. However, computation of the equilibrium has been a challenging problem.~\citet{Chen-2009-Settling} recently proved that the problem of finding a Nash equilibrium for even the simplest two-player general-sum games is PPAD-complete~\citep{Papadimitriou-1994-Complexity}. Further complexity results have been established for equilibrium computation in games with complete information under various assumptions~\citep{Gilboa-1989-Nash, Megiddo-1991-Total, Conitzer-2003-Complexity, Conitzer-2008-New, Daskalakis-2009-complexity, Rubinstein-2018-Inapproximability}. In Bayesian games, the complexity of deciding the existence of a pure Bayesian equilibrium is in general NP-hard~\citep{Conitzer-2003-Complexity, Gottlob-2007-Complexity}. Nonetheless, a few Bayesian games are computationally tractable. Two canonical examples are: (i) tree-games~\citep{Singh-2004-Computing}, where the cost function depends only on the actions/types of neighboring players and the interaction formed by the neighborhood relation is a tree; (ii) two-player zero-sum Bayesian games with finite prior. The counterfactual regret minimization (CFR) algorithm was proposed with solid theoretical guarantee~\citep{Zinkevich-2007-Regret}.

\section{Bayesian Regression Games}\label{sec:prelim}
In this section, we first present the setup and equilibrium concepts for Bayesian regression games with general cost functions. We then prove existence and uniqueness results by a variational inequality (VI) formulation. 

\subsection{Basic setup}
We consider a general-sum game between a \textit{learner} of a regression model and a \textit{data generator} who is able to perturb the data distribution. Let $(X, \y) \in \br^{n \times m} \times \br^n$ be a pair of data matrix and target vector, which are generated by the data generator at training time. Denote each row of $X$ as $\x_i$, each entry of $\y$ as $y_i$ for $i \in [n]$. We assume that all pairs of instances $\{(\x_i, y_i)\}_{i=1}^n$ are drawn from an unknown distribution $\mu$ over $\XCal \times \YCal$. At testing time, the data generator provides new instances drawn from another distribution $\bar{\mu}$ defined on $\XCal \times \YCal$, but might not be $\mu$. These instances are generally not available at the training time. 

For the learner, we denote the action space as $\WCal \subseteq \br^m$, which is the space of the regression parameters. We denote an instance-specific weight by $c_l(\x, y) \geq 0$. Then, the learner's cost at testing time is the weighted average loss, i.e. $\theta_l(\w, \bar{\mu}, c_l) = \int c_l(\x, y)f_l(\w, \x, y) \; d\bar{\mu}(\x, y)$, where $\w \in \WCal$ is a parameter of the prediction model and $f_l$ is the cost function.  

For the data generator, intuitively, the goal is to manipulate the input data in order to achieve certain targeted predictions. Therefore, the costs of the data generator contain two parts. The first part is a cost of performing the manipulation, and the second part is a cost which quantifies the loss between the actual predictions and the data generator's targeted predictions. For the cost of manipulation, the manipulation of the input data is reflected in the difference between the distributions $\mu$ and $\bar{\mu}$.  We denote the cost of such manipulation
for the data generator as $\Omega_d(\mu, \bar{\mu})$. For the cost on the predictions, we denote the vector of target values for $n$ data points as $z(\x, y) \in \ZCal \subseteq \br^n$, and $f_d$ as the cost function. Similarly to the learner, we also allow the data generator to have a instance-specific weight $c_d(\x, y)$. Then, the data generator's costs are defined by $\theta_d(\w, \bar{\mu}, c_d) = \int c_d(\x, y)f_d(\w, \x, z(\x, y)) \; d\bar{\mu}(\x, y) + \Omega_d(\mu, \bar{\mu})$. 


Note that the theoretical costs of both players depend on the unknown distributions $\mu$ and $\bar{\mu}$. Thus, we focus on the regularized empirical counterparts of the theoretical costs based on the training samples $(X, \y, \z)$, where $\y \in \br^n$ and $\z \in \br^n$ are the empirical versions of $y$  and $z(\x, y)$ respectively. The empirical counterpart of the term $\Omega_d(\mu, \bar{\mu})$ is represented by the difference between the training matrix $X$ and a perturbed matrix $\bar{X}$ that would be the outcome of applying the transformation which translates $\mu$ into $\bar{\mu}$ to $X$. For simplicity, we denote this by $\Omega_d(X, \bar{X})$. We also denote the empirical instance-specific weights for the two players by $\cb_l \in \br^n$ and $\cb_d \in \br^n$. Then, the empirical costs of the learner and the data generator are given by 
\begin{equation}\label{Def:player_cost}
\begin{array}{l}
\widehat{\theta}_l(\w, \bar{X}, \cb_l) = \sum_{i=1}^n c_{l,i}f_l(\w, \bar{\x}_i, y_i) + \Omega_l(\w), \\ 
\widehat{\theta}_d(\w, \bar{X}, \cb_d) = \sum_{i=1}^n c_{d,i}f_d(\w, \bar{\x}_i, z_i) + \Omega_d(X, \bar{X}),
\end{array} 
\end{equation}
where $\Omega_l(\w)$ is a regularization term. With a focus on machine-learning applications, we illustrate the above setup with a few common loss functions, regularization terms and constraint sets. Besides the quadratic case~\citep{Grosshans-2013-Bayesian}, we provide a motivating example using logistic function and the unit ball constraint sets. This is encouraged by the fact that the logistic loss functions are more suitable than quadratic ones for the binary classification/regression problems, such as the email spam filtering and network security detection.   
\begin{example}[quadratic] $f_l(\w, \x, y)=(\x^\top\w-y)^2$, $\Omega_l(\w)=\|\w\|^2$, $f_d(\w, \x, z)=(\x^\top\w-z)^2$ and $\Omega_d(X, \bar{X})=\|X-\bar{X}\|_F^2$; $\WCal=\br^m$, $\XCal=\br^{n \times m}$ and $\YCal=\ZCal=\br^n$. 
\end{example}
\begin{example}[logistic] $f_l(\w, \x, y)=\log(1+\exp(-y\x^\top\w))$, $\Omega_l(\w)=\|\w\|^2$, $f_d(\w, \x, z)=\log(1+\exp(-z\x^\top\w))$ and $\Omega_d(X, \bar{X})=\|X-\bar{X}\|_F^2$; $\WCal=\{\w \in \br^m \mid \|\w\| \leq 1\}$, $\XCal=\{X \in \br^{n \times m} \mid \|X\|_F \leq 1\}$ and $\YCal=\ZCal=\{-1, 1\}^n$. 
\end{example}

\subsection{Bayesian regression game}
The previous basic setup describes a general-sum two-player regression game. Now we present the formulation of the Bayesian regression game based on it. First, note that the cost functions (see Eq~\eqref{Def:player_cost}) depend on the actions of both players---the parameters $\w$ and the transformation manifested in $\bar{X}$. Further, the cost function of the data generator depends on the instance-specific weight $\cb_d$. We are interested in a setting where these weights are private information of the data generator, and are unknown to the learner. Instead, the learner is only informed about the instance-specific weight $\cb_d$ through a Bayesian prior $q(\cb_d)$. As argued by~\citet{Grosshans-2013-Bayesian}, this asymmetry of uncertainty is crucial. By modeling the learner's lack of information about the data generator, we intend to make the learner more robust to new adversarial examples. Thus, this setting is naturally formulated as a \textit{two-player general-sum Bayesian game}.  

We denote this Bayesian regression game by the tuple $G = (\WCal, \Sigma, \widehat{\theta}_l, \widehat{\theta}_d, \cb_l, q)$. From the learner's viewpoint, $\cb_d$ is a random variable that is drawn from a Bayesian prior $q$ at testing time. During training, the data generator commits to a parametric strategy, $\sigma: \br^n \rightarrow \XCal$, which maps a value of $\cb_d$ (unknown to the learner) to a transformation reflected in $\bar{X}$. In other words, the action space of the data generator $\Sigma$ contains all the functions from $\br^n$ to $\XCal$. We also define Bayesian Equilibrium and its approximation. We denote the best responses of the learner and the data generator as: $\w^\star[\sigma] = \argmin_{\w \in \WCal} \EE_q[\widehat{\theta}_l(\w, \sigma(\cb_d), \cb_l)]$ and $\sigma^\star[\w](\cb_d) = \argmin_{\bar{X} \in \XCal} \widehat{\theta}_d(\w, \bar{X}, \cb_d)$.  
\begin{definition}[Bayesian Equilibrium]
The strategy profile $(\w_\star, \sigma_\star) \in \WCal \times \Sigma$ is a \emph{Bayesian equilibrium} for the Bayesian regression game $G = (\WCal, \Sigma, \widehat{\theta}_l, \widehat{\theta}_d, \cb_l, q)$ if, for a.e., $\omega \in \Omega$, the following statement holds true: $(\w_\star, \sigma_\star(\cb_d)) = (\w^\star[\sigma_\star], \sigma^\star[\w_\star](\cb_d))$, or equivalently, $\EE_q[\widehat{\theta}_l(\w_\star, \sigma_\star(\cb_d), \cb_l)] \leq \EE_q[\widehat{\theta}_l(\w, \sigma_\star(\cb_d), \cb_l)]$ for all $\w \in \WCal$ and $\widehat{\theta}_d(\w_\star, \sigma_\star(\cb_d), \cb_d) \leq \widehat{\theta}_d(\w_\star, \bar{X}, \cb_d)$ for all $\bar{X} \in \XCal$. 
\end{definition}   
\begin{definition}[$\epsilon$-Bayesian Equilibrium]
The strategy profile $(\w, \sigma) \in \WCal \times \Sigma$ is an \emph{$\epsilon$-Bayesian equilibrium} for the Bayesian regression game $G = (\WCal, \Sigma, \widehat{\theta}_l, \widehat{\theta}_d, \cb_l, q)$ if, for a.e., $\omega \in \Omega$, the following statement holds true: $\|\w - \w_\star\|^2 + \EE_q[\|\sigma(\cb_d) - \sigma_\star(\cb_d)\|_F^2] \leq \epsilon$, where the strategy profile $(\w_\star, \sigma_\star)$ is a Bayesian equilibrium. 
\end{definition}

When the distribution $q$ is a point mass, it is clear that no uncertainty exists and a Bayesian equilibrium is a Nash equilibrium. This corresponds to a two-player general-sum game which is still more general than the minimax strategy.

\subsection{Infinite-dimensional variational inequality model}
We show that we can regard a Bayesian equilibrium as a solution of an infinite-dimensional VI~\citep{Kinderlehrer-2000-Introduction} (Eq.~\eqref{prob:VI-infinite}). This characterization is not only necessary but sufficient under certain assumptions. We adapt a proof technique in~\citet{Ui-2016-Bayesian}, and specialized the VI model to Bayesian regression games. We make the following assumption throughout this paper.   
\begin{assumption}\label{Assumption:main}
The Bayesian regression game $G = (\WCal, \Sigma, \widehat{\theta}_l, \widehat{\theta}_d, \cb_l, q)$ is convex and smooth: \textbf{(1)} The cost function $\widehat{\theta}_l(\cdot, \bar{X}, \cb_l): \WCal \rightarrow \br$ is convex and continuously differentiable for each $\bar{X} \in \XCal$, and $\|\nabla_\w\widehat{\theta}_l(\w, \bar{X}, \cb_l)\|^2 < +\infty$ for each $(\w, \bar{X}) \in \WCal \times \XCal$; \textbf{(2)} The cost function $\widehat{\theta}_d(\w, \cdot, \cb_d): \XCal \rightarrow \br$ is convex and continuously differentiable for each $\w \in \WCal$, and $\EE_q[\|\nabla_{\bar{X}}\widehat{\theta}_d(\w, \bar{X}, \cb_d)\|_F^2] < +\infty$ for each $(\w, \bar{X}) \in \WCal \times \XCal$; \textbf{(3)} The action spaces $\WCal$ and $\XCal$ are both closed and convex.
\end{assumption} 
We provide some intuitions for these assumptions. First, note that a Bayesian equilibrium is characterized by the coordinate-wise minimization of the cost functions $\widehat{\theta}_l$ and $\widehat{\theta}_d$ over the action spaces $\WCal$ and $\XCal$. Thus, it is necessary to impose convexity conditions on the cost functions and a few moment conditions on the Bayesian prior $q$. Furthermore, our moment conditions are implied by the assumptions $\EE_q[c_{d,i}] < +\infty$ for all $i \in [n]$ made in~\citet[Theorem~1 and~2]{Grosshans-2013-Bayesian} and are thus slightly weaker. Finally, even if the Bayesian regression game $G = (\WCal, \Sigma, \widehat{\theta}_l, \widehat{\theta}_d, \cb_l, q)$ is not smooth, we can derive a similar first-order condition using the subgradients of $\widehat{\theta}_l(\cdot, \bar{X}, \cb_l)$ and $\widehat{\theta}_d(\w, \cdot, \cb_d)$ and regard a Bayesian equilibrium as a solution of a multi-valued VI~\citep{Kinderlehrer-2000-Introduction}; see Eq.~\eqref{prob:VI-infinite} for the details. 
\begin{lemma}\label{lemma:VI}
Under Assumption~\ref{Assumption:main}, the strategy profile $(\w_\star, \sigma_\star) \in \WCal \times \Sigma$ is a Bayesian equilibrium for the Bayesian regression game $G = (\WCal, \Sigma, \widehat{\theta}_l, \widehat{\theta}_d, \cb_l, q)$ if and only if, for a.e. $\omega \in \Omega$, the following statement holds true:  
\begin{equation}\label{lemma:VI-main}
\begin{array}{lcll}
\EE_q[\langle\w-\w_\star, \nabla_\w\widehat{\theta}_l(\w_\star, \sigma_\star(\cb_d), \cb_l)\rangle] & \geq & 0, & \forall \w \in \WCal, \\
\langle\bar{X} - \sigma_\star(\cb_d), \nabla_{\bar{X}}\widehat{\theta}_d(\w_\star, \sigma_\star(\cb_d), \cb_d)\rangle & \geq & 0, & \forall \bar{X} \in \XCal.  
\end{array}
\end{equation}
\end{lemma}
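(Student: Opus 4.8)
The plan is to establish the equivalence by reducing the Bayesian equilibrium conditions—which are optimization problems—to first-order variational inequality conditions, exploiting convexity and smoothness from Assumption~\ref{Assumption:main}. The key observation is that the Bayesian equilibrium conditions in the definition are expressed as two separate minimization statements: one for the learner (minimizing $\EE_q[\widehat{\theta}_l(\w, \sigma_\star(\cb_d), \cb_l)]$ over $\w \in \WCal$) and one for the data generator (minimizing $\widehat{\theta}_d(\w_\star, \bar{X}, \cb_d)$ over $\bar{X} \in \XCal$, pointwise in $\cb_d$). Each of these is a convex optimization problem over a closed convex set, so I would invoke the standard first-order optimality characterization: $\x_\star$ minimizes a convex differentiable $\phi$ over a closed convex set $\KCal$ if and only if $\langle \x - \x_\star, \nabla\phi(\x_\star)\rangle \geq 0$ for all $\x \in \KCal$.

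**First I would handle the data generator's condition**, which is the cleaner of the two. For a.e.\ $\cb_d$, the map $\bar{X} \mapsto \widehat{\theta}_d(\w_\star, \bar{X}, \cb_d)$ is convex and continuously differentiable on the closed convex set $\XCal$ by parts (2) and (3) of Assumption~\ref{Assumption:main}. Hence $\sigma_\star(\cb_d) \in \argmin_{\bar{X} \in \XCal} \widehat{\theta}_d(\w_\star, \bar{X}, \cb_d)$ is equivalent to $\langle \bar{X} - \sigma_\star(\cb_d), \nabla_{\bar{X}}\widehat{\theta}_d(\w_\star, \sigma_\star(\cb_d), \cb_d)\rangle \geq 0$ for all $\bar{X} \in \XCal$, which is exactly the second line of~\eqref{lemma:VI-main}. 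This direction is essentially immediate.

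**Next I would treat the learner's condition**, which requires slightly more care because the objective is an expectation over $q$. Fixing $\sigma_\star$, define $F(\w) = \EE_q[\widehat{\theta}_l(\w, \sigma_\star(\cb_d), \cb_l)]$. I need $F$ to be convex and differentiable with $\nabla F(\w) = \EE_q[\nabla_\w\widehat{\theta}_l(\w, \sigma_\star(\cb_d), \cb_l)]$. Convexity is inherited from part~(1) since expectation preserves convexity. Differentiation under the integral sign is justified by the moment condition $\|\nabla_\w\widehat{\theta}_l(\w, \bar{X}, \cb_l)\|^2 < +\infty$ together with continuity of the gradient—this is where the finiteness assumptions in Assumption~\ref{Assumption:main} earn their keep, and I would cite a dominated-convergence or measurable-selection argument (the kind used in~\citet{Ui-2016-Bayesian}) to swap $\nabla$ and $\EE_q$. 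Then $\w_\star \in \argmin_{\w \in \WCal} F(\w)$ is equivalent to $\langle \w - \w_\star, \nabla F(\w_\star)\rangle \geq 0$ for all $\w \in \WCal$, i.e.\ $\EE_q[\langle\w-\w_\star, \nabla_\w\widehat{\theta}_l(\w_\star, \sigma_\star(\cb_d), \cb_l)\rangle] \geq 0$, which is the first line of~\eqref{lemma:VI-main}. Combining the two characterizations, and noting that both the equilibrium definition and~\eqref{lemma:VI-main} are stated to hold for a.e.\ $\omega \in \Omega$, gives the claimed equivalence in both directions.

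**The main obstacle** I anticipate is the measure-theoretic bookkeeping: ensuring that $\sigma_\star$ can be chosen as a genuinely measurable selection so that $F(\w)$ is well-defined and the interchange of differentiation and expectation is rigorous, and making sure the ``for a.e.\ $\omega$'' quantifier is handled consistently between the pointwise data-generator condition and the integrated learner condition. The convexity and first-order-optimality steps themselves are routine; the care lies in the Hilbert-space/measurability setup, which is precisely why the proof adapts the technique of~\citet{Ui-2016-Bayesian} rather than arguing from scratch.
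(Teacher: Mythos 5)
Your proposal is correct and takes essentially the same route as the paper: both directions reduce each player's equilibrium condition to the first-order optimality condition of a convex, continuously differentiable minimization over a closed convex set, with the only analytic step being the interchange of $\nabla_\w$ and $\EE_q$, which the paper likewise handles by a one-line convergence-theorem appeal. The only cosmetic difference is that the paper rederives the first-order characterization by hand via the scalar function $g(t)=\EE_q[\widehat{\theta}_l(\w_\star+t(\w-\w_\star), \sigma_\star(\cb_d), \cb_l)]$ (using $g'(0)\geq 0$ for necessity and $g(1)\geq g(0)+g'(0)$ for sufficiency), whereas you invoke the standard characterization directly.
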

\begin{theorem}\label{Theorem:VI}
Under Assumption~\ref{Assumption:main}, the strategy profile $(\w_\star, \sigma_\star) \in \WCal \times \Sigma$ is a Bayesian equilibrium for the Bayesian regression game $G = (\WCal, \Sigma, \widehat{\theta}_l, \widehat{\theta}_d, \cb_l, q)$ iff for $\forall (\w, \sigma) \in \WCal \times \Sigma$, we have
\begin{equation}\label{prob:VI-infinite}
\EE_q[\langle \w-\w_\star, \nabla_\w\widehat{\theta}_l(\w_\star, \sigma_\star(\cb_d), \cb_l)\rangle + \langle \sigma(\cb_d) - \sigma_\star(\cb_d), \nabla_{\bar{X}}\widehat{\theta}_d(\w_\star, \sigma_\star(\cb_d), \cb_d)\rangle] \geq 0. 
\end{equation}
\end{theorem}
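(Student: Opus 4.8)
The plan is to deduce Theorem~\ref{Theorem:VI} directly from Lemma~\ref{lemma:VI}, which has already reduced the Bayesian equilibrium condition to the pair of pointwise (in $\omega$) variational inequalities in~\eqref{lemma:VI-main}. The key observation is that the combined inequality~\eqref{prob:VI-infinite} is simply the sum of the two inequalities in~\eqref{lemma:VI-main}, integrated against the prior $q$, where in the second inequality we take the specific test point $\bar X = \sigma(\cb_d)$ for a fixed $\sigma \in \Sigma$. So one direction is almost immediate: assuming $(\w_\star, \sigma_\star)$ is a Bayesian equilibrium, Lemma~\ref{lemma:VI} gives~\eqref{lemma:VI-main}; fixing any $\w \in \WCal$ and any $\sigma \in \Sigma$, the second line of~\eqref{lemma:VI-main} holds with $\bar X = \sigma(\cb_d)$ for a.e.\ $\omega$, we add it to the (pointwise form of the) first line, and take $\EE_q[\cdot]$ to obtain~\eqref{prob:VI-infinite}. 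The moment bounds in Assumption~\ref{Assumption:main} together with Cauchy--Schwarz guarantee the integrands are in $L^1(q)$, so the expectation and the summation are legitimate.

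For the converse, I would argue that~\eqref{prob:VI-infinite}, quantified over all $(\w, \sigma) \in \WCal \times \Sigma$, forces both lines of~\eqref{lemma:VI-main}. To recover the first line, fix $\w \in \WCal$ and choose $\sigma = \sigma_\star$ in~\eqref{prob:VI-infinite}; the second inner product vanishes identically and we are left with $\EE_q[\langle \w - \w_\star, \nabla_\w\widehat\theta_l(\w_\star,\sigma_\star(\cb_d),\cb_l)\rangle] \ge 0$, which is exactly the first line. To recover the second line, fix $\w = \w_\star$, so the first inner product drops out, and we get $\EE_q[\langle \sigma(\cb_d) - \sigma_\star(\cb_d), \nabla_{\bar X}\widehat\theta_d(\w_\star,\sigma_\star(\cb_d),\cb_d)\rangle] \ge 0$ for every measurable selection $\sigma: \br^n \to \XCal$. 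The remaining point is to pass from this ``integrated'' inequality over all selections to the pointwise inequality $\langle \bar X - \sigma_\star(\cb_d), \nabla_{\bar X}\widehat\theta_d(\w_\star,\sigma_\star(\cb_d),\cb_d)\rangle \ge 0$ for all $\bar X \in \XCal$, for a.e.\ $\omega$. This is the standard ``scalarization/disintegration'' step: if the pointwise inequality failed on a set $A$ of positive $q$-measure, one could (since $\XCal$ is separable, being a closed convex subset of a Euclidean space) build a measurable selection $\sigma$ that agrees with $\sigma_\star$ off $A$ and picks a violating $\bar X$ on $A$, contradicting the integrated inequality. Once~\eqref{lemma:VI-main} is established, Lemma~\ref{lemma:VI} yields the Bayesian equilibrium property.

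The main obstacle is exactly this measurable-selection argument in the converse direction. One has to (i) confirm that the set-valued map $\omega \mapsto \{\bar X \in \XCal : \langle \bar X - \sigma_\star(\cb_d(\omega)), \nabla_{\bar X}\widehat\theta_d(\w_\star,\sigma_\star(\cb_d(\omega)),\cb_d(\omega))\rangle < -\delta\}$ is measurable (using continuity of $\nabla_{\bar X}\widehat\theta_d$ in its arguments and measurability of $\sigma_\star$ and $\cb_d$), and then (ii) invoke the Kuratowski--Ryll-Nardzewski measurable selection theorem, or equivalently the Aumann selection theorem, to extract a measurable violating selection on the putative bad set. I would also note that this is precisely where the technique of~\citet{Ui-2016-Bayesian} enters, so it suffices to cite that framework and keep the argument at the level of ``a standard measurable selection argument shows that~\eqref{prob:VI-infinite} for all $(\w,\sigma)$ is equivalent to~\eqref{lemma:VI-main} for a.e.\ $\omega$.'' The rest -- additivity of the two inequalities, integrability via Assumption~\ref{Assumption:main}, and the two test-point specializations $\sigma = \sigma_\star$ and $\w = \w_\star$ -- is routine and can be dispatched in a few lines.
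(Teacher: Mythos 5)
Your proposal is correct and follows essentially the same route as the paper: the forward direction sums the two inequalities of Lemma~\ref{lemma:VI} (with $\bar X = \sigma(\cb_d)$) and takes $\EE_q$, and the converse is the contrapositive via Lemma~\ref{lemma:VI}, constructing a strategy that agrees with $\sigma_\star$ off the bad set and deviates to a violating $\bar X$ on it --- exactly the paper's $(\w',\sigma')$ construction. The only difference is that you make the measurable-selection step explicit (Kuratowski--Ryll-Nardzewski/Aumann), which the paper's proof leaves implicit when it writes $\sigma'(\cb_d(\omega)) = \bar X$ on $E$.
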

For a game with complete information, it is common to study the existence, uniqueness and computation of a Nash equilibrium by regarding it as a solution of a VI in a finite-dimensional space. This approach dates back to~\citet{Lions-1967-Variational} and has been thoroughly studied in the optimization literature~\citep{Facchinei-2007-Finite}. While the VI approach can be formally extended to an infinite-dimensional space~\citep{Kinderlehrer-2000-Introduction}, it has not been recognized as a useful analytical tool to study games with incomplete information. The only work that we are aware of in this vein is~\citet{Ui-2016-Bayesian}, who gives a sufficient condition for the existence and uniqueness of a Bayesian equilibrium by regarding it as a solution of an infinite-dimensional VI. The focus in that work is, however, a general setting without any consideration of the  computation of an equilibrium. 

\subsection{Equilibrium existence and uniqueness}\label{subsec:equilibrium}
Based on the infinite-dimensional VI formulation (see Eq.~\eqref{prob:VI-infinite}), we prove a set of sufficient conditions for the existence and uniqueness of a Bayesian equilibrium. Our results generalize the existing work to Bayesian regression games with general convex cost functions.    

For the Bayesian regression game $G$ with its equilibrium defined by Eq.~\eqref{prob:VI-infinite}, we define a Hilbert space $\HCal$ consisting of (an equivalence class of) functions $\beta: \br^n \mapsto \br^m \times \br^{n \times m}$ with the inner product by $\langle (\w, \sigma), (\w', \sigma')\rangle_\HCal = \EE_q[\langle\w(\cb_d), \w'(\cb_d)\rangle + \langle \sigma(\cb_d), \sigma'(\cb_d)\rangle] < +\infty$. Note that each element in $\WCal \subseteq \br^m$ can be regarded as a constant function from $\br^n$ to $\br^m$ whose value is this element. We denote the set of these constant functions by $\Sigma_\WCal$ (an equivalence class of $\WCal$) and define a mapping $T: \Sigma_\WCal \times \Sigma \rightarrow \HCal$ by  
\begin{equation}\label{Eq:mapping-infinite-main}
T\begin{pmatrix} \w \\ \sigma(\cdot) \end{pmatrix} \ = \ \begin{pmatrix} \nabla_\w\widehat{\theta}_l(\w, \sigma(\cdot), \cb_l) \\ \nabla_{\bar{X}}\widehat{\theta}_d(\w, \sigma(\cdot), \cdot) \end{pmatrix} \ \in \ \HCal.
\end{equation}
Thus, the computation of a Bayesian equilibrium is equivalent to solving a VI in the space $\HCal$. This allows us to analyze the existence and uniqueness of a Bayesian equilibrium under the VI framework. For example, the existence of a Bayesian equilibrium is guaranteed by the continuity and monotonicity of $T$ as well as some additional conditions on $\WCal \times \Sigma$. 
\begin{definition}
Let $\HCal$ be a Hilbert space with the inner product $\langle \cdot, \cdot\rangle_\HCal$, we define $\SCal \subseteq \HCal$ as a closed and convex set and $T:\SCal \rightarrow \HCal$ as a mapping. Then, $T$ is \textbf{monotone} if $\langle T\beta-T\beta', \beta-\beta'\rangle_\HCal \geq 0$ for each $\beta, \beta' \in \SCal$; $T$ is \textbf{strictly monotone} if $\langle T\beta-T\beta', \beta-\beta'\rangle_\HCal > 0$ for each $\beta, \beta' \in \SCal$ with $\beta \neq \beta'$; $T$ is \textbf{$\lambda$-strongly monotone} ($\lambda > 0$) if $\langle T\beta-T\beta', \beta-\beta'\rangle_\HCal \geq \lambda\|\beta-\beta'\|_\HCal^2$ for each $\beta, \beta' \in \SCal$.
\end{definition} 
We summarize the existence and uniqueness results in the following two theorems. 
\begin{theorem}[Existence]\label{Theorem:VI-existence}
Suppose that the mapping $T$ defined by Eq.~\eqref{Eq:mapping-infinite-main} is continuous and monotone, and the action space $\WCal \times \Sigma$ is nonempty, closed and convex. If $\WCal \times \Sigma$ is compact, or there exists $(\w_0, \sigma_0) \in \WCal \times \Sigma$ such that, for all $(\w, \sigma) \in \WCal \times \Sigma$ satisfying $\|\w\|^2+\EE_q[\|\sigma(\cb_d)\|_F^2] \rightarrow +\infty$, the following statement holds true:  
\begin{equation}\label{Condition:VI-existence}
\frac{\EE_q[\langle \w-\w_0, \nabla_\w\widehat{\theta}_l(\w, \sigma(\cb_d), \cb_l)\rangle + \langle \sigma(\cb_d) - \sigma_0(\cb_d), \nabla_{\bar{X}}\widehat{\theta}_d(\w, \sigma(\cb_d), \cb_d)\rangle]}{\sqrt{\|\w\|^2+\EE_q[\|\sigma(\cb_d)\|_F^2]}} \rightarrow +\infty.
\end{equation}
Then, the VI in Eq.~\eqref{prob:VI-infinite} has at least one solution.  
\end{theorem}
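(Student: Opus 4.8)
### Proof Proposal

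The plan is to reduce the existence claim for the infinite-dimensional VI in Eq.~\eqref{prob:VI-infinite} to a classical existence theorem for variational inequalities in Hilbert space, such as the Stampacchia--Lions--Browder type result found in \citet{Kinderlehrer-2000-Introduction} or \citet{Facchinei-2007-Finite}. The first step is to set up the problem correctly in the Hilbert space $\HCal$: identify the feasible set $\SCal = \Sigma_\WCal \times \Sigma \subseteq \HCal$ and note that, by Assumption~\ref{Assumption:main}(3) together with the hypothesis of the theorem, $\SCal$ is nonempty, closed, and convex as a subset of $\HCal$ (convexity and closedness being preserved when passing from the pointwise constraints on $\WCal$ and $\XCal$ to the $L^2(q)$-type norm on $\HCal$). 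Then, by Theorem~\ref{Theorem:VI}, a Bayesian equilibrium corresponds exactly to a point $\beta_\star = (\w_\star, \sigma_\star) \in \SCal$ with $\langle T\beta_\star, \beta - \beta_\star\rangle_\HCal \geq 0$ for all $\beta \in \SCal$, so it suffices to produce a solution of this VI.

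Next I would handle the two cases separately. In the compact case, the result is immediate: a continuous monotone (in fact, just continuous) operator on a nonempty compact convex subset of a Hilbert space always admits a VI solution — this is the infinite-dimensional analogue of the Hartman--Stampacchia theorem, proved via Brouwer on finite-dimensional Galerkin approximations plus a weak-compactness limiting argument, or directly via the Ky Fan inequality. In the noncompact case, the standard device is a coercivity/truncation argument: for each $R > 0$ let $\SCal_R = \SCal \cap \{\beta : \|\beta\|_\HCal \leq R\}$, which is closed, bounded, convex; solve the VI on $\SCal_R$ by the compact case (noting bounded closed convex sets in Hilbert space are weakly compact, and monotone + continuous suffices for the weak-limit argument) to get $\beta_R$; then show that the coercivity condition~\eqref{Condition:VI-existence} — which says precisely that $\langle T\beta, \beta - \beta_0\rangle_\HCal / \|\beta\|_\HCal \to +\infty$ as $\|\beta\|_\HCal \to \infty$ — forces $\|\beta_R\|_\HCal$ to stay bounded uniformly in $R$, so that for $R$ large enough $\beta_R$ lies in the interior of the ball and is therefore a solution of the VI on all of $\SCal$. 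The translation of the hypothesis into the clean coercivity statement uses the identification $\|\beta\|_\HCal^2 = \|\w\|^2 + \EE_q[\|\sigma(\cb_d)\|_F^2]$ and $\langle T\beta, \beta-\beta_0\rangle_\HCal = \EE_q[\langle \w-\w_0, \nabla_\w\widehat{\theta}_l\rangle + \langle \sigma(\cb_d)-\sigma_0(\cb_d), \nabla_{\bar X}\widehat{\theta}_d\rangle]$, which is exactly the numerator in~\eqref{Condition:VI-existence}.

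The main obstacle I anticipate is not the coercivity bookkeeping but the measurability/integrability subtleties of working in $\HCal$: one must check that $T$ as defined in Eq.~\eqref{Eq:mapping-infinite-main} actually maps $\Sigma_\WCal \times \Sigma$ into $\HCal$ (i.e., the image has finite $\HCal$-norm), which is where Assumption~\ref{Assumption:main}(1)--(2)'s moment conditions $\|\nabla_\w\widehat{\theta}_l\|^2 < +\infty$ and $\EE_q[\|\nabla_{\bar X}\widehat{\theta}_d\|_F^2] < +\infty$ are needed, and that the best-response/pointwise-minimization structure is compatible with the equivalence-class formulation so that the VI solution yields a genuine measurable strategy $\sigma_\star$. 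I would also be careful that "continuous" in the theorem's hypothesis is the right notion for the Galerkin limiting step — strong-to-weak (demi)continuity of $T$ suffices when combined with monotonicity via Minty's trick, and I would invoke Minty's lemma to pass from the approximate solutions to the limit. Once these points are dispatched, the conclusion that Eq.~\eqref{prob:VI-infinite} has at least one solution follows by citing the appropriate theorem from \citet{Kinderlehrer-2000-Introduction}.
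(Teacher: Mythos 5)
Your proposal is correct and follows essentially the same route as the paper: both translate the problem into the Hilbert space $\HCal$ with $\SCal = \Sigma_\WCal \times \Sigma$, identify $\|\beta\|_\HCal^2 = \|\w\|^2 + \EE_q[\|\sigma(\cb_d)\|_F^2]$ and the numerator of Eq.~\eqref{Condition:VI-existence} with $\langle T\beta, \beta-\beta_0\rangle_\HCal$, and then appeal to the classical Browder--Hartman--Stampacchia existence theorem (the paper states it as a proposition citing \citet{Kinderlehrer-2000-Introduction}, while you additionally sketch its standard proof via truncation, Galerkin approximation, and Minty's lemma). The only difference is that you unpack the black-box proposition the paper invokes, which is harmless.
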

\begin{corollary}\label{corollary:VI-existence-first}
Under Assumption~\ref{Assumption:main}, if $T$ defined by Eq.~\eqref{Eq:mapping-infinite-main} is monotone and there exists $(\w_0, \sigma_0) \in \WCal \times \Sigma$ such that, for all $(\w, \sigma) \in \WCal \times \Sigma$ satisfying $\|\w\|^2+\EE_q[\|\sigma(\cb_d)\|_F^2] \rightarrow +\infty$, Eq.~\eqref{Condition:VI-existence} holds. Then, the Bayesian regression game $G$ has at least one Bayesian equilibrium. 
\end{corollary}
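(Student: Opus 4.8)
The plan is to deduce Corollary~\ref{corollary:VI-existence-first} directly from Theorem~\ref{Theorem:VI-existence} together with Theorem~\ref{Theorem:VI}, by checking the remaining hypotheses of the existence theorem under Assumption~\ref{Assumption:main}. Theorem~\ref{Theorem:VI} already tells us that a strategy profile $(\w_\star,\sigma_\star)$ is a Bayesian equilibrium if and only if it solves the infinite-dimensional VI in Eq.~\eqref{prob:VI-infinite}, which in turn is exactly the VI associated with the mapping $T$ of Eq.~\eqref{Eq:mapping-infinite-main} over the feasible set $\Sigma_\WCal \times \Sigma \subseteq \HCal$. So it suffices to verify that $T$ and $\Sigma_\WCal\times\Sigma$ meet the standing assumptions of Theorem~\ref{Theorem:VI-existence}: namely that $T$ is continuous, that the feasible set is nonempty, closed, and convex, and that the coercivity condition Eq.~\eqref{Condition:VI-existence} holds. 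Monotonicity of $T$ and the coercivity condition are hypotheses of the corollary itself, so the work is concentrated on continuity and on the set-theoretic properties.

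First I would record that, by Assumption~\ref{Assumption:main}(3), $\WCal$ and $\XCal$ are closed and convex, hence nonempty (implicitly), and that $\Sigma$ is the set of all (equivalence classes of) measurable maps $\br^n \to \XCal$; since $\XCal$ is convex, $\Sigma$ is convex, and one checks it is closed in $\HCal$ because an $L^2(q)$-limit of $\XCal$-valued functions has a subsequence converging $q$-a.e., and $\XCal$ being closed forces the limit into $\XCal$ a.e. Likewise $\Sigma_\WCal$, the constant functions valued in $\WCal$, is a closed convex subset of the relevant $L^2(q)$ space. The moment conditions in Assumption~\ref{Assumption:main}(1)--(2), $\|\nabla_\w\widehat\theta_l\|^2<\infty$ and $\EE_q[\|\nabla_{\bar X}\widehat\theta_d\|_F^2]<\infty$, guarantee that $T$ actually maps $\Sigma_\WCal\times\Sigma$ into $\HCal$ rather than outside it, so that the VI is well posed. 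Second, for continuity of $T$: since $\widehat\theta_l(\cdot,\bar X,\cb_l)$ and $\widehat\theta_d(\w,\cdot,\cb_d)$ are continuously differentiable (Assumption~\ref{Assumption:main}(1)--(2)), the integrands $\nabla_\w\widehat\theta_l$ and $\nabla_{\bar X}\widehat\theta_d$ are continuous in $(\w,\bar X)$ for a.e.\ $\cb_d$; a sequence $(\w_k,\sigma_k)\to(\w,\sigma)$ in $\HCal$ has a subsequence with $\sigma_k\to\sigma$ $q$-a.e., along which the integrands converge pointwise, and one then upgrades pointwise to $L^2(q)$ convergence by a dominated-convergence / uniform-integrability argument and the usual subsequence principle (every subsequence has a further subsequence converging to the same limit, hence the whole sequence converges).

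With these facts in hand, the conclusion is immediate: all hypotheses of Theorem~\ref{Theorem:VI-existence} are satisfied (compactness is not needed precisely because the coercivity condition Eq.~\eqref{Condition:VI-existence} is assumed), so the VI in Eq.~\eqref{prob:VI-infinite} has at least one solution $(\w_\star,\sigma_\star)\in\WCal\times\Sigma$; by Theorem~\ref{Theorem:VI} this solution is a Bayesian equilibrium for $G$, which is the claim. The main obstacle I anticipate is the continuity of $T$ as a map between infinite-dimensional spaces: $L^2(q)$-convergence of the arguments does not by itself give pointwise convergence, so one must pass to subsequences converging $q$-a.e.\ and then control the tails of the integrals — this is where the moment/integrability hypotheses of Assumption~\ref{Assumption:main} do the real work, and in a fully rigorous write-up one would likely want either a local Lipschitz/growth bound on the gradients or an explicit dominating function. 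Everything else — convexity of $\Sigma$, closedness in $\HCal$, and the reduction via Theorem~\ref{Theorem:VI} — is routine.
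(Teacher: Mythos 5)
Your proposal is correct and follows essentially the same route as the paper's own proof: reduce to the VI in Eq.~\eqref{prob:VI-infinite} via Theorem~\ref{Theorem:VI}, then verify the hypotheses of Theorem~\ref{Theorem:VI-existence}, with monotonicity and the coercivity condition~\eqref{Condition:VI-existence} supplied by the corollary's hypotheses and continuity/closedness coming from Assumption~\ref{Assumption:main}. In fact you are more careful than the paper, which simply asserts continuity of $T$ and the set-theoretic properties of $\Sigma_\WCal\times\Sigma$ without the dominated-convergence and closedness arguments you sketch.
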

\begin{corollary}\label{corollary:VI-existence-second}
Under Assumption~\ref{Assumption:main}, we assume that $T$ defined by Eq.~\eqref{Eq:mapping-infinite-main} is monotone and the action space $\WCal \times \Sigma$ is compact. Then, the Bayesian regression game $G$ has at least one Bayesian equilibrium. 
\end{corollary}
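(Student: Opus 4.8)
The plan is to obtain Corollary~\ref{corollary:VI-existence-second} as an immediate consequence of Theorem~\ref{Theorem:VI} together with the abstract existence result in Theorem~\ref{Theorem:VI-existence}. By Theorem~\ref{Theorem:VI}, the strategy profile $(\w_\star,\sigma_\star) \in \WCal \times \Sigma$ is a Bayesian equilibrium of $G$ if and only if it solves the infinite-dimensional VI~\eqref{prob:VI-infinite}; in the notation of Eq.~\eqref{Eq:mapping-infinite-main}, this is precisely the VI associated with the operator $T$ over the feasible set $\Sigma_\WCal \times \Sigma \subseteq \HCal$ (recall $\WCal$ is identified with the constant functions $\Sigma_\WCal$). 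Hence it suffices to verify the hypotheses of Theorem~\ref{Theorem:VI-existence} and read off a solution.

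First I would check the structural hypotheses. Monotonicity of $T$ and compactness of $\WCal \times \Sigma$ are assumed in the statement, so the coercivity condition~\eqref{Condition:VI-existence} need not be established. Nonemptiness is immediate since $\WCal$ and $\XCal$ are nonempty closed convex sets by Assumption~\ref{Assumption:main}(3). For closedness and convexity in $\HCal$: the constant functions $\Sigma_\WCal$ inherit closedness and convexity from $\WCal$, while $\Sigma$, viewed as the set of ($q$-equivalence classes of) measurable maps $\cb_d \mapsto \bar X(\cb_d) \in \XCal$ with finite second moment under $q$, is closed and convex in $L^2(q; \br^{n\times m})$ because $\XCal$ is closed and convex; the product $\Sigma_\WCal \times \Sigma$ is then closed and convex as well.

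The remaining — and genuinely delicate — step is to show that $T$ is continuous as a map $\Sigma_\WCal \times \Sigma \to \HCal$. Pointwise in $\cb_d$, the maps $(\w,\bar X) \mapsto \nabla_\w \widehat{\theta}_l(\w,\bar X,\cb_l)$ and $(\w,\bar X) \mapsto \nabla_{\bar X}\widehat{\theta}_d(\w,\bar X,\cb_d)$ are continuous by the continuous-differentiability part of Assumption~\ref{Assumption:main}(1)--(2). To lift this to $\HCal$-norm continuity I would take $(\w_k,\sigma_k) \to (\w,\sigma)$ in $\HCal$, pass to a subsequence along which $\sigma_k(\cb_d) \to \sigma(\cb_d)$ for $q$-a.e.\ $\cb_d$, combine the pointwise continuity of the gradient maps with a dominated-convergence argument — the domination coming from the moment bounds in Assumption~\ref{Assumption:main} together with the uniform control afforded by compactness of $\WCal \times \Sigma$ — to conclude $\|T(\w_k,\sigma_k) - T(\w,\sigma)\|_\HCal \to 0$ along the subsequence, and then upgrade to convergence of the full sequence by the usual ``every subsequence has a further convergent subsequence with the same limit'' argument. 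With continuity, monotonicity, nonemptiness, closedness, convexity and compactness of $\Sigma_\WCal \times \Sigma$ all in hand, Theorem~\ref{Theorem:VI-existence} yields a solution $(\w_\star,\sigma_\star)$ of~\eqref{prob:VI-infinite}, which by Theorem~\ref{Theorem:VI} is a Bayesian equilibrium of $G$. I expect the main obstacle to be exactly this continuity claim: one must carefully pass from pointwise continuity of the gradients to convergence in the $L^2(q)$-type norm of $\HCal$ and verify that $\Sigma$ is a bona fide closed subset of $L^2(q; \br^{n \times m})$, which is where the moment conditions and the compactness hypothesis are actually used.
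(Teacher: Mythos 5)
Your proposal is correct and follows essentially the same route as the paper: reduce the equilibrium to the VI in Eq.~\eqref{prob:VI-infinite} via Theorem~\ref{Theorem:VI} and then invoke Theorem~\ref{Theorem:VI-existence} under the compactness branch, with monotonicity assumed and continuity of $T$ supplied by Assumption~\ref{Assumption:main}. The only difference is that you spell out the $\HCal$-continuity of $T$ (dominated convergence plus closedness of $\Sigma$ in $L^2(q)$), which the paper simply asserts, so your write-up is if anything more careful than the paper's.
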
 
\begin{theorem}[Existence and Uniqueness]\label{Theorem:existence-uniqueness}
Suppose that $T$ defined by Eq.~\eqref{Eq:mapping-infinite-main} is continuous and strictly monotone, and $\WCal \times \Sigma$ is nonempty, closed and convex. If $\WCal \times \Sigma$ is compact, or there exists $(\w_0, \sigma_0) \in \WCal \times \Sigma$ such that, for all $(\w, \sigma) \in \WCal \times \Sigma$ satisfying $\|\w\|^2+\EE_q[\|\sigma(\cb_d)\|_F^2] \rightarrow +\infty$, Eq.~\eqref{Condition:VI-existence} holds true. The VI in Eq.~\eqref{prob:VI-infinite} has a unique solution.
\end{theorem}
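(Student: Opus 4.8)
The plan is to split the statement into existence and uniqueness, obtaining existence for free from Theorem~\ref{Theorem:VI-existence} and then adding a short uniqueness argument on top. For existence, observe that strict monotonicity of $T$ trivially implies monotonicity, so under the stated hypotheses (continuity of $T$, nonemptiness/closedness/convexity of $\WCal \times \Sigma$, and either compactness of $\WCal \times \Sigma$ or the coercivity condition~\eqref{Condition:VI-existence}) all assumptions of Theorem~\ref{Theorem:VI-existence} are satisfied. Hence the VI~\eqref{prob:VI-infinite} has at least one solution, and it only remains to show it is unique.

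For uniqueness I would run the standard monotone-VI argument. Suppose $\beta_1 = (\w_1, \sigma_1)$ and $\beta_2 = (\w_2, \sigma_2)$ are both solutions of~\eqref{prob:VI-infinite}, viewed as elements of $\Sigma_\WCal \times \Sigma \subseteq \HCal$. Testing the VI solved by $\beta_1$ against the feasible point $\beta_2$ gives $\langle T\beta_1, \beta_2 - \beta_1\rangle_\HCal \geq 0$, and testing the VI solved by $\beta_2$ against $\beta_1$ gives $\langle T\beta_2, \beta_1 - \beta_2\rangle_\HCal \geq 0$. Adding these two inequalities yields $\langle T\beta_1 - T\beta_2, \beta_1 - \beta_2\rangle_\HCal \leq 0$. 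If $\beta_1 \neq \beta_2$, strict monotonicity of $T$ forces $\langle T\beta_1 - T\beta_2, \beta_1 - \beta_2\rangle_\HCal > 0$, a contradiction; therefore $\beta_1 = \beta_2$ in $\HCal$.

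The only bookkeeping point to verify carefully is that the bracketed expression in~\eqref{prob:VI-infinite} is exactly $\langle T\beta_\star, \beta - \beta_\star\rangle_\HCal$ under the identification of $\w \in \WCal$ with its constant function in $\Sigma_\WCal$ and the definition~\eqref{Eq:mapping-infinite-main} of $T$; once this is in place, the whole proof reduces to the abstract facts about monotone VIs in Hilbert space. I do not anticipate a real obstacle here: existence is inherited wholesale from Theorem~\ref{Theorem:VI-existence}, and the uniqueness step is the textbook two-line computation. The only subtlety worth a sentence is the measure-theoretic convention — "$\beta_1 \neq \beta_2$'' must mean distinct as elements of $\HCal$, i.e.\ the pair $(\w,\sigma)$ differs on a set of positive $q$-measure — which is precisely the equivalence-class definition already adopted for $\HCal$, so no extra work is needed.
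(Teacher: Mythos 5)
Your proposal is correct and follows essentially the same route as the paper: existence is inherited from Theorem~\ref{Theorem:VI-existence} since strict monotonicity implies monotonicity, and uniqueness is the standard two-solution argument of testing each solution's VI against the other, summing, and contradicting strict monotonicity. Your sign bookkeeping (obtaining $\langle T\beta_1 - T\beta_2, \beta_1-\beta_2\rangle_\HCal \leq 0$ and contradicting the strict inequality) is in fact the cleaner rendering of the step the paper intends.
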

\begin{corollary}\label{corollary:VI-existence-third}
Under Assumption~\ref{Assumption:main}, if $T$ defined by Eq.~\eqref{Eq:mapping-infinite-main} is $\lambda$-strongly monotone. Then, the Bayesian regression game $G$ has a unique Bayesian equilibrium. 
\end{corollary}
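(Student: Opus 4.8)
The plan is to reduce the claim directly to Theorem~\ref{Theorem:existence-uniqueness} by checking its two hypotheses. First, note that $\lambda$-strong monotonicity of $T$ trivially implies strict monotonicity of $T$: for $\beta \neq \beta'$ in $\Sigma_\WCal \times \Sigma$ we have $\langle T\beta - T\beta', \beta - \beta'\rangle_\HCal \geq \lambda\|\beta - \beta'\|_\HCal^2 > 0$. So the only substantive work is to verify continuity of $T$ (which is part of the hypothesis of Theorem~\ref{Theorem:existence-uniqueness} but not assumed in the corollary), and then to verify that the coercivity condition in Eq.~\eqref{Condition:VI-existence} is implied by $\lambda$-strong monotonicity, so that the ``unbounded'' branch of the theorem's hypothesis applies even when $\WCal \times \Sigma$ is not compact.

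For continuity: I would observe that $\lambda$-strong monotonicity does not by itself give continuity, so one must invoke Assumption~\ref{Assumption:main}. Under that assumption $\nabla_\w\widehat{\theta}_l(\cdot, \cdot, \cb_l)$ and $\nabla_{\bar{X}}\widehat{\theta}_d(\cdot, \cdot, \cb_d)$ are continuous (being gradients of continuously differentiable convex functions), and the $L^2(q)$ integrability conditions in Assumption~\ref{Assumption:main}(1)--(2) ensure that $T$ indeed maps into $\HCal$ and that convergence in $\HCal$-norm of the arguments passes through the expectation; a dominated-convergence / Vitali argument then yields continuity of $T$ as a map on $\Sigma_\WCal \times \Sigma$. (If one prefers to stay purely at the VI level, monotone operators that are everywhere-defined on a closed convex subset of a Hilbert space are automatically demicontinuous, and for the existence/uniqueness theory hemicontinuity along segments is all that is needed; I would cite the relevant fact from Kinderlehrer--Stampacchia.)

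For coercivity: fix the Bayesian equilibrium candidate role to $(\w_0,\sigma_0)$ — more precisely, choose any fixed point $\beta_0 = (\w_0, \sigma_0) \in \WCal \times \Sigma$ (nonempty by Assumption~\ref{Assumption:main}). Writing $\beta = (\w, \sigma)$ and using $\lambda$-strong monotonicity,
\begin{equation*}
\langle T\beta, \beta - \beta_0\rangle_\HCal = \langle T\beta - T\beta_0, \beta - \beta_0\rangle_\HCal + \langle T\beta_0, \beta - \beta_0\rangle_\HCal \geq \lambda\|\beta - \beta_0\|_\HCal^2 - \|T\beta_0\|_\HCal\,\|\beta - \beta_0\|_\HCal.
\end{equation*}
The left-hand side is exactly the numerator in Eq.~\eqref{Condition:VI-existence} (after expanding $\|\beta\|_\HCal^2 = \|\w\|^2 + \EE_q[\|\sigma(\cb_d)\|_F^2]$ and using the definition of $T$ in Eq.~\eqref{Eq:mapping-infinite-main}), and $\|\beta - \beta_0\|_\HCal \to +\infty$ as $\|\w\|^2 + \EE_q[\|\sigma(\cb_d)\|_F^2] \to +\infty$ by the triangle inequality. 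Dividing through by $\sqrt{\|\w\|^2 + \EE_q[\|\sigma(\cb_d)\|_F^2]}$, which is comparable to $\|\beta - \beta_0\|_\HCal$ up to an additive constant, the bound behaves like $\lambda\|\beta - \beta_0\|_\HCal - \|T\beta_0\|_\HCal \to +\infty$. Hence Eq.~\eqref{Condition:VI-existence} holds, Theorem~\ref{Theorem:existence-uniqueness} applies, and the VI in Eq.~\eqref{prob:VI-infinite} has a unique solution; by Theorem~\ref{Theorem:VI} this is the unique Bayesian equilibrium of $G$.

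The main obstacle, such as it is, is the continuity step: strong monotonicity alone is not enough, so one genuinely needs to use Assumption~\ref{Assumption:main} together with an interchange-of-limit-and-expectation argument to establish continuity of the infinite-dimensional operator $T$ in the $\HCal$-norm. The coercivity estimate and the strict-monotonicity implication are both routine once continuity is in hand.
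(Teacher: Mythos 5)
Your proposal is correct and follows essentially the same route as the paper: reduce to Theorem~\ref{Theorem:existence-uniqueness} via the observation that $\lambda$-strong monotonicity implies strict monotonicity, and verify the coercivity condition in Eq.~\eqref{Condition:VI-existence} by splitting $\langle T\beta,\beta-\beta_0\rangle_\HCal$ into the strongly monotone term and the fixed term $\langle T\beta_0,\beta-\beta_0\rangle_\HCal$, which is bounded after dividing by the norm. Your explicit treatment of the continuity hypothesis of $T$ (via Assumption~\ref{Assumption:main}) is a minor point the paper leaves implicit, but it does not change the argument.
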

\begin{corollary}\label{corollary:VI-existence-fourth}
Under Assumption~\ref{Assumption:main}, if the mapping $T$ defined by Eq.~\eqref{Eq:mapping-infinite-main} is strictly monotone and the action space $\WCal \times \Sigma$ is compact. Then, the Bayesian regression game $G$ has a unique Bayesian equilibrium. 
\end{corollary}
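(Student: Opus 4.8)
The plan is to combine the VI reformulation of Theorem~\ref{Theorem:VI} with the abstract existence-and-uniqueness result of Theorem~\ref{Theorem:existence-uniqueness}, after checking that the hypotheses of the latter hold in the present setting. By Theorem~\ref{Theorem:VI}, $(\w_\star,\sigma_\star)\in\WCal\times\Sigma$ is a Bayesian equilibrium if and only if it solves the infinite-dimensional VI in Eq.~\eqref{prob:VI-infinite}, i.e.\ $\langle T\beta_\star,\beta-\beta_\star\rangle_\HCal\geq 0$ for all $\beta\in\Sigma_\WCal\times\Sigma$, with $T$ as in Eq.~\eqref{Eq:mapping-infinite-main}. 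Hence it suffices to prove that this VI has exactly one solution.

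First I would verify the structural hypotheses of Theorem~\ref{Theorem:existence-uniqueness}. Under Assumption~\ref{Assumption:main}(3), $\WCal$ and $\XCal$ are closed and convex, so $\Sigma=\{\sigma:\br^n\to\XCal\}$ and $\Sigma_\WCal$ are convex, the product $\WCal\times\Sigma$ is convex and nonempty (assuming $\WCal,\XCal\neq\emptyset$), and it is compact by hypothesis. Next, continuity of $T$ in the $\HCal$-norm follows from Assumption~\ref{Assumption:main}(1)--(2): the maps $\nabla_\w\widehat{\theta}_l$ and $\nabla_{\bar X}\widehat{\theta}_d$ are continuous in $(\w,\bar X)$ and have finite second moments under $q$, so the dominated-convergence / uniform-integrability argument already used to show $T$ is well defined (and used in the proof of Theorem~\ref{Theorem:VI-existence}) upgrades pointwise continuity of the integrands to norm-continuity of $T:\Sigma_\WCal\times\Sigma\to\HCal$. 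Together with the assumed strict monotonicity of $T$, all hypotheses of Theorem~\ref{Theorem:existence-uniqueness} are met (via its compactness branch), so Eq.~\eqref{prob:VI-infinite} has a unique solution, and therefore $G$ has a unique Bayesian equilibrium.

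A slightly cleaner alternative, which avoids re-invoking the coercivity machinery, is to split the claim into existence and uniqueness. For existence, note that a strictly monotone $T$ is in particular monotone, so Corollary~\ref{corollary:VI-existence-second} already yields at least one Bayesian equilibrium. For uniqueness, suppose $\beta_1\neq\beta_2$ both solve Eq.~\eqref{prob:VI-infinite}; testing the VI at $\beta_1$ with $\beta=\beta_2$ and at $\beta_2$ with $\beta=\beta_1$ and adding the two inequalities gives $\langle T\beta_1-T\beta_2,\beta_1-\beta_2\rangle_\HCal\leq 0$, which contradicts strict monotonicity. Hence the equilibrium is unique.

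The only genuinely delicate point is the norm-continuity of $T$ as a map between the infinite-dimensional Hilbert spaces: pointwise continuity of the integrands must be promoted to continuity of $T$ in $\|\cdot\|_\HCal$, which requires an integrability/dominated-convergence argument rather than a bare continuity statement. Everything else is bookkeeping built on the already-established reformulation and the abstract VI theorem. I expect the continuity verification to be essentially the same one needed for Theorem~\ref{Theorem:VI-existence}, so it can be cited rather than redone.
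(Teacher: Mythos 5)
Your proposal is correct and follows essentially the same route as the paper: reduce the equilibrium to the VI in Eq.~\eqref{prob:VI-infinite} via Theorem~\ref{Theorem:VI}, check that $\WCal\times\Sigma$ is nonempty, closed, convex and compact, that $T$ is continuous (from Assumption~\ref{Assumption:main}) and strictly monotone, and then invoke the compactness branch of Theorem~\ref{Theorem:existence-uniqueness}. Your alternative splitting (existence from Corollary~\ref{corollary:VI-existence-second}, uniqueness by testing each solution against the other) merely re-derives the uniqueness step already inside the paper's proof of Theorem~\ref{Theorem:existence-uniqueness}, so it is not a genuinely different argument.
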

The monotonicity condition in Corollary~\ref{corollary:VI-existence-fourth} generalizes~\cite[Theorem~2]{Grosshans-2013-Bayesian}, which is a special case with quadratic loss functions. Our VI approach also supplies an intuitive yet rigorous justification for Eq.~(\ref{Condition:VI-existence}), demonstrating that it arises from the strict monotonicity of the mapping $T$.

\subsection{Two special settings}
For practical purposes, we consider two special settings where the computation of a Bayesian equilibrium reduces to solving a high-dimensional VI or solving a nonconvex stochastic optimization problem, both in Euclidean space. 

\paragraph{Case I: Finite Bayesian prior.} Let $K > 0$ be an integer, assume that the Bayesian prior $q$ is a distribution with support $\{\sv_1, \ldots, \sv_K\}$; i.e., $q(\cb_d = \sv_k) = p_k > 0$ for all $k \in [K]$, $\sum_{i=1}^K p_k = 1$. Then, the VI in Eq.~\eqref{prob:VI-infinite} becomes 
\begin{equation}\label{prob:VI-infinite-finite}
\sum_{k=1}^K p_k[\langle \w-\w_\star, \nabla_\w\widehat{\theta}_l(\w_\star, \sigma_\star(\sv_k), \cb_l)\rangle + \langle \sigma(\sv_k) - \sigma_\star(\sv_k), \nabla_{\bar{X}}\widehat{\theta}_d(\w_\star, \sigma_\star(\sv_k), \sv_k)\rangle] \ \geq \ 0,
\end{equation}
for all $(\w, \sigma) \in \WCal \times \Sigma$. By definition, $\sigma, \sigma_\star \in \Sigma$ are both mappings from $\br^n$ to $\XCal$ in an infinite-dimensional space. When the Bayesian prior is finite, Eq.~\eqref{prob:VI-finite} implies that $\sigma$ can be fully represented by $(\sigma(\sv_1), \sigma(\sv_2), \ldots, \sigma(\sv_K))$ where the range $\{\sv_1, \sv_2, \ldots, \sv_K\}$ is finite and known. For simplicity, define $\sigma^k = \sigma(\sv_k)$ and $\sigma_\star^k = \sigma_\star(\sv_k)$ for all $k \in [K]$. Then, the VI in Eq.~\eqref{prob:VI-infinite-finite} can be reformulated as follows:
\begin{equation}\label{prob:VI-finite}
\sum_{k=1}^K p_k[\langle \w-\w_\star, \nabla_\w\widehat{\theta}_l(\w_\star, \sigma_\star^k, \cb_l)\rangle + \langle \sigma^k - \sigma_\star^k, \nabla_{\bar{X}}\widehat{\theta}_d(\w_\star, \sigma_\star^k, \sv_k)\rangle] \ \geq \ 0,
\end{equation}
for all $\w \in \WCal$ and $\sigma^k \in \XCal$ for all $k \in [K]$. Note that the VI in Eq.~\eqref{prob:VI-finite} is a high-dimensional VI in Euclidean space. Indeed, $\w \in \br^m$ and $\sigma^k \in \br^{n \times m}$ for all $k \in [K]$, implying that the total number of unknown variables is $m+mnK$.

\paragraph{Case II: Quadratic adversarial loss.} Consider $f_d(\w, \x, z)=(\x^\top\w-z)^2$, $\Omega_d(X, \bar{X})=\|X-\bar{X}\|_F^2$, where $\x \in \br^{m}$, and $X, \bar X \in \XCal=\br^{n \times m}$, we have $\sigma^\star[\w](\cb_d) = \argmin_{\bar{X} \in \br^{n \times m}} \sum_{i=1}^n c_{d,i}(\bar{\x}_i^\top\w-z_i)^2 + \|X-\bar{X}\|_F^2$. By~\cite[Lemma~1]{Grosshans-2013-Bayesian}, $\sigma^\star[\w](\cb_d) \ = \ X - (\diag(\cb_d)^{-1} + \|\w\|^2 I_n)^{-1}(X\w-\z)\w^\top$. Equivalently, we have 
\begin{equation}
[\sigma^\star[\w](\cb_d)]_i = \x_i - \frac{c_{d,i}(\x_i^\top\w - z_i)}{1+\|\w\|^2c_{d,i}}\w \in \br^m.
\end{equation}
Putting these pieces together with the best response of the learner yields the following stochastic optimization problem:
\begin{equation}\label{prob:opt-stochastic}
\min_{\w \in \WCal} \ \EE_q\left[\sum_{i=1}^n c_{l,i}f_l\left(\w, \x_i - \frac{c_{d,i}(\x_i^\top\w - z_i)}{1+\|\w\|^2c_{d,i}}\w, y_i\right)\right] + \Omega_l(\w).
\end{equation}
The computation of a Bayesian equilibrium thus reduces to the solution of a nonconvex stochastic optimization problem. Standard gradient descent approaches can not be applied for solving the optimization problem in Eq.~\eqref{prob:opt-stochastic} since the integration with respect to a Bayesian prior $q$ does not have a closed-form expression in general. Nonetheless, the Bayesian prior $q$ is known and accessible through drawing samples, a stochastic-gradient-based algorithm can be applied and particular examples include stochastic gradient descent (SGD)~\citep{Robbins-1951-Stochastic, Bottou-1998-Online} or its adaptive variants AdaGrad and ADAM~\citep{Duchi-2011-Adaptive, Kingma-2015-Adam}. 

\section{Projected Reflected Gradient with Inertial Extrapolation}\label{sec:alg_inertial}
We present the \textit{projected reflected gradient with inertial extrapolation} (PRG-IE) algorithm for solving Eq.~\eqref{prob:VI-infinite}. We make the following assumption throughout this section and we discuss the intuitions of them afterwards.
\begin{algorithm}[!t]
\caption{Projected Reflected Gradient with Inertial Extrapolation (PRG-IE)}\label{Algorithm:PRG-IE}
\begin{algorithmic}[1]
\STATE \textbf{Input:} smoothness parameter $L > 0$; Bayesian prior $q$; weight $\cb_l \in \br^n$.  
\STATE \textbf{Initialize:} $\widetilde{\w}_0, \widetilde{\w}_1, \w_1 \in \WCal$, $\widetilde{\sigma}_0, \widetilde{\sigma}_1, \sigma_1 \in \Sigma$ and $0 < \gamma < \min\{1, \frac{1}{100L}\}$.  
\FOR{$t=1,2,\ldots,T$}
\STATE Compute $\delta_t \leftarrow 1/t$. 
\STATE Compute $(\widetilde{\w}_{t+1}, \widetilde{\sigma}_{t+1})$ by
\begin{align*}
\widetilde{\w}_{t+1} & \leftarrow P_\WCal(\w_t - \gamma\nabla_\w\widehat{\theta}_l(2\widetilde{\w}_t-\widetilde{\w}_{t-1}, 2\widetilde{\sigma}_t-\widetilde{\sigma}_{t-1}, \cb_l)), \\
\widetilde{\sigma}_{t+1} & \leftarrow P_\Sigma(\sigma_t - \gamma\nabla_{\bar{X}}\widehat{\theta}_d(2\widetilde{\w}_t-\widetilde{\w}_{t-1}, 2\widetilde{\sigma}_t-\widetilde{\sigma}_{t-1}, \cb_d)). 
\end{align*} 
\STATE Compute $(\w_{t+1}, \sigma_{t+1})$ by
\begin{equation*}
\begin{pmatrix} \w_{t+1} \\ \sigma_{t+1} \end{pmatrix} \ \leftarrow \ \frac{\delta_t}{2}\begin{pmatrix} \w_t \\ \sigma_t \end{pmatrix} + \left(1-\delta_t\right)\begin{pmatrix} \widetilde{\w}_{t+1} \\ \widetilde{\sigma}_{t+1} \end{pmatrix}. 
\end{equation*}
\ENDFOR
\end{algorithmic}
\end{algorithm}
\begin{assumption}\label{Assumption:PRG-IE}
The Bayesian regression game $G$ satisfies the following: (i) $\WCal$ and $\XCal$ are both compact with $\zero_m \in \WCal$ and $\zero_{n \times m} \in \XCal$; (ii) Given that $\cb_l \in \br^n$ is fixed, we have
\begin{eqnarray*}
& & \lefteqn{\EE_q[\langle\nabla_\w\widehat{\theta}_l(\w, \sigma(\cb_d), \cb_l) - \nabla_\w\widehat{\theta}_l(\w', \sigma'(\cb_d), \cb_l), \w - \w'\rangle} \\ 
& & + \langle \nabla_{\bar{X}}\widehat{\theta}_d(\w, \sigma(\cb_d), \cb_d) - \nabla_{\bar{X}}\widehat{\theta}_d(\w', \sigma'(\cb_d), \cb_d), \sigma(\cb_d) - \sigma'(\cb_d)\rangle] \\ 
& & > 0, \qquad \qquad \forall (\w, \sigma) \neq (\w', \sigma'),  
\end{eqnarray*}
(iii) There exists a constant $L>0$ such that $\nabla_\w\widehat{\theta}_l(\cdot, \cdot, \cb_l): \WCal \times \XCal \rightarrow \br^m$ is $L$-Lipschitz for each $\cb_l \in \br^n$ and $\nabla_{\bar{X}}\widehat{\theta}_d(\cdot, \cdot, \cb_d): \WCal \times \XCal \rightarrow \br^{n \times m}$ is $L$-Lipschitz for each $\cb_d \in \br^n$, i.e. for each $(\w, \bar{X}), (\w', \bar{X}') \in \WCal \times \XCal$, we have 
\begin{eqnarray*}
\|\nabla_\w\widehat{\theta}_l(\w, \bar{X}, \cb_l) - \nabla_\w\widehat{\theta}_l(\w', \bar{X}', \cb_l)\| & \leq & L(\|\w-\w'\| + \|\bar{X}-\bar{X}'\|_F), \\
\|\nabla_{\bar{X}}\widehat{\theta}_d(\w, \bar{X}, \cb_d) - \nabla_{\bar{X}}\widehat{\theta}_d(\w', \bar{X}', \cb_d)\|_F & \leq & L(\|\w-\w'\| + \|\bar{X}-\bar{X}'\|_F). 
\end{eqnarray*}
\end{assumption} 
Assumption~\ref{Assumption:PRG-IE} is standard in optimization and game theory. The second condition can be interpreted as the strict monotonicity of $T$ defined by Eq.~\eqref{Eq:mapping-infinite-main} in terms of the inner product $\langle \cdot, \cdot\rangle_\HCal: \HCal \times \HCal \rightarrow \br$. The third condition imposes a weak Lipschitz condition on $\nabla_\w\widehat{\theta}_l(\cdot, \cdot, \cb_l)$ and $\nabla_{\bar{X}}\widehat{\theta}_d(\cdot, \cdot, \cb_d)$. The third condition is necessary for the existence of at least one Bayesian equilibrium and the feasibility of the sequence $\{(\w_t, \sigma_t)\}_{t \geq 1}$ generated by Algorithm~\ref{Algorithm:PRG-IE}. Note that $\zero_m \in \WCal$ and $\zero_{n \times m} \in \XCal$ are not restrictive since $\WCal$ and $\XCal$ are commonly taken to be the ball of a norm.
  
This approach combines Malitsky's projected reflected gradient algorithm~\citep{Malitsky-2015-Projected} with Halpern-type inertial extrapolation~\citep{Halpern-1967-Fixed}. Such an integration has the following advantages: (i) it only performs one projection at each iteration; (ii) it achieves the strong convergence to one Bayesian equilibrium thanks to the extrapolation step. At each iteration, the projections $P_\WCal(\cdot)$ and $P_\Sigma(\cdot)$ are required and these operations are based on the inner product $\langle \cdot, \cdot\rangle_\HCal: \HCal \times \HCal \rightarrow \br$, which necessities the expectation with respect to a Bayesian prior $q$. In applications, variational inference or Markov chain Monte Carlo can be used to approximate this expectation.
\begin{theorem}\label{Theorem:PRGIE}
Under Assumption~\ref{Assumption:main} and~\ref{Assumption:PRG-IE}, the sequence $\{(\w_t, \sigma_t)\}_{t \geq 0} \in \WCal \times \Sigma$ generated by Algorithm~\ref{Algorithm:PRG-IE} satisfies  $\|\w_t-\w_\star\|^2+\EE_q[\|\sigma_t(\cb_d)-\sigma_\star(\cb_d)\|_F^2] \rightarrow 0$, where the point $(\w_\star, \sigma_\star) \in \WCal \times \Sigma$ is a unique Bayesian equilibrium.
\end{theorem}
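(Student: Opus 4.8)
The plan is to analyze Algorithm~\ref{Algorithm:PRG-IE} as an instance of a Halpern-regularized projected reflected gradient iteration for the VI~\eqref{prob:VI-infinite} in the Hilbert space $\HCal$, and to invoke the strong-convergence machinery developed for such schemes. I would proceed in four stages. \emph{Stage 1 (reformulation in $\HCal$).} Using Theorem~\ref{Theorem:VI} and the mapping $T$ from Eq.~\eqref{Eq:mapping-infinite-main}, rephrase the update rule purely in terms of the pair $\beta_t = (\w_t,\sigma_t) \in \SCal := \Sigma_\WCal \times \Sigma$: the step computing $(\widetilde{\w}_{t+1},\widetilde{\sigma}_{t+1})$ is exactly $\widetilde\beta_{t+1} = P_\SCal(\beta_t - \gamma T(2\widetilde\beta_t - \widetilde\beta_{t-1}))$, where $P_\SCal$ is the metric projection with respect to $\langle\cdot,\cdot\rangle_\HCal$ (which decouples as $P_\WCal \times P_\Sigma$ because $\SCal$ is a product), and the step computing $\beta_{t+1}$ is the inertial average $\beta_{t+1} = \tfrac{\delta_t}{2}\beta_t + (1-\delta_t)\widetilde\beta_{t+1}$. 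Here I record that Assumption~\ref{Assumption:PRG-IE}(iii) gives $T$ Lipschitz on $\SCal$ with the same constant $L$ (the Hilbert norm of $T\beta$ is controlled by the Euclidean Lipschitz bounds combined via Jensen/Cauchy--Schwarz under $\EE_q$), that Assumption~\ref{Assumption:PRG-IE}(ii) gives $T$ strictly monotone on $\SCal$, and that Assumption~\ref{Assumption:PRG-IE}(i) makes $\SCal$ bounded, closed and convex, hence weakly compact. By Corollary~\ref{corollary:VI-existence-fourth} the VI then has a unique solution $\beta_\star = (\w_\star,\sigma_\star)$, so the target is $\|\beta_t - \beta_\star\|_\HCal \to 0$, which is precisely the claimed convergence in the stated norm.

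\emph{Stage 2 (a one-step energy inequality for the reflected-gradient subsequence).} Following Malitsky~\citep{Malitsky-2015-Projected}, I would establish the Fejér-type recursion for $\widetilde\beta_t$: using the projection inequality $\langle \beta_t - \gamma T(2\widetilde\beta_t-\widetilde\beta_{t-1}) - \widetilde\beta_{t+1}, \beta - \widetilde\beta_{t+1}\rangle_\HCal \le 0$ for $\beta \in \SCal$, taking $\beta = \beta_\star$, adding the VI inequality $\langle T\beta_\star, \widetilde\beta_{t+1}-\beta_\star\rangle_\HCal \ge 0$, and inserting/cancelling the reflected term $2\widetilde\beta_t - \widetilde\beta_{t-1}$, one obtains an inequality of the shape
\begin{equation*}
\|\widetilde\beta_{t+1} - \beta_\star\|_\HCal^2 + \|\widetilde\beta_{t+1} - \widetilde\beta_t\|_\HCal^2 \ \le \ \|\widetilde\beta_t - \beta_\star\|_\HCal^2 + \|\widetilde\beta_t - \widetilde\beta_{t-1}\|_\HCal^2 - \gamma\,c\,\langle T\widetilde\beta_t - T\beta_\star, \widetilde\beta_t - \beta_\star\rangle_\HCal + (\text{residual}),
\end{equation*}
where the residual terms are quadratic in the step lengths and are absorbed using $\gamma < 1/(100L)$ via the Lipschitz bound on $T$; the constant $100$ is deliberately generous precisely so these absorptions go through with room to spare. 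This shows the Lyapunov quantity $\phi_t := \|\widetilde\beta_t - \beta_\star\|_\HCal^2 + \|\widetilde\beta_t - \widetilde\beta_{t-1}\|_\HCal^2$ is non-increasing, hence bounded, hence $\{\widetilde\beta_t\}$ is bounded, $\|\widetilde\beta_{t+1}-\widetilde\beta_t\|_\HCal \to 0$, and $\sum_t \langle T\widetilde\beta_t - T\beta_\star, \widetilde\beta_t - \beta_\star\rangle_\HCal < \infty$.

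\emph{Stage 3 (weak-to-strong via the Halpern average and strict monotonicity).} From $\phi_t$ bounded, $\|\widetilde\beta_{t+1}-\widetilde\beta_t\|_\HCal \to 0$, Lipschitz continuity of $T$, and boundedness, a standard demiclosedness argument (every weak cluster point of $\widetilde\beta_t$ solves the VI, hence equals $\beta_\star$ by uniqueness from strict monotonicity) gives $\widetilde\beta_t \rightharpoonup \beta_\star$ weakly. The inertial averaging step $\beta_{t+1} = \tfrac{\delta_t}{2}\beta_t + (1-\delta_t)\widetilde\beta_{t+1}$ with $\delta_t = 1/t$ is the Halpern-type anchor (here anchored, through the telescoping of $\beta_t$, toward the origin $\zero \in \SCal$ guaranteed by Assumption~\ref{Assumption:PRG-IE}(i)); unwinding the recursion expresses $\beta_{t+1}$ as a convex combination $\sum_{s} w_{t,s} \widetilde\beta_{s+1}$ plus a vanishing anchor contribution, and the classical Halpern/Wittmann estimate together with $\sum \delta_t = \infty$, $\delta_t \to 0$ upgrades the weak convergence $\widetilde\beta_t \rightharpoonup \beta_\star$ to strong convergence $\beta_t \to \beta_\star$ in $\HCal$. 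Expanding $\|\beta_t-\beta_\star\|_\HCal^2 = \|\w_t-\w_\star\|^2 + \EE_q[\|\sigma_t(\cb_d)-\sigma_\star(\cb_d)\|_F^2]$ then yields exactly the stated limit.

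\emph{Main obstacle.} The delicate point is Stage 2 together with the weak-to-strong passage in Stage 3: making the reflected-gradient energy inequality close in the infinite-dimensional Hilbert space $\HCal$ (rather than $\br^d$) requires care that the projection $P_\SCal$ is nonexpansive and firmly contractive in $\langle\cdot,\cdot\rangle_\HCal$ and that the demiclosedness-type argument is valid without norm compactness — this is where weak topology arguments and the product structure of $\SCal$ must be handled cleanly. The constant-hunting in absorbing the residual terms under $\gamma < 1/(100L)$ is routine once the structure is set up, but getting the coupling between the Malitsky-type descent and the Halpern anchoring rate $\delta_t = 1/t$ to produce genuine strong convergence (not merely ergodic or weak) is the crux, and mirrors the combination argument of~\citep{Malitsky-2015-Projected} and~\citep{Halpern-1967-Fixed} that the algorithm was designed around.
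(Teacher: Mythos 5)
Your overall architecture (recasting the iteration as a projected reflected gradient step in $\HCal$ for the VI~\eqref{prob:VI-infinite}, a Malitsky-type energy inequality, identification of weak cluster points via Minty/demiclosedness, and a Halpern anchor for strong convergence) is the same one the paper uses, but there is a genuine gap in how you couple the two mechanisms. In Stage~2 you treat the reflected-gradient step as if it were based at $\widetilde\beta_t$, whereas Algorithm~\ref{Algorithm:PRG-IE} computes $\widetilde\beta_{t+1} = P_\SCal\bigl(\beta_t - \gamma T(2\widetilde\beta_t - \widetilde\beta_{t-1})\bigr)$ with the \emph{averaged} iterate $\beta_t$ as base point. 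The discrepancy $\beta_t - \widetilde\beta_t = \delta_{t-1}\bigl(\tfrac{1}{2}\beta_{t-1} - \widetilde\beta_t\bigr)$ is only $O(\delta_{t-1})$ in norm, and it enters the energy estimate \emph{linearly} (through cross terms such as $\langle \beta_t - \widetilde\beta_t, \widetilde\beta_{t+1}-\beta_\star\rangle_\HCal$, and through the extra $\tfrac{\delta_t}{2}\bigl(\|\w_\star\|^2 + \EE_q[\|\sigma_\star(\cb_d)\|_F^2]\bigr)$ term produced by the Jensen step for $\beta_{t+1}$). Since $\delta_t = 1/t$ is not summable, your claimed conclusions --- that $\phi_t$ is non-increasing, that $\|\widetilde\beta_{t+1}-\widetilde\beta_t\|_\HCal \to 0$ for the whole sequence, and that the monotonicity-gap terms are summable --- do not follow; this non-summable Halpern perturbation is intrinsic (it is exactly what forces convergence to a particular point) and cannot be absorbed by taking $\gamma$ small. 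This is why the paper's Lemma~\ref{lemma:key-descent} carries the perturbation terms explicitly and why its Lemma~\ref{lemma:PRGIE} only yields $r_{t+1} \le r_t + \delta_{t-1}M_1 - (\text{nonnegative residuals})$, with residual-vanishing established only along subsequences $\{t_j\}$ satisfying $\liminf_j (r_{t_j+1}-r_{t_j}) \ge 0$.

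The second gap is the weak-to-strong passage in Stage~3. Even granting full weak convergence $\widetilde\beta_t \rightharpoonup \beta_\star$, the statement that ``unwinding the recursion expresses $\beta_{t+1}$ as a convex combination of the $\widetilde\beta_{s+1}$, and the Halpern/Wittmann estimate then upgrades weak to strong convergence'' is not a valid inference: averaging a weakly convergent sequence does not produce norm convergence. The actual mechanism requires deriving a recursion of the form $r_{t+1} \le \bigl(1-\tfrac{3\delta_t}{4}\bigr) r_t + \tfrac{3\delta_t}{4} b_t$ in which $b_t$ contains the anchor inner product $-2\langle \w_\star, \w_{t+1}-\w_\star\rangle - 2\EE_q[\langle \sigma_\star(\cb_d), \sigma_{t+1}(\cb_d)-\sigma_\star(\cb_d)\rangle]$ (the paper's Eq.~\eqref{inequality:PRGIE-second}), and then invoking a Saejung--Yao-type lemma (Lemma~\ref{lemma:sequence}): one verifies $\limsup_j b_{t_j} \le 0$ only along the special subsequences above, using the residual-vanishing from Eq.~\eqref{inequality:PRGIE-first} together with the projection optimality condition, monotonicity, and Minty's lemma (Lemma~\ref{lemma:minty}) to show the weak cluster point of that subsequence is $\beta_\star$. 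Your proposal names the right ingredients (Malitsky, Halpern, demiclosedness) but never derives the anchored recursion nor the subsequence device that replaces the failed Fej\'er monotonicity, and these are precisely the steps on which the strong convergence claim rests; as written, Stages~2 and~3 would not go through.
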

We make some comments in the sequel. First, the sequence $\delta_t = 1/t$ in Algorithm~\ref{Algorithm:PRG-IE} is one specific choice and more general choices can be considered, as long as they satisfy $\delta_t \rightarrow 0$ and $\sum_{t=1}^{+\infty} \delta_t = +\infty$. The formula for updating the sequence $\{(\w_t, \sigma_t)\}_{t \geq 1}$ can also be generalized as follows:
\begin{equation*}
\begin{pmatrix} \w_{t+1} \\ \sigma_{t+1} \end{pmatrix} \ \leftarrow \ \delta_t \cdot g\begin{pmatrix} \w_t \\ \sigma_t \end{pmatrix} + \left(1-\delta_t\right)\begin{pmatrix} \widetilde{\w}_{t+1} \\ \widetilde{\sigma}_{t+1} \end{pmatrix},  
\end{equation*}
where $g: \HCal \mapsto \HCal$ is a contraction with the parameter $\kappa \in (0, 1)$. We set $g(\x) = \x/2$ for simplicity. Second, the strict monotonicity can be relaxed to monotonicity and it can be shown that the sequence $\{(\w_t, \sigma_t)\}_{t \geq 1}$ strongly converges to a particular Bayesian equilibrium. 

\section{Projected Gradient with Randomized Block Coordinate}\label{sec:alg_rand}
We present a \textit{projected gradient with randomized block coordinate} (PG-RBC) algorithm for solving Eq.~\eqref{prob:VI-finite}. To ease the analysis, we make the following assumption throughout this section before providing more intuitions for them.
\begin{algorithm}[!t]
\caption{Projected Gradient with Randomized Block Coordinate (PG-RBC)}\label{Algorithm:PG-RBC}
\begin{algorithmic}[1]
\STATE \textbf{Input:} strongly monotone parameter $\lambda > 0$; finite Bayesian prior $\{(p_k, \sv_k\}_{k=1}^K$; weight $\cb_l \in \br^n$.  
\STATE \textbf{Initialize:} $\w_0 \in \WCal$, $\sigma_0^k \in \XCal$ for all $k \in [K]$ and $\gamma_0 > \frac{1}{2\lambda}$.  
\FOR{$t=0,1,2,\ldots,T-1$}
\STATE Randomly pick up an index $j_t \in [K]$ according to $\PP(j_t=k)=p_k$ for all $k \in [K]$. 
\STATE Compute $(\w_{t+1}, \sigma_{t+1}^1, \ldots, \sigma_{t+1}^K)$ by 
\begin{align*}
\w_{t+1} & \leftarrow P_\WCal(\w_t - \gamma_t\nabla_\w\widehat{\theta}_l(\w_t, \sigma_t^{j_t}, \cb_l)), \\
\sigma_{t+1}^k & \leftarrow \left\{\begin{array}{ll}
P_\XCal(\sigma_t^k - \gamma_t\nabla_{\bar{X}}\widehat{\theta}_d(\w_t, \sigma_t^k, \sv_k)), & \textnormal{if } k = j_t, \\
\sigma_t^k, & \textnormal{otherwise.}
\end{array}\right. 
\end{align*}
\STATE Compute $\gamma_{t+1} \leftarrow \gamma/(t+1)$. 
\ENDFOR
\end{algorithmic}
\end{algorithm}
\begin{assumption}\label{Assumption:PG-RBC}
The Bayesian regression game $G = (\WCal, \Sigma, \widehat{\theta}_l, \widehat{\theta}_d, \cb_l, q)$ satisfies: (i) $\WCal$ and $\XCal$ are both compact such that there exists a positive constant $G$ such that $\|\nabla_\w\widehat{\theta}_l(\w, \bar{X}, \cb_l)\| \leq G$ for each $(\w, \bar{X}) \in \WCal \times \XCal$ and each $\cb_l \in \br^n$ fixed, and $\|\nabla_{\bar{X}}\widehat{\theta}_d(\w, \bar{X}, \sv)\|_F \leq G$ for each $(\w, \bar{X}) \in \WCal \times \XCal$ and each $\sv \in \{\sv_1, \ldots, \sv_k\}$; (ii) Given a fixed $\cb_l$, there exists a positive constant $\lambda$ such that\footnote{The constant $\lambda$ can depend on $\cb_l$ but is independent of the choice of $(\w, \bar{X})$ and $(\w', \bar{X}')$.} 
\begin{eqnarray*}
& & \langle \nabla_\w\widehat{\theta}_l(\w, \bar{X}, \cb_l) - \nabla_\w\widehat{\theta}_l(\w', \bar{X}', \cb_l), \w - \w'\rangle + \sum_{k=1}^K p_k\langle \nabla_{\bar{X}}\widehat{\theta}_d(\w, \bar{X}, \sv_k) - \nabla_{\bar{X}}\widehat{\theta}_d(\w', \bar{X}', \sv_k), \bar{X} - \bar{X}'\rangle \\ 
& &\geq  \lambda(\|\w - \w'\|^2 + \|\bar{X} - \bar{X}'\|_F^2), \qquad \qquad \forall (\w, \bar{X}), (\w', \bar{X}') \in \WCal \times \XCal.  
\end{eqnarray*}
\end{assumption} 
In Assumption~\ref{Assumption:PG-RBC}, the first condition can be interpreted as the strong monotonicity of $T$ defined by Eq.~\eqref{Eq:mapping-infinite-main} when the Bayesian prior $q$ is finite. The second condition naturally holds true if $\nabla_\w\widehat{\theta}_l(\cdot, \cdot, \cb_l)$ and $\nabla_{\bar{X}}\widehat{\theta}_d(\cdot, \cdot, \sv)$ are continuous for each $\cb_l \in \br^n$ fixed and each $\sv \in \{\sv_1, \ldots, \sv_k\}$.

The proposed approach combines projected gradient algorithm with randomized block coordinate update~\citep{Nesterov-2012-Efficiency, Wright-2015-Coordinate}. This design is more efficient than deterministic VI algorithms since the per iteration cost is $O(nm)$ which does not depend on $K$. Thus, our approach is favorable in application problems when the parameter $K$ is large.
\begin{theorem}\label{Theorem:PGRBC}
Under Assumption~\ref{Assumption:main} and~\ref{Assumption:PG-RBC}, the iterates $\{(\w_t, \sigma_t^1, \ldots, \sigma_t^k)\}_{t \geq 0}$ generated by Algorithm~\ref{Algorithm:PG-RBC} satisfy $\EE[\|\w_t-\w_\star\|^2+\sum_{k=1}^K \|\sigma_t^k-\sigma_\star^k\|_F^2] = O(1/t)$ where $(\w_\star, \sigma_\star^1, \ldots, \sigma_\star^K)$ is a unique Bayesian equilibrium.
\end{theorem}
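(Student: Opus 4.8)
The plan is to recognize Algorithm~\ref{Algorithm:PG-RBC} as an unbiased stochastic projected--gradient iteration for the finite-dimensional monotone VI in Eq.~\eqref{prob:VI-finite}, and then to run the classical Robbins--Monro argument for strongly monotone operators with $\Theta(1/t)$ stepsizes. Write $z_t=(\w_t,\sigma_t^1,\dots,\sigma_t^K)\in\SCal:=\WCal\times\XCal^K$, let $z_\star=(\w_\star,\sigma_\star^1,\dots,\sigma_\star^K)$ be the unique solution of Eq.~\eqref{prob:VI-finite} (existence and uniqueness follow from Corollary~\ref{corollary:VI-existence-third}, since Assumption~\ref{Assumption:PG-RBC}(ii) makes the mapping in Eq.~\eqref{Eq:mapping-infinite-main} strongly monotone for a finite prior, and the point-to-equilibrium correspondence is Theorem~\ref{Theorem:VI} specialized to finite support), let $F$ be the operator of Eq.~\eqref{prob:VI-finite}, i.e.\ $F_\w(z)=\sum_k p_k\nabla_\w\widehat{\theta}_l(\w,\sigma^k,\cb_l)$ and $F_{\sigma^k}(z)=p_k\nabla_{\bar X}\widehat{\theta}_d(\w,\sigma^k,\sv_k)$, and set $\FCal_t=\sigma(j_0,\dots,j_{t-1})$ and $V_t=\|\w_t-\w_\star\|^2+\sum_{k}\|\sigma_t^k-\sigma_\star^k\|_F^2=\|z_t-z_\star\|^2$, the quantity in the statement.

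First I would rewrite one iteration as $z_{t+1}=P_\SCal(z_t-\gamma_t g_t)$, where $g_t$ carries $\nabla_\w\widehat{\theta}_l(\w_t,\sigma_t^{j_t},\cb_l)$ in the $\w$-block, $\nabla_{\bar X}\widehat{\theta}_d(\w_t,\sigma_t^{j_t},\sv_{j_t})$ in the $\sigma^{j_t}$-block, and zero in every other $\sigma^k$-block; this is precisely the PG-RBC update because $P_\SCal$ splits over the blocks and the unselected blocks already lie in $\XCal$. Drawing $j_t$ from the prior makes $\EE[g_t\mid\FCal_t]=F(z_t)$, so $g_t$ is an unbiased oracle for $F$; and by Assumption~\ref{Assumption:PG-RBC}(i) every block of $g_t$ and of $F(z_\star)$ has norm at most $G$, so $\EE[\|g_t-F(z_\star)\|^2\mid\FCal_t]\le C$ for a constant $C=O(G^2)$.

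Next I would derive the one-step inequality. Using the fixed-point characterization $z_\star=P_\SCal(z_\star-\gamma_tF(z_\star))$ of the VI solution (valid for any $\gamma_t>0$), nonexpansiveness of $P_\SCal$, and expansion of the square,
\begin{align*}
\|z_{t+1}-z_\star\|^2
&\le \|(z_t-z_\star)-\gamma_t(g_t-F(z_\star))\|^2\\
&= V_t-2\gamma_t\langle z_t-z_\star,\,g_t-F(z_\star)\rangle+\gamma_t^2\|g_t-F(z_\star)\|^2;
\end{align*}
taking $\EE[\cdot\mid\FCal_t]$ and using unbiasedness and the noise bound yields
\[
\EE[V_{t+1}\mid\FCal_t]\ \le\ V_t-2\gamma_t\langle z_t-z_\star,\,F(z_t)-F(z_\star)\rangle+C\gamma_t^2 .
\]
Expanding $\langle z_t-z_\star,F(z_t)-F(z_\star)\rangle$ over the blocks produces a $p_k$-weighted sum of the quantities controlled by the strong-monotonicity condition in Assumption~\ref{Assumption:PG-RBC}(ii), so it is bounded below by $\lambda\big(\|\w_t-\w_\star\|^2+\sum_k p_k\|\sigma_t^k-\sigma_\star^k\|_F^2\big)\ge c_0\lambda V_t$ with $c_0=\min\{1,\min_k p_k\}>0$. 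Hence $\EE[V_{t+1}\mid\FCal_t]\le(1-2c_0\lambda\gamma_t)V_t+C\gamma_t^2$, and taking total expectations gives $a_{t+1}\le(1-2c_0\lambda\gamma_t)a_t+C\gamma_t^2$ with $a_t:=\EE[V_t]$. With $\gamma_t=\gamma/t$ and $\gamma$ large enough that $2c_0\lambda\gamma>1$ (the role of the lower bound on the stepsize parameter in Algorithm~\ref{Algorithm:PG-RBC}), the elementary induction on the ansatz $a_t\le M/t$ — with $M=\max\{t_0 a_{t_0},\,C\gamma^2/(2c_0\lambda\gamma-1)\}$ and using compactness of $\SCal$ to bound $a_{t_0}$ — gives $\EE[V_t]=O(1/t)$, the claim.

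The step I expect to be the crux is the passage from the block-coordinate update to the contraction recursion. Unlike an ordinary coordinate method, here $\w$ is updated at \emph{every} round along a \emph{randomly selected} partial gradient, so one must verify (a) that this together with the single $\sigma$-block update is nevertheless an \emph{unbiased} projected-gradient step for the full aggregate operator $F$ once $j_t\sim\{p_k\}$, so that Assumption~\ref{Assumption:PG-RBC}(ii) — phrased exactly as the $p_k$-weighted aggregate — is what controls the descent term, and (b) that the residual sampling variance in the $\w$-update does not vanish at $z_\star$, which forces a diminishing stepsize and makes the rate $O(1/t)$ rather than geometric. The remaining ingredients — the projection expansion, the $O(G^2)$ noise bound, and the deterministic recursion — are routine.
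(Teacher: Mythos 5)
Your proposal is correct and follows essentially the same route as the paper's proof: a one-step analysis of the randomized projected-gradient update using nonexpansiveness of the projections, the gradient bound $G$, and the equilibrium VI inequality together with the strong monotonicity of Assumption~\ref{Assumption:PG-RBC}, yielding $\EE[V_{t+1}\mid\FCal_t]\le(1-c\,\gamma_t)V_t+O(G^2)\gamma_t^2$, followed by the standard $\gamma_t=\gamma_0/t$ recursion (the paper invokes Chung's lemma, Lemma~\ref{lemma:Chung}, where you argue by induction). The only cosmetic differences are that you subtract $F(z_\star)$ inside the projection via the fixed-point property of $P_\SCal$ rather than adding the VI inequality at $z_\star$ separately, and your more conservative reading of the strong-monotonicity condition introduces the harmless factor $c_0=\min_k p_k$ in the contraction constant (the paper reads the assumption as strong monotonicity of the aggregate operator with respect to the unweighted norm, which is why $\gamma_0>\tfrac{1}{2\lambda}$ suffices there).
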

We make some further comments. Since we do not assume any smoothness condition in Assumption~\ref{Assumption:PG-RBC}, the iteration complexity of $O(1/t)$ is the best possible we can hope for all the deterministic and stochastic algorithms in general\footnote{Convex optimization problems are a special class of the monotone VI problems. Thus, the problem complexity of (strongly) convex optimization implies that of (strongly) monotone VIs.}; see~\citet{Nemirovsky-1983-Complexity} for the reference. To this end, Theorem~\ref{Theorem:PGRBC} demonstrates that the complexity bound of Algorithm~\ref{Algorithm:PG-RBC} is tight in terms of the iteration number.

\section{Experiments}\label{sec:experiment}
We consider a spam email classification with quadratic cost functions on a real dataset, where the fixed-point approximation approach (denoted as Bayes-FP) proposed in \citet[Algorithm~1]{Grosshans-2013-Bayesian} can be implemented. We provide numerical evidences which demonstrate the advantage of the proposed stochastic optimization approach (denoted as Bayes-ADAM) in Eq.~\eqref{prob:opt-stochastic} over Bayes-FP. We also compare with two other baseline approaches, including a standard Ridge regression and a Nash equilibrium strategy. The Nash equilibrium strategy is simply the special case when the Bayesian prior is taken to be a point mass at its mean. Since Algorithm~\ref{Algorithm:PRG-IE} and~\ref{Algorithm:PG-RBC} are either more general or specialized to other settings, we believe it is unfair to compare them with Bayes-ADAM and Bayes-FP which can only be applied when the adversarial loss is quadratic. Thus, we exclude them here and leave further experimental investigations to future work.
\begin{figure*}[!t]
\centering
\begin{tabular}{ccc} 
\includegraphics[width=0.3\textwidth]{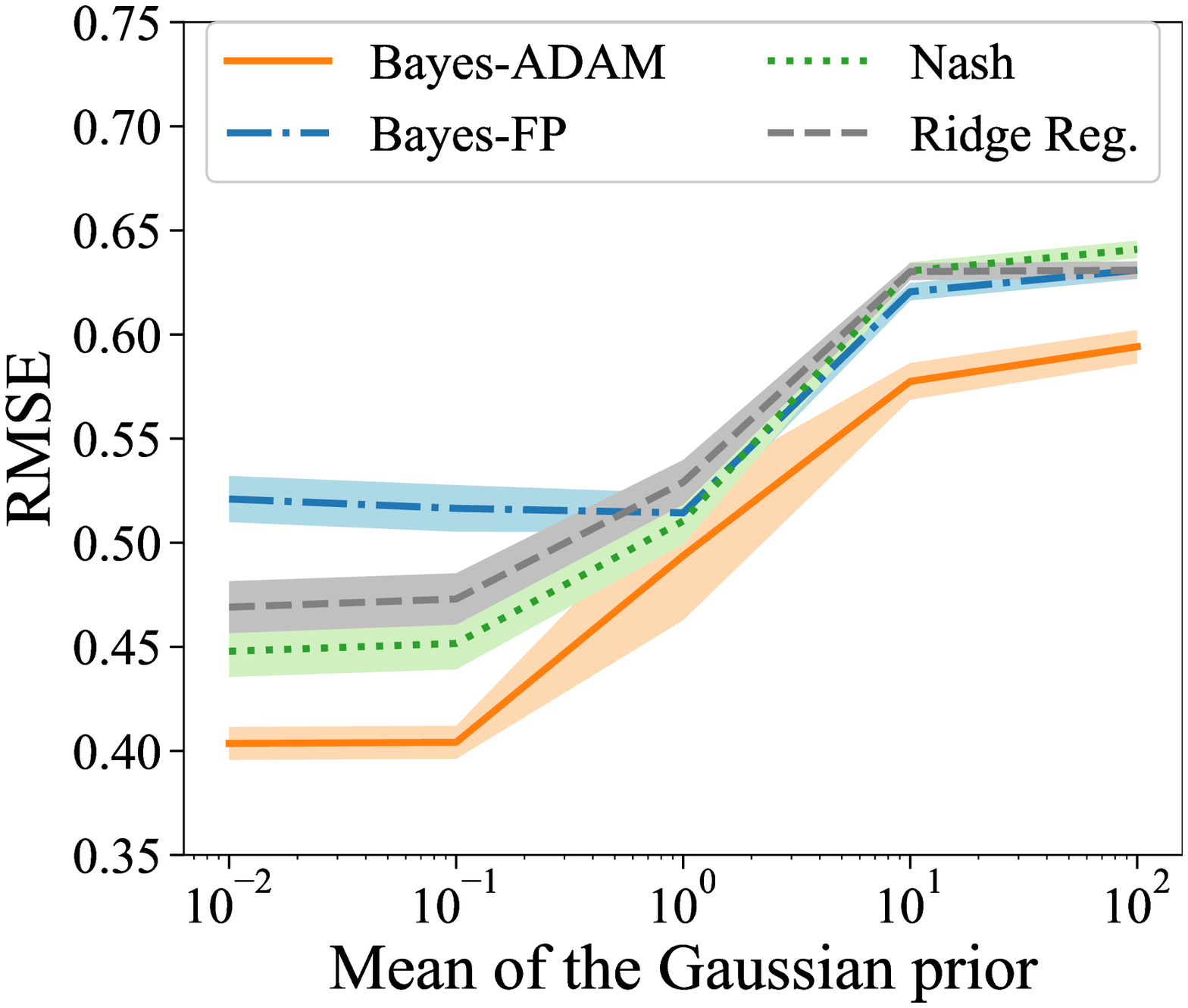} & 
\includegraphics[width=0.3\textwidth]{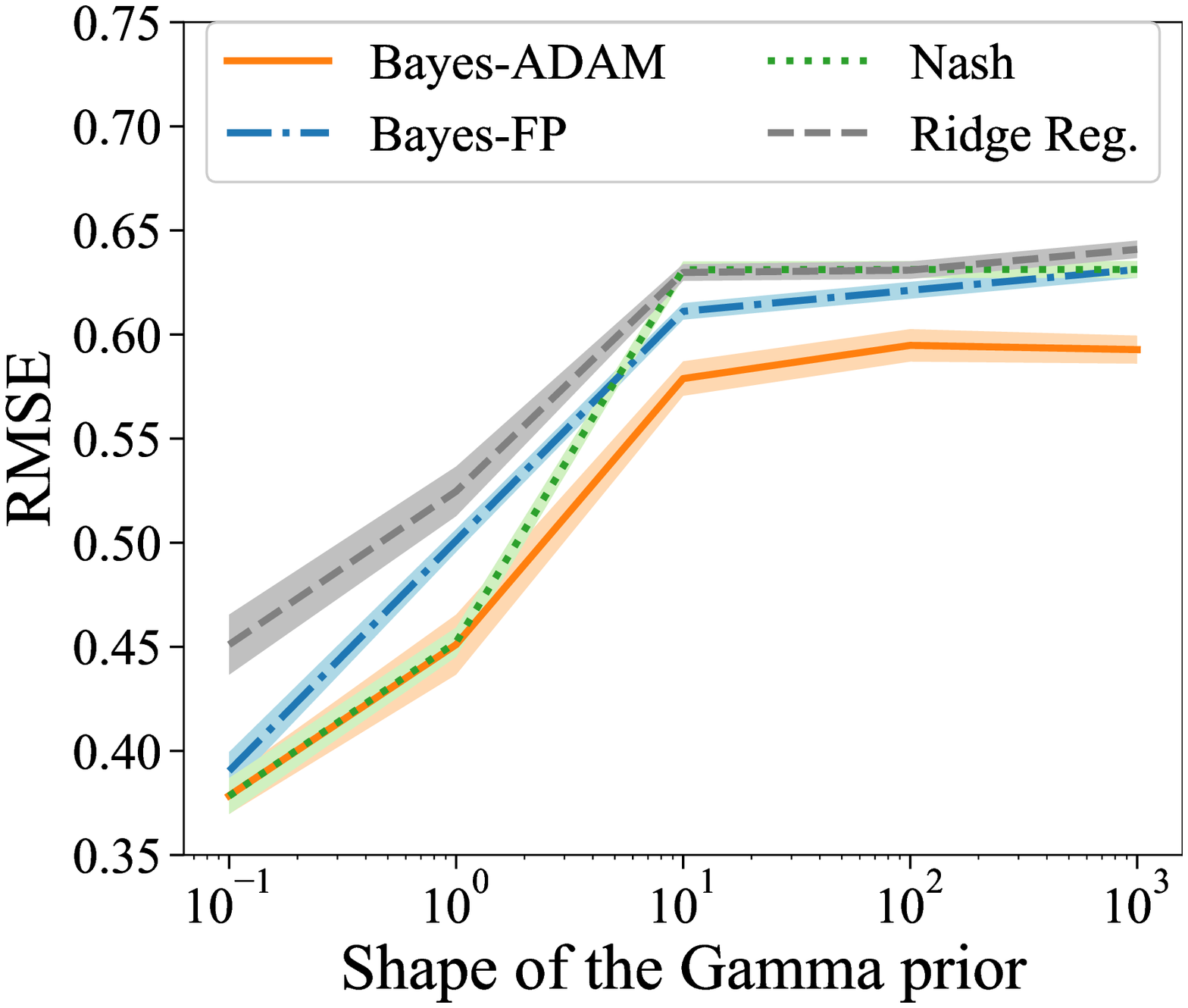} & 
\includegraphics[width=0.3\textwidth]{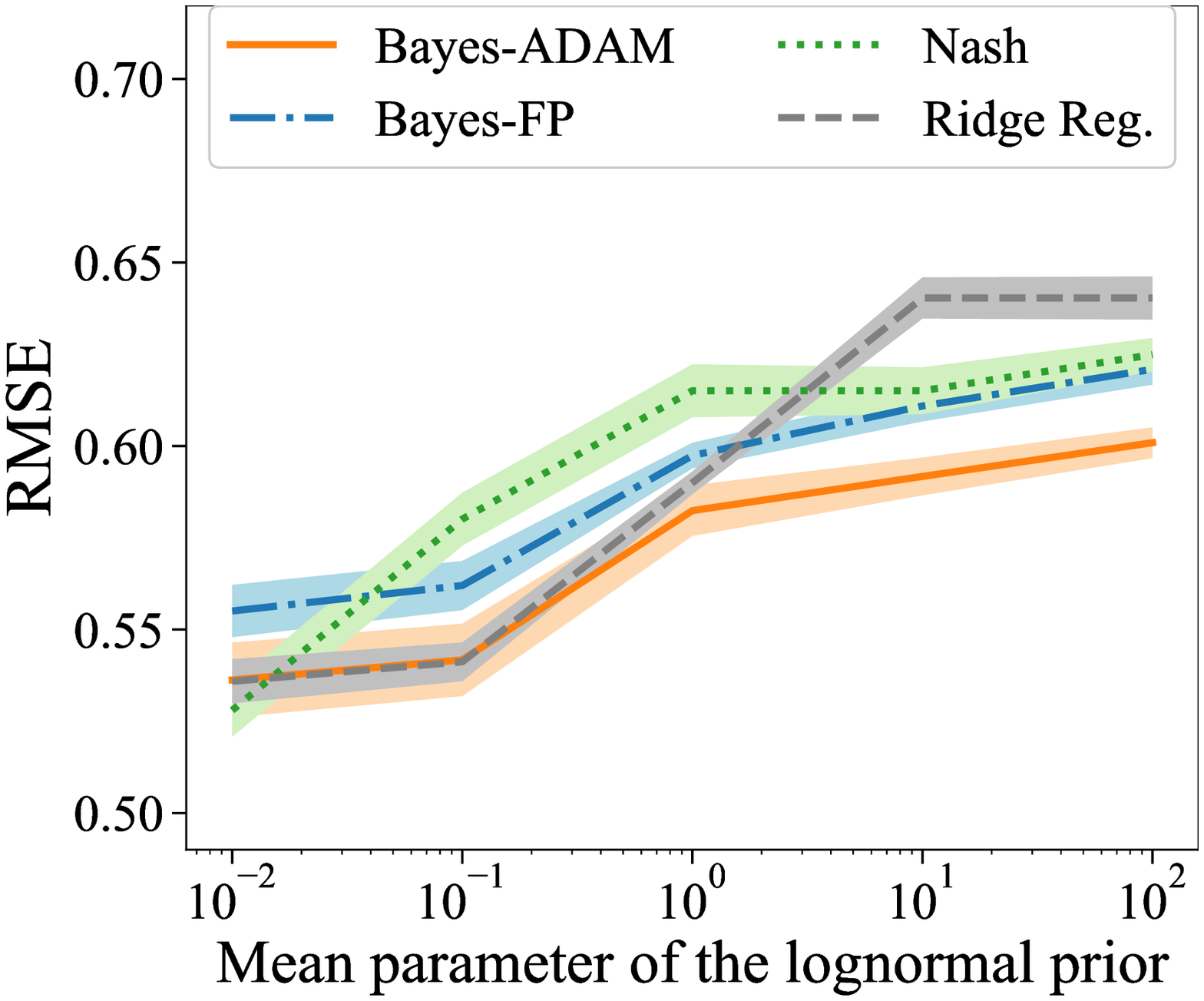} \\
\includegraphics[width=0.3\textwidth]{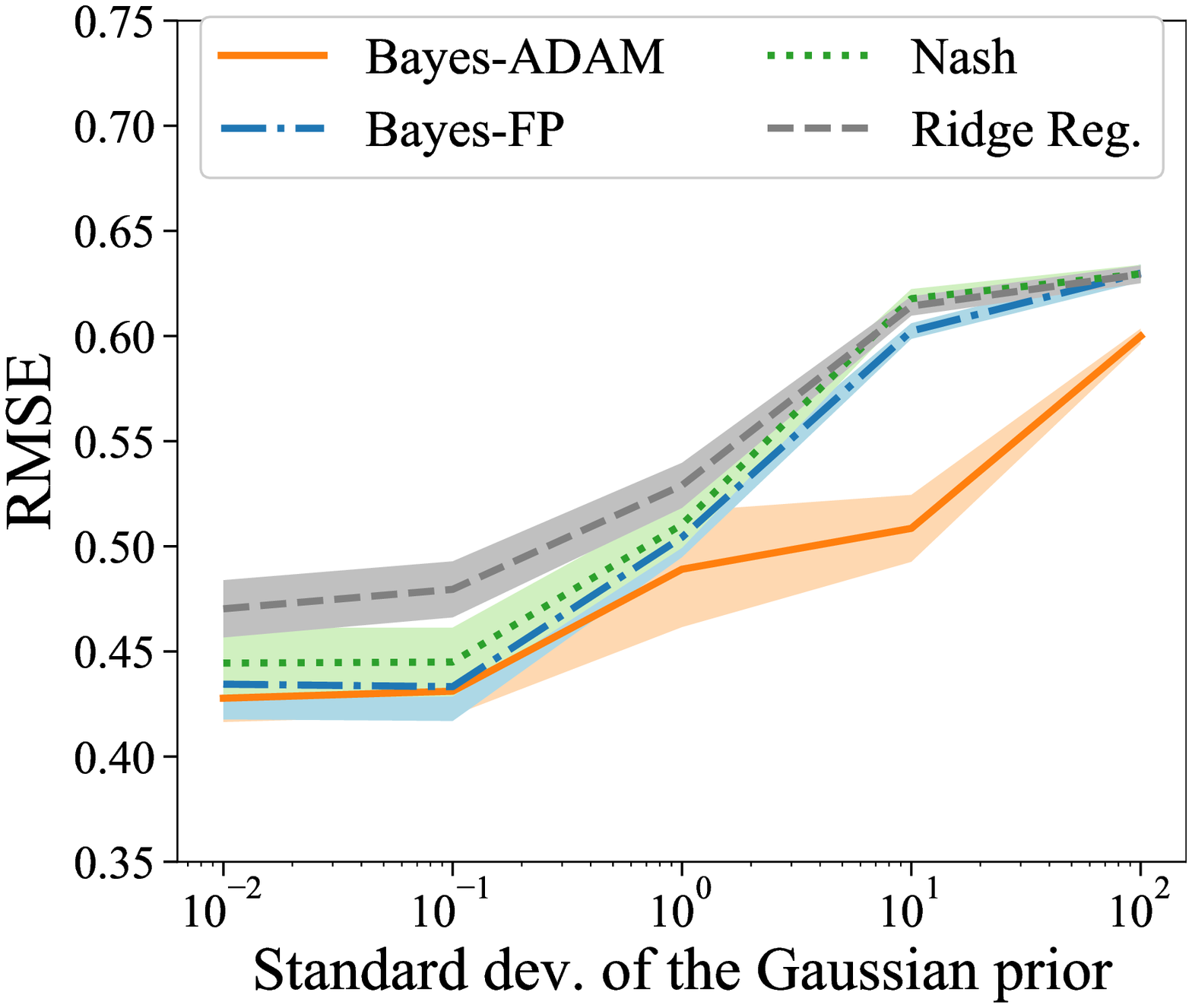} & 
\includegraphics[width=0.3\textwidth]{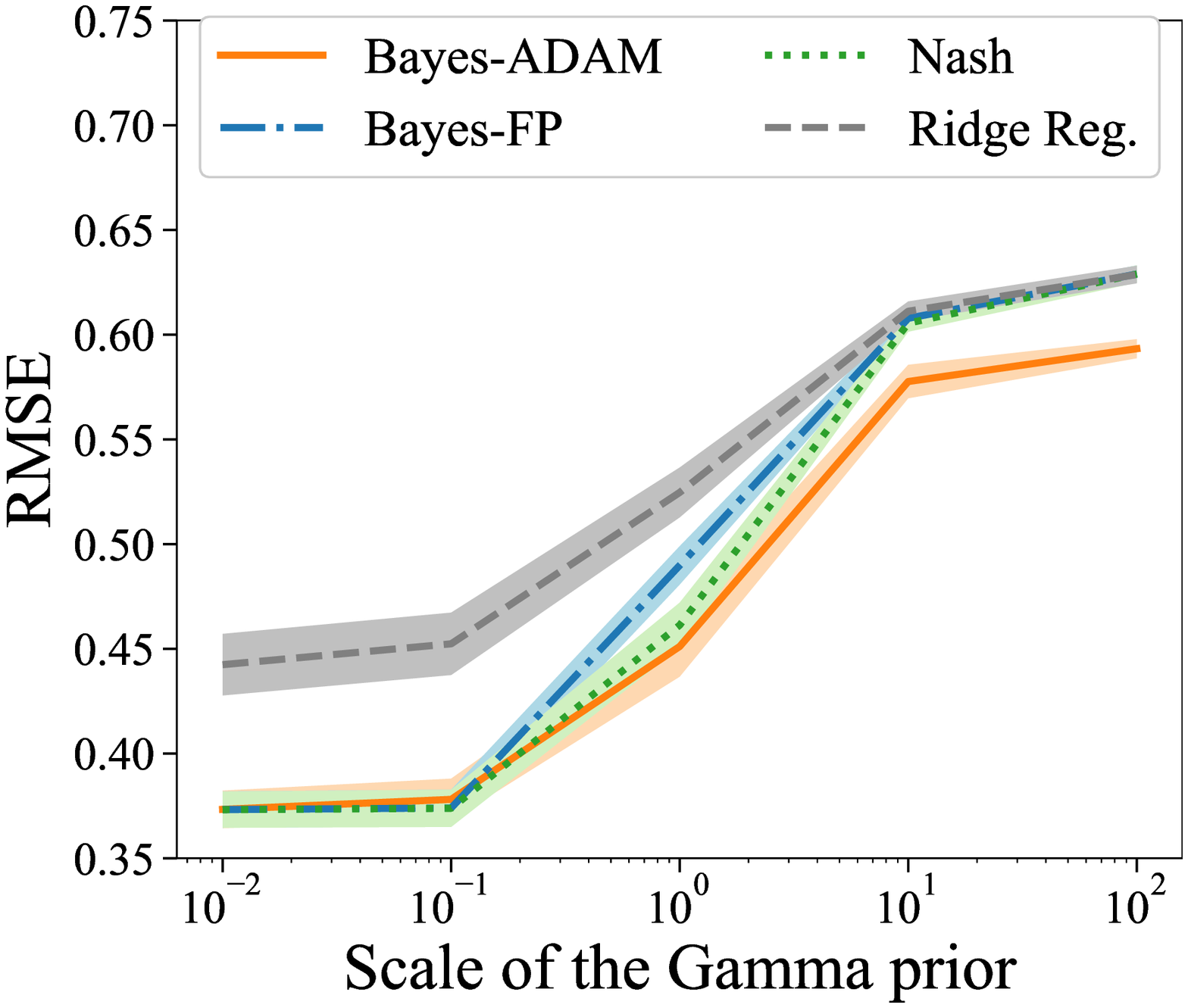} & 
\includegraphics[width=0.3\textwidth]{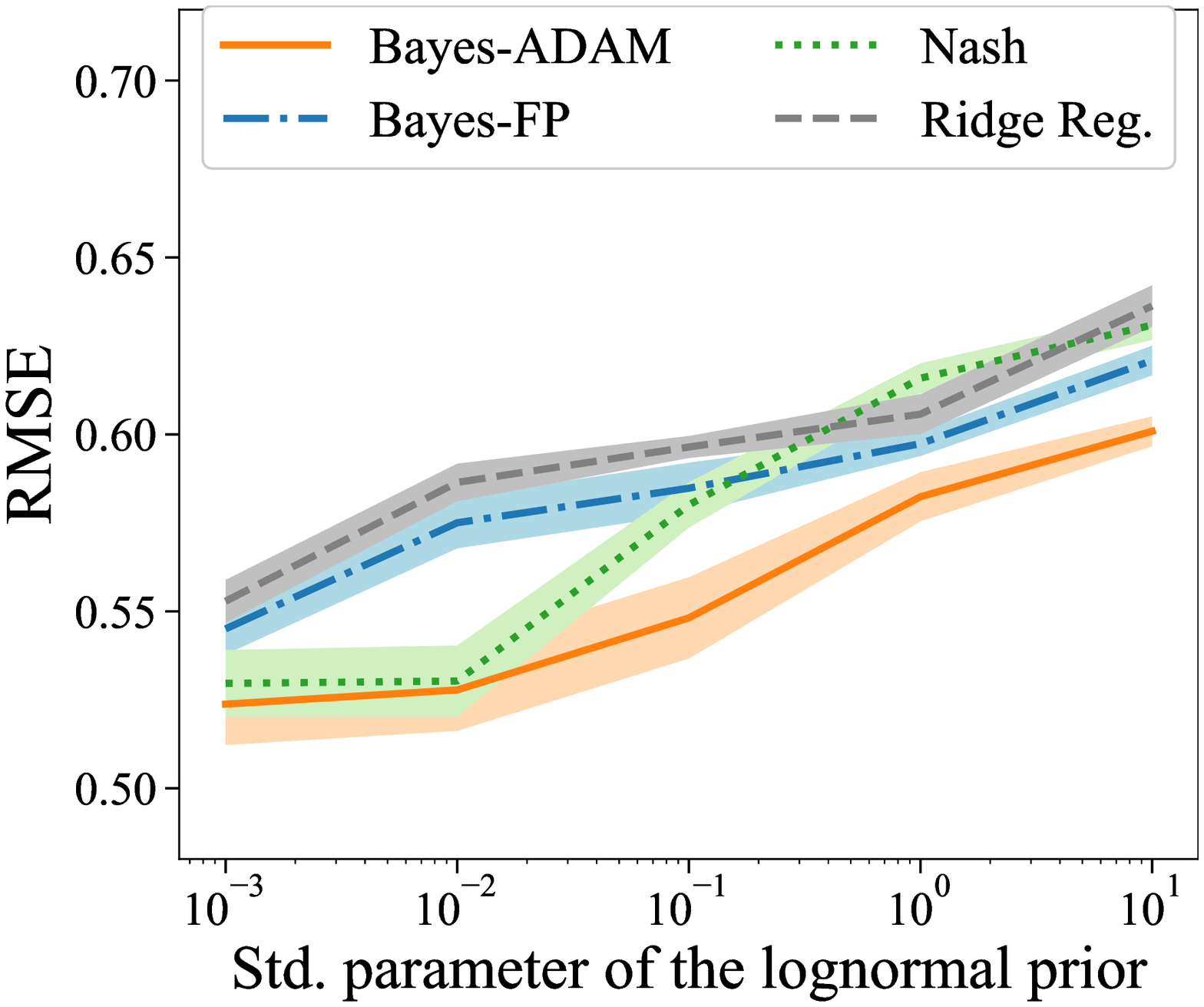}
\end{tabular}
\caption{RMSE comparisons for all algorithms with a Gaussian Bayesian prior and varying means and variances (left), or a Gamma Bayesian prior and varying shape and scale (middle), or a lognormal Bayesian prior and varying mean and normal variance (right). The $x$-axis is on log scale with base=10. The RMSE are computed over 10 random train/test splits.}\label{fig:results}
\end{figure*}

\paragraph{Dataset.} We use the \textit{Spambase} dataset~\citep{Dua:2019}, which contains 4601 examples. The prediction task is to identify if an email is spam, and the binary label for each sample denotes whether it was considered spam (1) or not (0). Most of the attributes indicate whether a particular word or character was frequently occuring in the e-mail. The run-length attributes measure the length of sequences of consecutive capital letters. We refer the interested reader to \textit{http://archive.ics.uci.edu/ml/datasets/Spambase/} for more details. 

\paragraph{Experimental setup.} We study how Bayes-ADAM and Bayes-FP perform against an adversary that chooses a strategy according to a Bayesian equilibrium for different parameters of the Bayesian prior. We also compare these two algorithms to two baselines, including a standard Ridge regression, and a strategy at a Nash equilibrium when the prior is set to be a point mass.  For the Bayesian regression setting, we adopt a similar setup as in \citet{Grosshans-2013-Bayesian}. In each repetition, we construct a pair of disjoint train and test sets drawn from the whole dataset at random. Both the train and test sets contain 500 datapoints. We then compute two Bayesian equilibrium points on each set. We extract the learner’s model from the trainset's equilibrium point, and transform the data points from the testset's equilibrium point after drawing actual costs from the prior. We then test the model on the transformed test data. We draw 500 random samples of $c_d$ in testing. We use root mean squared error (RMSE) to evaluate the predictions, computed using scikit-learn~\citep{scikit-learn}. We set the loss functions of the learner and adversary to be $f_l(\w, \bar \x, y)=(\bar \x^\top\w-y)^2$, $\Omega_l(\w)=\|\w\|_2^2$ and $f_d(\w, \x, z)=(\x^\top\w-z)^2$, $\Omega_d(X, \bar{X})=\|X-\bar{X}\|_F^2$ with $c_{\ell, i} = 0.1$. 

For the implementation of \citet[Algorithm~1]{Grosshans-2013-Bayesian}, we perform the fixed point update for 20 iterations and use a first order Taylor expansion to approximate the adversary's best response. We also use this same procedure to compute a Nash equilibrium, where the Bayesian prior is set to be a point mass at its mean. We then compare it to our method (Bayes-ADAM), which solves a stochastic nonconvex optimization problem. Specifically, we solve for Eq~\eqref{prob:opt-stochastic} using random samples of $c_d$. During the training for the stochastic optimization, we draw a batch of random samples in each round and compute the gradients using these samples. All gradient steps were implemented using PyTorch’s Adam optimizer\footnote{https://pytorch.org/docs/stable/optim.html}. The total number of random $c_d$ samples used is 1000. We run the algorithm Bayes-ADAM for 20 epochs, where the learning rate is tuned within $\{0.001, 0.01, 0.1\}$, and the batch size is tuned over $\{32, 64, 128\}$. We implemented the ridge regression algorithm with the scikit-learn package~\citep{scikit-learn}, where the regularization hyperparameter is tuned over $\{0.01, 0.1, 1\}$. Results for all algorithms are averaged over ten random train/test splits. 

\paragraph{Experimental results.} We compare Algorithm~1 in \citet{Grosshans-2013-Bayesian} (denoted as Bayes-FP) with our proposed stochastic optimization procedure (denoted as Bayes-ADAM), in the settings of three continuous or discrete Bayesian prior types, including the multivariate Gaussian distribution, Gamma distribution, and lognormal distribution. For each type of prior distribution, we vary the mean and standard deviation for the Gaussian prior; scale and shape for the Gamma prior; mean and standard deviation parameter for the lognormal prior's corresponding normal distribution. When not varied, the corresponding parameter is set to 1. Figure~\ref{fig:results} presents the performance of four algorithms for the Bayesian regression games with three different types of priors. Bayes-ADAM outperforms the Bayes-FP algorithm as well as the other two baselines including the Ridge regression and the Nash strategy. In particular, Bayes-ADAM is able to achieve lower RMSE when the Gaussian prior or the lognormal prior has a larger mean or variance, and when the Gamma prior has a larger shape or scale. On the other hand, the Nash strategy and ridge regression can achieve relatively low RMSE when the Bayesian prior has a lower variance, but fails to achieve low RMSE when the variance becomes large.

\section{Conclusions}\label{sec:conclu}
We have presented a computational theory of Bayesian regression games, making links to general Bayesian games and variational inequalities while focusing on an algorithm viewpoint. We provide sufficient conditions for the existence and uniqueness of equilibria by using an infinite-dimensional VI model, generalizing~\citet[Theorem~2]{Grosshans-2013-Bayesian}. We also discuss two special cases in which the infinite-dimensional VI reduces to a high-dimensional VI or a stochastic optimization in Euclidean space. We propose the algorithms for the computation of equilibria and provide numerical results to demonstrate the effectiveness of our framework in a classification setting.

\section*{Acknowledgements}
This work was supported in part by the Mathematical Data Science program of the Office of Naval Research under grant number N00014-18-1-2764.

\bibliographystyle{plainnat}
\bibliography{ref}

\begin{thebibliography}{71}
\providecommand{\natexlab}[1]{#1}
\providecommand{\url}[1]{\texttt{#1}}
\expandafter\ifx\csname urlstyle\endcsname\relax
  \providecommand{\doi}[1]{doi: #1}\else
  \providecommand{\doi}{doi: \begingroup \urlstyle{rm}\Url}\fi

\bibitem[Aghassi and Bertsimas(2006)]{Aghassi-2006-Robust}
M.~Aghassi and D.~Bertsimas.
\newblock Robust game theory.
\newblock \emph{Mathematical Programming}, 107\penalty0 (1-2):\penalty0
  231--273, 2006.

\bibitem[Bauschke and Combettes(2001)]{Bauschke-2001-Weak}
H.~H. Bauschke and P.~L. Combettes.
\newblock A weak-to-strong convergence principle for {F}ej{\'e}r-monotone
  methods in {H}ilbert spaces.
\newblock \emph{Mathematics of Operations Research}, 26\penalty0 (2):\penalty0
  248--264, 2001.

\bibitem[Bauschke and Combettes(2011)]{Bauschke-2011-Convex}
H.~H. Bauschke and P.~L. Combettes.
\newblock \emph{Convex Analysis and Monotone Operator Theory in Hilbert
  Spaces}, volume 408.
\newblock Springer, 2011.

\bibitem[Ben-Tal et~al.(2009)Ben-Tal, El~Ghaoui, and
  Nemirovski]{Bental-2009-Robust}
A.~Ben-Tal, L.~El~Ghaoui, and A.~Nemirovski.
\newblock \emph{Robust Optimization}, volume~28.
\newblock Princeton University Press, 2009.

\bibitem[Bertsekas and Tsitsiklis(1989)]{Bertsekas-1989-Parallel}
D.~P. Bertsekas and J.~N. Tsitsiklis.
\newblock \emph{Parallel and Distributed Computation: Numerical Methods},
  volume~23.
\newblock Prentice Hall Englewood Cliffs, NJ, 1989.

\bibitem[Bhawalkar and Roughgarden(2011)]{Bhawalkar-2011-Welfare}
K.~Bhawalkar and T.~Roughgarden.
\newblock Welfare guarantees for combinatorial auctions with item bidding.
\newblock In \emph{SODA}, pages 700--709. SIAM, 2011.

\bibitem[Biggio and Roli(2018)]{Biggio-2018-Wild}
B.~Biggio and F.~Roli.
\newblock Wild patterns: Ten years after the rise of adversarial machine
  learning.
\newblock \emph{Pattern Recognition}, 84:\penalty0 317--331, 2018.

\bibitem[Bottou(1998)]{Bottou-1998-Online}
L{\'e}on Bottou.
\newblock Online learning and stochastic approximations.
\newblock \emph{On-line learning in neural networks}, 17\penalty0 (9):\penalty0
  142, 1998.

\bibitem[Br{\"u}ckner et~al.(2012)Br{\"u}ckner, Kanzow, and
  Scheffer]{Bruckner-2012-Static}
M.~Br{\"u}ckner, C.~Kanzow, and T.~Scheffer.
\newblock Static prediction games for adversarial learning problems.
\newblock \emph{Journal of Machine Learning Research}, 13\penalty0
  (1):\penalty0 2617--2654, 2012.

\bibitem[Bruna et~al.(2014)Bruna, Szegedy, Sutskever, Goodfellow, Zaremba,
  Fergus, and Erhan]{Bruna-2014-Intriguing}
J.~Bruna, C.~Szegedy, I.~Sutskever, I.~Goodfellow, W.~Zaremba, R.~Fergus, and
  D.~Erhan.
\newblock Intriguing properties of neural networks.
\newblock In \emph{ICLR}, 2014.
\newblock URL \url{https://openreview.net/forum?id=kklr_MTHMRQjG}.

\bibitem[Cai and Papadimitriou(2014)]{Cai-2014-Simultaneous}
Y.~Cai and C.~Papadimitriou.
\newblock Simultaneous bayesian auctions and computational complexity.
\newblock In \emph{EC}, pages 895--910, 2014.

\bibitem[Chen et~al.(2009)Chen, Deng, and Teng]{Chen-2009-Settling}
X.~Chen, X.~Deng, and S-H. Teng.
\newblock Settling the complexity of computing two-player {N}ash equilibria.
\newblock \emph{Journal of the ACM (JACM)}, 56\penalty0 (3):\penalty0 1--57,
  2009.

\bibitem[Christodoulou et~al.(2008)Christodoulou, Kov{\'a}cs, and
  Schapira]{Christodoulou-2008-Bayesian}
G.~Christodoulou, A.~Kov{\'a}cs, and M.~Schapira.
\newblock Bayesian combinatorial auctions.
\newblock In \emph{ICALP}, pages 820--832. Springer, 2008.

\bibitem[Chung(1954)]{Chung-1954-Stochastic}
K.~L. Chung.
\newblock On a stochastic approximation method.
\newblock \emph{The Annals of Mathematical Statistics}, pages 463--483, 1954.

\bibitem[Conitzer and Sandholm(2003)]{Conitzer-2003-Complexity}
V.~Conitzer and T.~Sandholm.
\newblock Complexity results about {N}ash equilibria.
\newblock In \emph{IJCAI}, pages 765--771, 2003.

\bibitem[Conitzer and Sandholm(2008)]{Conitzer-2008-New}
V.~Conitzer and T.~Sandholm.
\newblock New complexity results about {N}ash equilibria.
\newblock \emph{Games and Economic Behavior}, 63\penalty0 (2):\penalty0
  621--641, 2008.

\bibitem[Daskalakis et~al.(2009)Daskalakis, Goldberg, and
  Papadimitriou]{Daskalakis-2009-complexity}
C.~Daskalakis, P.~W. Goldberg, and C.~H. Papadimitriou.
\newblock The complexity of computing a {N}ash equilibrium.
\newblock \emph{SIAM Journal on Computing}, 39\penalty0 (1):\penalty0 195--259,
  2009.

\bibitem[Dua and Graff(2017)]{Dua:2019}
D.~Dua and C.~Graff.
\newblock {UCI} machine learning repository, 2017.
\newblock URL \url{http://archive.ics.uci.edu/ml}.

\bibitem[Duchi et~al.(2011)Duchi, Hazan, and Singer]{Duchi-2011-Adaptive}
J.~Duchi, E.~Hazan, and Y.~Singer.
\newblock Adaptive subgradient methods for online learning and stochastic
  optimization.
\newblock \emph{Journal of Machine Learning Research}, 12\penalty0 (7), 2011.

\bibitem[EL~Ghaoui et~al.(2003)EL~Ghaoui, Lanckriet, and
  Natsoulis]{El-2003-Robust}
L.~EL~Ghaoui, G.~Lanckriet, and G.~Natsoulis.
\newblock Robust classification with interval data.
\newblock \emph{Technical Report UCB/CSD-03-1279}, 2003.

\bibitem[Facchinei and Pang(2007)]{Facchinei-2007-Finite}
F.~Facchinei and J-S. Pang.
\newblock \emph{Finite-dimensional Variational Inequalities and Complementarity
  Problems}.
\newblock Springer Science \& Business Media, 2007.

\bibitem[Feldman et~al.(2013)Feldman, Fu, Gravin, and
  Lucier]{Feldman-2013-Simultaneous}
M.~Feldman, H.~Fu, N.~Gravin, and B.~Lucier.
\newblock Simultaneous auctions are (almost) efficient.
\newblock In \emph{SOTC}, pages 201--210, 2013.

\bibitem[Fercoq and Richt{\'a}rik(2015)]{Fercoq-2015-Accelerated}
O.~Fercoq and P.~Richt{\'a}rik.
\newblock Accelerated, parallel, and proximal coordinate descent.
\newblock \emph{SIAM Journal on Optimization}, 25\penalty0 (4):\penalty0
  1997--2023, 2015.

\bibitem[Gilboa and Zemel(1989)]{Gilboa-1989-Nash}
I.~Gilboa and E.~Zemel.
\newblock Nash and correlated equilibria: Some complexity considerations.
\newblock \emph{Games and Economic Behavior}, 1\penalty0 (1):\penalty0 80--93,
  1989.

\bibitem[Globerson and Roweis(2006)]{Globerson-2006-Nightmare}
A.~Globerson and S.~Roweis.
\newblock Nightmare at test time: robust learning by feature deletion.
\newblock In \emph{ICML}, pages 353--360, 2006.

\bibitem[Goodfellow et~al.(2015)Goodfellow, Shlens, and
  Szegedy]{Goodfellow-2015-Explaining}
I.~Goodfellow, J.~Shlens, and C.~Szegedy.
\newblock Explaining and harnessing adversarial examples.
\newblock In \emph{ICLR}, 2015.
\newblock URL \url{http://arxiv.org/abs/1412.6572}.

\bibitem[Gottlob et~al.(2007)Gottlob, Greco, and
  Mancini]{Gottlob-2007-Complexity}
G.~Gottlob, G.~Greco, and T.~Mancini.
\newblock Complexity of pure equilibria in {B}ayesian games.
\newblock In \emph{IJCAI}, pages 1294--1299, 2007.

\bibitem[Gro{\ss}hans et~al.(2013)Gro{\ss}hans, Sawade, Br{\"u}ckner, and
  Scheffer]{Grosshans-2013-Bayesian}
M.~Gro{\ss}hans, C.~Sawade, M.~Br{\"u}ckner, and T.~Scheffer.
\newblock Bayesian games for adversarial regression problems.
\newblock In \emph{ICML}, pages 55--63, 2013.

\bibitem[G{\"u}ler(1991)]{Guler-1991-Convergence}
O.~G{\"u}ler.
\newblock On the convergence of the proximal point algorithm for convex
  minimization.
\newblock \emph{SIAM Journal on Control and Optimization}, 29\penalty0
  (2):\penalty0 403--419, 1991.

\bibitem[Halpern(1967)]{Halpern-1967-Fixed}
B.~Halpern.
\newblock Fixed points of nonexpanding maps.
\newblock \emph{Bulletin of the American Mathematical Society}, 73\penalty0
  (6):\penalty0 957--961, 1967.

\bibitem[Harsanyi(1967)]{Harsanyi-1967-Games}
J.~C. Harsanyi.
\newblock Games with incomplete information played by "{B}ayesian" players,
  {I}-{III}.
\newblock \emph{Management Science}, 14\penalty0 (3), 1967.

\bibitem[Huber(2004)]{Huber-2004-Robust}
P.~J. Huber.
\newblock \emph{Robust Statistics}, volume 523.
\newblock John Wiley \& Sons, 2004.

\bibitem[Kakutani(1941)]{Kakutani-1941-Generalization}
S.~Kakutani.
\newblock A generalization of {B}rouwer’s fixed point theorem.
\newblock \emph{Duke Mathematical Journal}, 8\penalty0 (3):\penalty0 457--459,
  1941.

\bibitem[Kinderlehrer and Stampacchia(2000)]{Kinderlehrer-2000-Introduction}
D.~Kinderlehrer and G.~Stampacchia.
\newblock \emph{An Introduction to Variational Inequalities and Their
  Applications}.
\newblock SIAM, 2000.

\bibitem[Kingma and Ba(2015)]{Kingma-2015-Adam}
D.~P. Kingma and J.~Ba.
\newblock Adam: A method for stochastic optimization.
\newblock In \emph{ICLR}, 2015.
\newblock URL \url{http://arxiv.org/abs/1412.6980}.

\bibitem[Kolter and Madry(2018)]{Kolter-2018-Adversarial}
Z.~Kolter and A.~Madry.
\newblock Adversarial robustness: Theory and practice.
\newblock \emph{Tutorial at NeurIPS}, 2018.

\bibitem[Korpelevich(1976)]{Korpelevich-1976-Extragradient}
G.~M. Korpelevich.
\newblock The extragradient method for finding saddle points and other
  problems.
\newblock \emph{Matecon}, 12:\penalty0 747--756, 1976.

\bibitem[Lanckriet et~al.(2002)Lanckriet, EL~Ghaoui, Bhattacharyya, and
  Jordan]{Lanckriet-2002-Robust}
G.~R.~G. Lanckriet, L.~EL~Ghaoui, C.~Bhattacharyya, and M.~I. Jordan.
\newblock A robust minimax approach to classification.
\newblock \emph{Journal of Machine Learning Research}, 3\penalty0
  (Dec):\penalty0 555--582, 2002.

\bibitem[Lions and Stampacchia(1967)]{Lions-1967-Variational}
J-L. Lions and G.~Stampacchia.
\newblock Variational inequalities.
\newblock \emph{Communications on Pure and Applied Mathematics}, 20\penalty0
  (3):\penalty0 493--519, 1967.

\bibitem[Madry et~al.(2018)Madry, Makelov, Schmidt, Tsipras, and
  Vladu]{Madry-2018-Towards}
A.~Madry, A.~Makelov, L.~Schmidt, D.~Tsipras, and A.~Vladu.
\newblock Towards deep learning models resistant to adversarial attacks.
\newblock In \emph{ICLR}, 2018.
\newblock URL \url{https://openreview.net/forum?id=rJzIBfZAb}.

\bibitem[Malitsky(2015)]{Malitsky-2015-Projected}
Y.~Malitsky.
\newblock Projected reflected gradient methods for monotone variational
  inequalities.
\newblock \emph{SIAM Journal on Optimization}, 25\penalty0 (1):\penalty0
  502--520, 2015.

\bibitem[Malitsky and Semenov(2014)]{Malitsky-2014-Extragradient}
Y.~V. Malitsky and V.~V. Semenov.
\newblock An extragradient algorithm for monotone variational inequalities.
\newblock \emph{Cybernetics and Systems Analysis}, 50\penalty0 (2):\penalty0
  271--277, 2014.

\bibitem[Megiddo and Papadimitriou(1991)]{Megiddo-1991-Total}
N.~Megiddo and C.~H. Papadimitriou.
\newblock On total functions, existence theorems and computational complexity.
\newblock \emph{Theoretical Computer Science}, 81\penalty0 (2):\penalty0
  317--324, 1991.

\bibitem[Mertens and Zamir(1985)]{Mertens-1985-Formulation}
J-F. Mertens and S.~Zamir.
\newblock Formulation of bayesian analysis for games with incomplete
  information.
\newblock \emph{International Journal of Game Theory}, 14\penalty0
  (1):\penalty0 1--29, 1985.

\bibitem[Myerson(1985)]{Myerson-1985-Bayesian}
R.~B. Myerson.
\newblock Bayesian equilibrium and incentive-compatibility: An introduction.
\newblock \emph{Social Goals and Social Organization: Essays in Memory of
  Elisha Pazner}, pages 229--260, 1985.

\bibitem[Nadezhkina and Takahashi(2006)]{Nadezhkina-2006-Strong}
N.~Nadezhkina and W.~Takahashi.
\newblock Strong convergence theorem by a hybrid method for nonexpansive
  mappings and {L}ipschitz-continuous monotone mappings.
\newblock \emph{SIAM Journal on Optimization}, 16\penalty0 (4):\penalty0
  1230--1241, 2006.

\bibitem[Nash(1950)]{Nash-1950-Equilibrium}
J.~Nash.
\newblock Equilibrium points in n-person games.
\newblock \emph{Proc. Nat. Acad. Sci.}, 36:\penalty0 48--49, 1950.

\bibitem[Nash(1951)]{Nash-1951-Non}
J.~Nash.
\newblock Non-cooperative games.
\newblock \emph{Annals of Mathematics}, pages 286--295, 1951.

\bibitem[Nemirovsky(1983)]{Nemirovsky-1983-Complexity}
A.~S. Nemirovsky.
\newblock \emph{Problem Complexity and Method Efficiency in Optimization}.
\newblock Wiley, 1983.

\bibitem[Nesterov(2012)]{Nesterov-2012-Efficiency}
Y.~Nesterov.
\newblock Efficiency of coordinate descent methods on huge-scale optimization
  problems.
\newblock \emph{SIAM Journal on Optimization}, 22\penalty0 (2):\penalty0
  341--362, 2012.

\bibitem[Papadimitriou(1994)]{Papadimitriou-1994-Complexity}
C.~H. Papadimitriou.
\newblock On the complexity of the parity argument and other inefficient proofs
  of existence.
\newblock \emph{Journal of Computer and System Sciences}, 48\penalty0
  (3):\penalty0 498--532, 1994.

\bibitem[Pedregosa et~al.(2011)Pedregosa, Varoquaux, Gramfort, Michel, Thirion,
  Grisel, Blondel, Prettenhofer, Weiss, Dubourg, Vanderplas, Passos,
  Cournapeau, Brucher, Perrot, and Duchesnay]{scikit-learn}
F.~Pedregosa, G.~Varoquaux, A.~Gramfort, V.~Michel, B.~Thirion, O.~Grisel,
  M.~Blondel, P.~Prettenhofer, R.~Weiss, V.~Dubourg, J.~Vanderplas, A.~Passos,
  D.~Cournapeau, M.~Brucher, M.~Perrot, and E.~Duchesnay.
\newblock Scikit-learn: Machine learning in {P}ython.
\newblock \emph{Journal of Machine Learning Research}, 12:\penalty0 2825--2830,
  2011.

\bibitem[Popov(1980)]{Popov-1980-Modification}
L.~D. Popov.
\newblock A modification of the {A}rrow-{H}urwicz method for search of saddle
  points.
\newblock \emph{Mathematical notes of the Academy of Sciences of the USSR},
  28\penalty0 (5):\penalty0 845--848, 1980.

\bibitem[Richt{\'a}rik and Tak{\'a}{\v{c}}(2014)]{Richtarik-2014-Iteration}
P.~Richt{\'a}rik and M.~Tak{\'a}{\v{c}}.
\newblock Iteration complexity of randomized block-coordinate descent methods
  for minimizing a composite function.
\newblock \emph{Mathematical Programming}, 144\penalty0 (1-2):\penalty0 1--38,
  2014.

\bibitem[Robbins and Monro(1951)]{Robbins-1951-Stochastic}
H.~Robbins and S.~Monro.
\newblock A stochastic approximation method.
\newblock \emph{The Annals of Mathematical Statistics}, pages 400--407, 1951.

\bibitem[Rosen(1965)]{Rosen-1965-Existence}
J.~B. Rosen.
\newblock Existence and uniqueness of equilibrium points for concave n-person
  games.
\newblock \emph{Econometrica: Journal of the Econometric Society}, pages
  520--534, 1965.

\bibitem[Rubinstein(2018)]{Rubinstein-2018-Inapproximability}
A.~Rubinstein.
\newblock Inapproximability of {N}ash equilibrium.
\newblock \emph{SIAM Journal on Computing}, 47\penalty0 (3):\penalty0 917--959,
  2018.

\bibitem[Saejung and Yotkaew(2012)]{Saejung-2012-Approximation}
S.~Saejung and P.~Yotkaew.
\newblock Approximation of zeros of inverse strongly monotone operators in
  {B}anach spaces.
\newblock \emph{Nonlinear Analysis: Theory, Methods \& Applications},
  75\penalty0 (2):\penalty0 742--750, 2012.

\bibitem[Santurkar et~al.(2019)Santurkar, Ilyas, Tsipras, Engstrom, Tran, and
  Madry]{Santurkar-2019-Image}
S.~Santurkar, A.~Ilyas, D.~Tsipras, L.~Engstrom, B.~Tran, and A.~Madry.
\newblock Image synthesis with a single (robust) classifier.
\newblock In \emph{NeurIPS}, pages 1262--1273, 2019.

\bibitem[Sayed and Chen(2002)]{Sayed-2002-Uniqueness}
A.~H. Sayed and H.~Chen.
\newblock A uniqueness result concerning a robust regularized least-squares
  solution.
\newblock \emph{Systems \& Control Letters}, 46\penalty0 (5):\penalty0
  361--369, 2002.

\bibitem[Singh et~al.(2004)Singh, Soni, and Wellman]{Singh-2004-Computing}
S.~Singh, V.~Soni, and M.~Wellman.
\newblock Computing approximate {B}ayes-{N}ash equilibria in tree-games of
  incomplete information.
\newblock In \emph{EC}, pages 81--90, 2004.

\bibitem[Solodov and Svaiter(1999)]{Solodov-1999-New}
M.~V. Solodov and B.~F. Svaiter.
\newblock A new projection method for variational inequality problems.
\newblock \emph{SIAM Journal on Control and Optimization}, 37\penalty0
  (3):\penalty0 765--776, 1999.

\bibitem[Solodov and Svaiter(2000)]{Solodov-2000-Forcing}
M.~V. Solodov and B.~F. Svaiter.
\newblock Forcing strong convergence of proximal point iterations in a
  {H}ilbert space.
\newblock \emph{Mathematical Programming}, 87\penalty0 (1):\penalty0 189--202,
  2000.

\bibitem[Teo et~al.(2008)Teo, Globerson, Roweis, and Smola]{Teo-2008-Convex}
C.~H. Teo, A.~Globerson, S.~T. Roweis, and A.~J. Smola.
\newblock Convex learning with invariances.
\newblock In \emph{NeurIPS}, pages 1489--1496, 2008.

\bibitem[Tseng(2000)]{Tseng-2000-Modified}
P.~Tseng.
\newblock A modified forward-backward splitting method for maximal monotone
  mappings.
\newblock \emph{SIAM Journal on Control and Optimization}, 38\penalty0
  (2):\penalty0 431--446, 2000.

\bibitem[Tsipras et~al.(2019)Tsipras, Santurkar, Engstrom, Turner, and
  Madry]{Tsipras-2018-Robustness}
D.~Tsipras, S.~Santurkar, L.~Engstrom, A.~Turner, and A.~Madry.
\newblock Robustness may be at odds with accuracy.
\newblock In \emph{ICLR}, 2019.
\newblock URL \url{https://openreview.net/forum?id=SyxAb30cY7}.

\bibitem[Ui(2016)]{Ui-2016-Bayesian}
T.~Ui.
\newblock Bayesian {N}ash equilibrium and variational inequalities.
\newblock \emph{Journal of Mathematical Economics}, 63:\penalty0 139--146,
  2016.

\bibitem[Wong et~al.(2020)Wong, Rice, and Kolter]{Wong-2020-Fast}
E.~Wong, L.~Rice, and J.~Z. Kolter.
\newblock Fast is better than free: Revisiting adversarial training.
\newblock In \emph{ICLR}, 2020.
\newblock URL \url{https://openreview.net/forum?id=BJx040EFvH}.

\bibitem[Wright(2015)]{Wright-2015-Coordinate}
S.~J. Wright.
\newblock Coordinate descent algorithms.
\newblock \emph{Mathematical Programming}, 151\penalty0 (1):\penalty0 3--34,
  2015.

\bibitem[Zhu et~al.(2020)Zhu, Cheng, Gan, Sun, Goldstein, and
  Liu]{Zhu-2020-FreeLB}
C.~Zhu, Y.~Cheng, Z.~Gan, S.~Sun, T.~Goldstein, and J.~Liu.
\newblock Free{LB}: Enhanced adversarial training for natural language
  understanding.
\newblock In \emph{ICLR}, 2020.
\newblock URL \url{https://openreview.net/forum?id=BygzbyHFvB}.

\bibitem[Zinkevich et~al.(2007)Zinkevich, Johanson, Bowling, and
  Piccione]{Zinkevich-2007-Regret}
M.~Zinkevich, M.~Johanson, M.~Bowling, and C.~Piccione.
\newblock Regret minimization in games with incomplete information.
\newblock \emph{NeurIPS}, 20:\penalty0 1729--1736, 2007.

\end{thebibliography}

\newpage\appendix
\section{Further Background Material}
In this section, we provide the basic ideas and some additional background materials for the development of our PRG-IE and PG-RBC algorithms. Some discussions on the relevant algorithms are also included. 

\paragraph{PRG-IE:} We start with a brief overview of the projected reflected gradient algorithm for solving the variational inequality (VI) in Hilbert space. Let $\SCal$ be a nonempty, closed and convex set of a Hilbert space $\HCal$ with the inner product $\langle \cdot, \cdot\rangle_\HCal$, and $T: \HCal \rightarrow \HCal$ be \textit{strictly monotone} and $\ell$-smooth for some constant $\ell >0$: for $\forall\x, \x' \in \HCal$, $\|T(\x) - T(\x')\|_\HCal \leq \ell\|\x-\x'\|_\HCal$. Then, we consider the problem of finding a point $\x_\star \in \SCal$ such that 
\begin{equation}\label{prob:VI-general}
\langle \x - \x_\star, T(\x_\star)\rangle_\HCal \geq 0, \quad \textnormal{for all } \x \in \SCal. 
\end{equation}  
A projected reflected gradient algorithm $\x_{t+1} \leftarrow P_\SCal(\x_t - \gamma \cdot T(2\x_t - \x_{t-1}))$ can be applied for solving this problem where the stepsize $\lambda \in (0, (\sqrt{2}-1)/\ell)$, and $P_\SCal(\cdot)$ is the orthogonal projection onto a closed set $\SCal$. From the update formula, we see that this algorithm has a very simple and elegant structure, which only requires evaluating $T(\cdot)$ and $P_\SCal(\cdot)$ once at each iteration. Thus, it is more computationally appealing than the Korpelevich's extragradient algorithm~\cite{Korpelevich-1976-Extragradient}, Popov's modified Arrow-Hurwicz algorithm~\cite{Popov-1980-Modification}, Tseng's forward-backward splitting algorithm~\cite{Tseng-2000-Modified} and some other algorithms~\cite{Solodov-1999-New, Malitsky-2014-Extragradient}. 

Note that the VI in Eq.~\eqref{prob:VI-infinite} is in the form of Eq.~\eqref{prob:VI-general} with a Hilbert space $\HCal$ consisting of (an equivalence class of) a function $\beta: \br^n \mapsto \br^m \times \br^{n \times m}$ with the inner product $\langle \cdot, \cdot\rangle_\HCal: \HCal \times \HCal \rightarrow \br$ defined by Eq.~\eqref{Def:inner-product} and the mapping $T$ defined by Eq.~\eqref{Eq:mapping-infinite}. Under Assumption~\ref{Assumption:main}, we assume that $T$ is strictly monotone and $\WCal \times \Sigma$ is compact. Then,~\cite[Theorem~3.2]{Malitsky-2015-Projected} guarantees that the sequence generated by projected reflected gradient algorithm weakly converges to a unique Bayesian equilibrium. However, in the infinite-dimensional setting, strong convergence (or norm convergence) is often much more desirable than weak convergence, since it guarantees that the physically tangible property, the error $\|\x_t - \x_\star\|_\HCal^2$ eventually become arbitrarily small~\cite{Bauschke-2001-Weak}. The importance of strong convergence is also demonstrated by~\cite{Guler-1991-Convergence} for convex optimization that the convergence rate of the sequence of objectives $\{f(\x_t)\}_{t \geq 0}$ is better when $\{\x_t\}_{t \geq 0}$ with strong convergence than weak convergence. This encourages the strong convergence theorems for various algorithms in Hilbert space~\cite{Solodov-2000-Forcing, Nadezhkina-2006-Strong}.

\paragraph{PG-RBC:} The variational inequality (VI) in Eq.~\eqref{prob:VI-finite} is the problem of finding a point $(\w_\star, \sigma_\star^1, \ldots, \sigma_\star^K) \in \SCal = \WCal \times \XCal \times \ldots \times \XCal$ such that 
\begin{equation*}
\sum_{k=1}^K p_k\left[\left\langle \w-\w_\star, \nabla_\w\widehat{\theta}_l(\w_\star, \sigma_\star^k, \cb_l)\right\rangle + \left\langle \sigma^k - \sigma_\star^k, \nabla_{\bar{X}}\widehat{\theta}_d(\w_\star, \sigma_\star^k, \sv_k)\right\rangle\right] \ \geq \ 0,
\end{equation*}
for all $\w \in \WCal$ and $\sigma^k \in \XCal$ for all $k \in [K]$. Then, by defining the variable $\x \in \br^m \times \br^{n \times m} \times \ldots \times \br^{n \times m}$ and a mapping $T: \br^m \times \br^{n \times m} \times \ldots \times \br^{n \times m} \mapsto \br^m \times \br^{n \times m} \times \ldots \times \br^{n \times m}$ as follows, 
\begin{equation*}
T\begin{pmatrix} \w \\ \sigma^1 \\ \vdots \\ \sigma^K \end{pmatrix} \ = \ \begin{pmatrix} \sum_{k=1}^K \nabla_\w\widehat{\theta}_l(\w, \sigma^k, \cb_l) \\ \nabla_{\bar{X}}\widehat{\theta}_d(\w, \sigma^1, \sv_1) \\ \vdots \\ \nabla_{\bar{X}}\widehat{\theta}_d(\w, \sigma^K, \sv_K),   \end{pmatrix}
\end{equation*}
we can reformulate the above problem in the compact form as follows, 
\begin{equation*}
\langle \x - \x_\star, T(\x_\star)\rangle_\HCal \geq 0, \quad \textnormal{for all } \x \in \SCal = \WCal \times \XCal \times \ldots \times \XCal. 
\end{equation*} 
A projected gradient algorithm $\x_{t+1} \leftarrow P_\SCal(\x_t - \gamma \cdot T(\x_t))$ can be applied but becomes problematic when the problem dimension $m$, the number of data samples $n$ and the range of a Bayesian prior $K$ are huge. Indeed, the algorithm require performing arithmetic operations of order $nmK$ per iteration and the projection step is another source of inefficiency for huge-size problem. The coordinate update and more generally block coordinate update, which are commonly used to address this issue and improve the computational efficiency, are rooted in the optimization community~\cite{Bertsekas-1989-Parallel}. During the past decade, the \textit{randomized coordinate update} has emerged as one of the most popular coordinate update schemes and were extensively studied~\cite{Nesterov-2012-Efficiency, Richtarik-2014-Iteration, Fercoq-2015-Accelerated}.

\section{Postponed Proofs in Section~\ref{sec:prelim}}
This section lays out the detailed proofs for Lemma~\ref{lemma:VI}, Theorem~\ref{Theorem:VI},~\ref{Theorem:VI-existence} and~\ref{Theorem:existence-uniqueness}, and Corollary~\ref{corollary:VI-existence-first},~\ref{corollary:VI-existence-second},~\ref{corollary:VI-existence-third} and~\ref{corollary:VI-existence-fourth}. 

\paragraph{Proof of Lemma~\ref{lemma:VI}.} Note that $\widehat{\theta}_l(\cdot, \bar{X}, \cb_l): \WCal \rightarrow \br$ is convex and continuously differentiable for each $\bar{X} \in \XCal$. By the Lebesgue monotone convergence theorem, we have
\begin{equation}\label{inequality:VI-first}
\nabla_\w\EE_q[\widehat{\theta}_l(\w_\star, \sigma_\star(\cb_d), \cb_l)] = \EE_q[\nabla_\w\widehat{\theta}_l(\w_\star, \sigma_\star(\cb_d), \cb_l)]. 
\end{equation}
For $\w \in \WCal$, we let $g(t)=\EE_q[\widehat{\theta}_l(\w_\star+t(\w-\w_\star), \sigma_\star(\cb_d), \cb_l)]$. Since $(\w_\star, \sigma_\star) \in \WCal \times \Sigma$ is a Bayesian equilibrium, we have $g(t) \geq g(0)$ for all $t \in \br$. This implies that $g'(0) \geq 0$. By definition,
\begin{eqnarray*}
g'(0) & = & \left\langle \w-\w_\star, \left\{\nabla_\w\EE_q[\widehat{\theta}_l(\w_\star+t(\w-\w_\star), \sigma_\star(\cb_d), \cb_l)]\vert_{t=0}\right\}\right\rangle \\
&= &\langle \w-\w_\star, \nabla_\w\EE_q[\widehat{\theta}_l(\w_\star, \sigma_\star(\cb_d), \cb_l)\rangle \\
& \overset{\textnormal{Eq.}~\eqref{inequality:VI-first}}{=} & \langle \w-\w_\star, \EE_q[\nabla_\w\widehat{\theta}_l(\w_\star, \sigma_\star(\cb_d), \cb_l)]\rangle \ = \ \EE_q[\langle \w-\w_\star, \nabla_\w\widehat{\theta}_l(\w_\star, \sigma_\star(\cb_d), \cb_l)\rangle]. 
\end{eqnarray*}
Putting these pieces together yields the first inequality in Eq.~\eqref{lemma:VI-main}. In addition, notice that the cost function $\widehat{\theta}_d(\w, \cdot, \cb_d): \XCal \rightarrow \br$ is convex and continuously differentiable for each $\w \in \WCal$ and no expectation is involved now: By the similar argument, we obtain the second inequality in Eq.~\eqref{lemma:VI-main}. 

Conversely, we show that Eq.~\eqref{lemma:VI-main} guarantees that the strategy profile $(\w_\star, \sigma_\star) \in \WCal \times \Sigma$ is a Bayesian equilibrium. Note that the function $g(t)=\EE_q[\widehat{\theta}_l(\w_\star+t(\w-\w_\star), \sigma_\star(\cb_d), \cb_l)]$ is convex since $\widehat{\theta}_l(\cdot, \bar{X}, \cb_l): \WCal \rightarrow \br$ is convex for each $\bar{X} \in \XCal$. Thus, $g(t) \geq g(0)+tg'(0)$ for each $t \in \br$. By definition, we have
\begin{equation*}
\EE_q[\widehat{\theta}_l(\w, \sigma_\star(\cb_d), \cb_l)] = g(1) \geq g(0) + tg'(0) = \EE_q[\widehat{\theta}_l(\w_\star, \sigma_\star(\cb_d), \cb_l)] + \EE_q[\langle \w-\w_\star, \nabla_\w\widehat{\theta}_l(\w_\star, \sigma_\star(\cb_d), \cb_l)\rangle]. 
\end{equation*}
Combining the above inequality with the first inequality in Eq.~\eqref{lemma:VI-main}, we have
\begin{equation*}
\EE_q[\widehat{\theta}_l(\w, \sigma_\star(\cb_d), \cb_l)] \ \geq \ \EE_q[\widehat{\theta}_l(\w_\star, \sigma_\star(\cb_d), \cb_l)], \quad \textnormal{for all } \w \in \WCal. 
\end{equation*}
Using a similar argument and the second inequality in Eq.~\eqref{lemma:VI-main}, we have 
\begin{equation*}
\widehat{\theta}_d(\w_\star, \bar{X}, \cb_d) \ \geq \ \widehat{\theta}_d(\w_\star, \sigma_\star(\cb_d), \cb_d), \quad \textnormal{for all } \bar{X} \in \XCal.  
\end{equation*}
This completes the proof. 

\paragraph{Proof of Theorem~\ref{Theorem:VI}.} We first show the ``only if" direction. Indeed, let $(\w_\star, \sigma_\star) \in \WCal \times \Sigma$ be a Bayesian equilibrium for the Bayesian regression game $G = (\WCal, \Sigma, \widehat{\theta}_l, \widehat{\theta}_d, \cb_l, q)$, we derive from Lemma~\ref{lemma:VI} that Eq.~\eqref{lemma:VI-main} holds true. This implies that, for all $(\w, \sigma) \in \WCal \times \Sigma$ and for a.e. $\omega \in \Omega$, 
\begin{eqnarray*}
\EE_q[\langle\w-\w_\star, \nabla_\w\widehat{\theta}_l(\w_\star, \sigma_\star(\cb_d), \cb_l)\rangle] & \geq & 0, \\
\langle \sigma(\cb_d) - \sigma_\star(\cb_d), \nabla_{\bar{X}}\widehat{\theta}_d(\w_\star, \sigma_\star(\cb_d), \cb_d)\rangle & \geq & 0.  
\end{eqnarray*}
Summing up the above two inequalities and taking the expectation over the distribution $q$ yields the desired inequality in Eq.~\eqref{prob:VI-infinite}. Then it suffices to show the ``if" direction. Specifically, we show that if $(\w_\star, \sigma_\star) \in \WCal \times \Sigma$ is not a Bayesian equilibrium for the Bayesian regression game $G = (\WCal, \Sigma, \widehat{\theta}_l, \widehat{\theta}_d, \cb_l, q)$, then Eq.~\eqref{lemma:VI-main} does not hold true. By Lemma~\ref{lemma:VI}, if $(\w_\star, \sigma_\star) \in \WCal \times \Sigma$ is not a Bayesian equilibrium, we have 
\begin{equation}\label{inequality:VI-second}
\EE_q[\langle\w-\w_\star, \nabla_\w\widehat{\theta}_l(\w_\star, \sigma_\star(\cb_d), \cb_l)\rangle] < 0 \quad \textnormal{for some } \w \in \WCal, 
\end{equation}
or there exists $E \subseteq \Omega$ with $\PP(E)>0$ such that, for each $\omega \in E$,  
\begin{equation}\label{inequality:VI-third}
\langle\bar{X} - \sigma_\star(\cb_d), \nabla_{\bar{X}}\widehat{\theta}_d(\w_\star, \sigma_\star(\cb_d), \cb_d)\rangle < 0 \quad \textnormal{for some } \bar{X} \in \XCal.
\end{equation}
Let $(\w', \sigma') \in \WCal \times \Sigma$ be defined by 
\begin{eqnarray*}
\w' & = & \left\{\begin{array}{ll}
\w & \textnormal{if Eq.~\eqref{inequality:VI-second} holds true}, \\
\w_\star  & \textnormal{otherwise}.
\end{array}\right. \\
\sigma'(\cb_d(\omega)) & = & \left\{\begin{array}{ll}
\bar{X} & \textnormal{if Eq.~\eqref{inequality:VI-third} holds true and } \omega \in E, \\
\sigma_\star(\cb_d(\omega))  & \textnormal{otherwise}.
\end{array}\right. 
\end{eqnarray*}
By simple calculations, we have
\begin{equation*}
\EE_q\left[\left\langle \w'-\w_\star, \nabla_\w\widehat{\theta}_l(\w_\star, \sigma_\star(\cb_d), \cb_l)\right\rangle + \left\langle \sigma'(\cb_d) - \sigma_\star(\cb_d), \nabla_{\bar{X}}\widehat{\theta}_d(\w_\star, \sigma_\star(\cb_d), \cb_d)\right\rangle\right] < 0. 
\end{equation*}
This completes the proof.  

\paragraph{Proof of Theorem~\ref{Theorem:VI-existence}.} We provide a key notion of monotonicity which is pivotal in the classical VI literature and summarize in Proposition~\ref{prop:BHS} the celebrated existence theorem for an infinite-dimensional VI.   
\begin{definition}[Monotonicity]
Let $\HCal$ be a Hilbert space with the inner product $\langle \cdot, \cdot\rangle_\HCal$, we define $\SCal \subseteq \HCal$ as a closed and convex set and $T:\SCal \rightarrow \HCal$ as a mapping. Then,  
\begin{enumerate}
\item $T$ is monotone if $\langle T\beta-T\beta', \beta-\beta'\rangle_\HCal \geq 0$ for each $\beta, \beta' \in \SCal$. 
\item $T$ is strictly monotone if $\langle T\beta-T\beta', \beta-\beta'\rangle_\HCal > 0$ for each $\beta, \beta' \in \SCal$ with $\beta \neq \beta'$.  
\item $T$ is $\lambda$-strongly monotone ($\lambda > 0$) if $\langle T\beta-T\beta', \beta-\beta'\rangle_\HCal > \lambda\|\beta-\beta'\|_\HCal^2$ for each $\beta, \beta' \in \SCal$.   
\end{enumerate}
\end{definition} 
\begin{proposition}[Browder-Hartman-Stampacchia]\label{prop:BHS}
Let $\HCal$ be a Hilbert space with the inner product $\langle \cdot, \cdot\rangle_\HCal$. Define $\SCal \subseteq \HCal$ as a nonempty, closed and convex set, and $T: \SCal \rightarrow \HCal$ as a monotone mapping. If the following conditions hold true:
\begin{enumerate}
\item The mapping $t \mapsto \langle T((1-t)\beta+t\beta'), \alpha\rangle_\HCal$ from $[0, 1]$ to $\br$ is continuous for all $\beta, \beta' \in \SCal$ and $\alpha \in \HCal$. 
\item The set $\SCal$ is compact, or there exists $\beta_0 \in \SCal$ such that $\frac{\langle T\beta, \beta-\beta_0\rangle_\HCal}{\|\beta\|_\HCal}\rightarrow +\infty$ as $\|\beta\|_\HCal \rightarrow +\infty$. 
\end{enumerate}
Then, there exists $\bar{\beta} \in \SCal$ such that $\langle T\bar{\beta}, \beta-\bar{\beta}\rangle_\HCal \geq 0$ for all $\beta \in \SCal$. 
\end{proposition}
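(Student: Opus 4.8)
The plan is to follow the classical two-step route for monotone variational inequalities: first remove the unbounded case by a truncation argument that invokes the coercivity hypothesis, reducing matters to a bounded (hence weakly compact) feasible set; then establish solvability on a bounded, closed, convex set by combining \emph{Minty's lemma} with the Ky Fan / KKM intersection principle. A direct fixed-point approach (applying Brouwer to $\beta \mapsto P_\SCal(\beta - T\beta)$) is unavailable here because $T$ is only assumed hemicontinuous, not continuous, and $\HCal$ is infinite-dimensional; this is the central difficulty, and Minty's lemma together with the KKM principle are precisely the devices that circumvent the missing continuity and norm-compactness.

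\emph{Step 1 (truncation).} Suppose $\SCal$ is unbounded but the coercivity condition holds with some $\beta_0 \in \SCal$. For $R > \|\beta_0\|_\HCal$ set $\SCal_R = \SCal \cap \{\beta : \|\beta\|_\HCal \leq R\}$, which is nonempty, closed, bounded and convex. Granting the bounded case below, pick $\bar\beta_R \in \SCal_R$ solving the VI on $\SCal_R$. If $\|\bar\beta_R\|_\HCal < R$, then for any $\beta \in \SCal$ the point $\bar\beta_R + t(\beta - \bar\beta_R)$ lies in $\SCal_R$ for small $t > 0$; substituting it into the VI and letting $t \downarrow 0$ gives $\langle T\bar\beta_R, \beta - \bar\beta_R\rangle_\HCal \geq 0$, so $\bar\beta_R$ solves the VI on all of $\SCal$. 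To see that $\|\bar\beta_R\|_\HCal < R$ for some $R$: otherwise $\|\bar\beta_R\|_\HCal = R$ for all large $R$, and testing the VI at $\beta = \beta_0$ yields $\langle T\bar\beta_R, \bar\beta_R - \beta_0\rangle_\HCal \leq 0$, hence $\langle T\bar\beta_R, \bar\beta_R - \beta_0\rangle_\HCal / \|\bar\beta_R\|_\HCal \leq 0$ while $\|\bar\beta_R\|_\HCal = R \to +\infty$, contradicting coercivity.

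\emph{Step 2 (Minty) and Step 3 (KKM).} Now assume $\SCal$ is bounded, closed and convex (this covers the compact case of the hypothesis, since a bounded closed convex subset of a Hilbert space is weakly compact). First, $\bar\beta \in \SCal$ solves the VI iff $\langle T\beta, \beta - \bar\beta\rangle_\HCal \geq 0$ for all $\beta \in \SCal$: the forward direction is immediate from monotonicity, $\langle T\beta, \beta - \bar\beta\rangle_\HCal = \langle T\beta - T\bar\beta, \beta - \bar\beta\rangle_\HCal + \langle T\bar\beta, \beta - \bar\beta\rangle_\HCal \geq 0$; for the converse, fix $\beta$, set $\beta_t = (1-t)\bar\beta + t\beta \in \SCal$, apply the assumption to $\beta_t$ to get $t\langle T\beta_t, \beta - \bar\beta\rangle_\HCal \geq 0$, divide by $t>0$ and let $t\downarrow 0$, using hemicontinuity (condition~1 along the segment from $\bar\beta$ to $\beta$ with test vector $\beta - \bar\beta$). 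So it suffices to find $\bar\beta \in \bigcap_{\beta \in \SCal} E(\beta)$, where $E(\beta) := \{\gamma \in \SCal : \langle T\beta, \beta - \gamma\rangle_\HCal \geq 0\}$. Each $E(\beta)$ is nonempty ($\beta \in E(\beta)$), convex, and norm-closed (being $\SCal$ intersected with a closed half-space), hence weakly closed by Mazur's theorem, hence weakly compact inside the weakly compact $\SCal$. The family $\{E(\beta)\}_{\beta}$ has the KKM property: if $\gamma = \sum_i \lambda_i \beta_i$ ($\lambda_i \geq 0$, $\sum_i \lambda_i = 1$) lay outside every $E(\beta_i)$, then $\langle T\beta_i, \beta_i - \gamma\rangle_\HCal < 0$ for all $i$, and monotonicity gives $\langle T\gamma, \beta_i - \gamma\rangle_\HCal \leq \langle T\beta_i, \beta_i - \gamma\rangle_\HCal < 0$ whenever $\lambda_i > 0$; weighting by $\lambda_i$ and summing gives $0 = \langle T\gamma, \gamma - \gamma\rangle_\HCal = \sum_i \lambda_i \langle T\gamma, \beta_i - \gamma\rangle_\HCal < 0$, a contradiction. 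Ky Fan's intersection lemma (the infinite-dimensional KKM principle, applied in the Hausdorff weak topology) then yields $\bigcap_{\beta \in \SCal} E(\beta) \neq \emptyset$, and any point there solves the VI.

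The main obstacle — and the only delicate part — is carrying this out with only hemicontinuity and without norm-compactness: one must see that Minty's reformulation is what replaces the "bad" assignment $\gamma \mapsto T\gamma$ (for which $\{\gamma : \langle T\gamma, \gamma - \beta\rangle_\HCal \leq 0\}$ need not be closed) by the "good" family $E(\beta)$ with the mapping evaluated at a fixed argument, and one must run the KKM lemma in the weak topology, checking weak closedness/compactness rather than the norm-topology analogues. If one prefers to avoid KKM, an alternative for the bounded case is a Galerkin argument — solve the finite-dimensional VIs on $\SCal \cap F$ over an increasing net of finite-dimensional subspaces $F$ (each solvable via Brouwer and the projection map), extract a weak cluster point, and pass to the limit through the Minty reformulation — but the KKM route is shorter and is the standard proof.
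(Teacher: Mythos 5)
Your argument is correct, but it is worth noting that the paper does not prove this proposition at all: it is imported as the classical Browder--Hartman--Stampacchia existence theorem (the ``celebrated existence theorem for an infinite-dimensional VI'') and is only \emph{used} to establish Theorem~\ref{Theorem:VI-existence}. What you have written is essentially the standard textbook proof of that classical result, and it goes through: the truncation step correctly reduces the coercive unbounded case to the bounded case (testing the VI on $\SCal_R$ at $\beta_0$ and using $\|\bar\beta_R\|_\HCal = R \to +\infty$ contradicts coercivity, and an interior solution of the truncated VI solves the full VI by the usual segment argument); the Minty equivalence is exactly where hypothesis~1 (hemicontinuity along segments) enters, via the test vector $\alpha = \beta - \bar\beta$; and the KKM step is sound, since each $E(\beta)$ is weakly closed (indeed you do not even need Mazur here --- $E(\beta)$ is $\SCal$ intersected with a half-space cut out by the continuous linear functional $\gamma \mapsto \langle T\beta, \gamma\rangle_\HCal$, which is weakly closed outright), $\SCal$ bounded closed convex is weakly compact by reflexivity, and monotonicity gives the KKM property of the family $\{E(\beta)\}$, so Fan's lemma in the weak topology produces a point of $\bigcap_\beta E(\beta)$, which Minty converts into a VI solution. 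Your remark that the naive sets $\{\gamma : \langle T\gamma, \gamma - \beta\rangle_\HCal \leq 0\}$ need not be closed under mere hemicontinuity correctly identifies why the Minty reformulation is essential, and the Galerkin alternative you sketch is a legitimate substitute for the bounded case. In short: the paper cites, you prove, and your proof is the standard and correct one.
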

Let us consider the Bayesian regression game $G = (\WCal, \Sigma, \widehat{\theta}_l, \widehat{\theta}_d, \cb_l, q)$ and its Bayesian equilibrium in terms of Eq.~\eqref{prob:VI-infinite}. Then, we can define a Hilbert space $\HCal$ consisting of (an equivalence class of) a function $\beta: \br^n \mapsto \br^m \times \br^{n \times m}$ with the inner product $\langle \cdot, \cdot\rangle_\HCal: \HCal \times \HCal \rightarrow \br$ by
\begin{equation}\label{Def:inner-product}
\left\langle\begin{pmatrix} \w \\ \sigma\end{pmatrix}, \begin{pmatrix} \w' \\ \sigma' \end{pmatrix}\right\rangle_\HCal \ = \ \EE_q[\langle\w(\cb_d), \w'(\cb_d)\rangle + \langle \sigma(\cb_d), \sigma'(\cb_d)\rangle].    
\end{equation}
Note that each element in $\WCal \subseteq \br^m$ can be regarded as a constant function from $\br^n$ to $\br^m$ whose value equal to this element. Then, we denote the set of these constant function by $\Sigma_\WCal$ (an equivalence class of $\WCal$) and define the mapping $T: \Sigma_\WCal \times \Sigma \rightarrow \HCal$ as follows, 
\begin{equation}\label{Eq:mapping-infinite}
T\begin{pmatrix} \w \\ \sigma(\cdot) \end{pmatrix} \ = \ \begin{pmatrix} \nabla_\w\widehat{\theta}_l(\w, \sigma(\cdot), \cb_l) \\ \nabla_{\bar{X}}\widehat{\theta}_d(\w, \sigma(\cdot), \cdot) \end{pmatrix} \ \in \ \HCal,   
\end{equation}
To this end, the computation of a Bayesian equilibrium is equivalent to solving a VI in the space $\HCal$. This allows us to analyze the existence and uniqueness of a Bayesian equilibrium under the Browder-Hartman-Stampacchia's VI framework (cf. Proposition~\ref{prop:BHS}). For example, the existence of a Bayesian equilibrium is guaranteed by the continuity and monotonicity of $T$ as well as some additional conditions on $\WCal \times \Sigma$.

To prove the existence of a solution, we show that Eq.~\eqref{prob:VI-infinite} is a special case of the Browder-Hartman-Stampacchia VIs. Indeed, we set a Hilbert space $\HCal$ consisting of (an equivalence class of) a function $\beta: \br^n \mapsto \br^m \times \br^{n \times m}$ with the inner product $\langle \cdot, \cdot\rangle_\HCal: \HCal \times \HCal \rightarrow \br$ defined by Eq.~\eqref{Def:inner-product}. By abuse of notation, any element $\w \in \WCal$ define a constant function $\w(\cdot) \in \Sigma_\WCal$ and vice versa. This implies that $\Sigma_\WCal$ is an equivalent class of $\WCal$. Thus, $\SCal = \Sigma_\WCal \times \Sigma$ is a nonempty, closed and convex subset of $\HCal$. In addition, a mapping $T: \SCal \mapsto \HCal$ defined by Eq.~\eqref{Eq:mapping-infinite} is continuous and monotone. Putting these pieces together with either the compactness of $\SCal$ or the ccertain ondition with Eq.~\eqref{Condition:VI-existence} yields that all the assumptions in Proposition~\ref{prop:BHS} hold true and Eq.~\eqref{prob:VI-infinite} is a special case of the Browder-Hartman-Stampacchia VIs. Therefore, we conclude from Proposition~\ref{prop:BHS} that the VI in Eq.~\eqref{prob:VI-infinite} has at least one solution. This completes the proof.

\paragraph{Proof of Corollary~\ref{corollary:VI-existence-first}.} Under Assumption~\ref{Assumption:main}, Theorem~\ref{Theorem:VI} implies that a Bayesian equilibrium must be a solution of the VI in Eq.~\eqref{prob:VI-infinite}, Thus, it suffices to verify the assumptions in Theorem~\ref{Theorem:VI-existence}. Indeed, Assumption~\ref{Assumption:main} guarantees that $T$ defined by Eq.~\eqref{Eq:mapping-infinite} is continuous. In addition, $T$ is monotone and there exists $(\w_0, \sigma_0) \in \WCal \times \Sigma$ such that, for all $(\w, \sigma) \in \WCal \times \Sigma$ satisfying $\|\w\|^2+\EE_q[\|\sigma(\cb_d)\|_F^2] \rightarrow +\infty$, Eq.~\eqref{Condition:VI-existence} holds true. Therefore, we conclude the desired result.    

\paragraph{Proof of Corollary~\ref{corollary:VI-existence-second}.} The proof is nearly the same as that of Corollary~\ref{corollary:VI-existence-first} and the only difference is that, we assume the compactness of the action space $\WCal \times \Sigma$, which is another condition of Theorem~\ref{Theorem:VI-existence}. Thus, by using Theorem~\ref{Theorem:VI-existence} again, we conclude the desired result. 

\paragraph{Proof of Theorem~\ref{Theorem:existence-uniqueness}.}Theorem~\ref{Theorem:VI-existence} guarantees the existence of at least one solution of the VI in Eq.~\eqref{prob:VI-infinite}. Thus, it suffices to show that at most one solution exists if $T$ defined by Eq.~\eqref{Eq:mapping-infinite} is strictly monotone. Using the proof by contradiction, suppose that the VI in Eq.~\eqref{prob:VI-infinite} has two different solutions. Given the inner product $\langle\cdot, \cdot\rangle_\HCal$ is defined by Eq.~\eqref{Def:inner-product}, we have  
\begin{equation*}
\left\langle\begin{pmatrix} \w_1-\w_2 \\ \sigma_1 - \sigma_2 \end{pmatrix}, T\begin{pmatrix} \w_1 \\ \sigma_1 \end{pmatrix}\right\rangle_\HCal \ \geq \ 0, \qquad \left\langle\begin{pmatrix} \w_2-\w_1 \\ \sigma_2 - \sigma_1 \end{pmatrix}, T\begin{pmatrix} \w_2 \\ \sigma_2 \end{pmatrix}\right\rangle_\HCal \ \geq \ 0. 
\end{equation*}
Summing up the above two inequalities yields:
\begin{equation*}
\left\langle \begin{pmatrix} \w_1 - \w_2 \\ \sigma_1 - \sigma_2 \end{pmatrix}, T\begin{pmatrix} \w_1 \\ \sigma_1 \end{pmatrix} - T\begin{pmatrix} \w_2 \\ \sigma_2 \end{pmatrix}\right\rangle_\HCal  \ \geq \ 0. 
\end{equation*} 
Since a mapping $T: \Sigma_\WCal \times \Sigma \rightarrow \br^m \times \Sigma_0$ is strictly monotone, we have $\w_1=\w_2$ and $\sigma_1=\sigma_2$ almost everywhere. This leads to a contradiction and completes the proof.

\paragraph{Proof of Corollary~\ref{corollary:VI-existence-third}.} Under Assumption~\ref{Assumption:main}, Theorem~\ref{Theorem:VI} implies that a Bayesian equilibrium must be a solution of the VI in Eq.~\eqref{prob:VI-infinite}, Thus, it suffices to verify the assumptions in Theorem~\ref{Theorem:existence-uniqueness}. Note that $T$ defined by Eq.~\eqref{Eq:mapping-infinite} is $\lambda$-strongly monotone and thus strictly monotone. In order to prove the uniqueness of a Bayesian equilibrium using Theorem~\ref{Theorem:existence-uniqueness}, it remains to show that there exists $(\w_0, \sigma_0) \in \WCal \times \Sigma$ such that, for all $(\w, \sigma) \in \WCal \times \Sigma$ satisfying $\|\w\|^2+\EE_q[\|\sigma(\cb_d)\|_F^2] \rightarrow +\infty$, Eq.~\eqref{Condition:VI-existence} holds true. Since $T$ is $\lambda$-strongly monotone, we have
\begin{eqnarray*}
& & \EE_q\left[\left\langle \w-\w_0, \nabla_\w\widehat{\theta}_l(\w, \sigma(\cb_d), \cb_l)\right\rangle + \left\langle \sigma(\cb_d) - \sigma_0(\cb_d), \nabla_{\bar{X}}\widehat{\theta}_d(\w, \sigma(\cb_d), \cb_d)\right\rangle\right] \\ 
& &\overset{\textnormal{Eq.}~\eqref{Def:inner-product}}{=}  \left\langle\begin{pmatrix} \w-\w_0 \\ \sigma - \sigma_0 \end{pmatrix}, T\begin{pmatrix} \w \\ \sigma \end{pmatrix}\right\rangle_\HCal \\
&&\ \geq \ \left\langle\begin{pmatrix} \w-\w_0 \\ \sigma - \sigma_0 \end{pmatrix}, T\begin{pmatrix} \w_0 \\ \sigma_0 \end{pmatrix}\right\rangle_\HCal + \lambda\left(\|\w-\w_0\|^2 + \EE_q[\|\sigma(\cb_d)-\sigma_0(\cb_d)\|_F^2]\right). 
\end{eqnarray*}
Let $(\w, \sigma) \in \WCal \times \Sigma$ satisfy $\|\w\|^2+\EE_q[\|\sigma(\cb_d)\|_F^2] \rightarrow +\infty$ and $(\w_0, \sigma_0) \in \WCal \times \Sigma$ be fixed, we have
\begin{equation*}
\frac{\left\langle\begin{pmatrix} \w-\w_0 \\ \sigma - \sigma_0 \end{pmatrix}, T\begin{pmatrix} \w_0 \\ \sigma_0 \end{pmatrix}\right\rangle_\HCal}{\sqrt{\|\w\|^2+\EE_q[\|\sigma(\cb_d)\|_F^2]}} \ \geq \ -C, \quad \textnormal{for some universal constant } C > 0, 
\end{equation*}
and 
\begin{equation*}
\frac{\|\w-\w_0\|^2 + \EE_q[\|\sigma(\cb_d)-\sigma_0(\cb_d)\|_F^2]}{\sqrt{\|\w\|^2+\EE_q[\|\sigma(\cb_d)\|_F^2]}} \ \longrightarrow \ +\infty. 
\end{equation*}
This implies that Eq.~\eqref{Condition:VI-existence} holds true, which completes the proof. 

\paragraph{Proof of Corollary~\ref{corollary:VI-existence-fourth}}
The proof is nearly the same as that of Corollary~\ref{corollary:VI-existence-third} and we need to verify the assumptions in Theorem~\ref{Theorem:existence-uniqueness}. Note that $T$ defined by Eq.~\eqref{Eq:mapping-infinite} is strictly monotone and $\WCal \times \Sigma$ is compact. Thus, by using Theorem~\ref{Theorem:existence-uniqueness}, we conclude the desired result.

\section{Postponed Proofs in Section~\ref{sec:alg_inertial}}
In this section, we provide the detailed proof for Theorem~\ref{Theorem:PRGIE}. We start by reviewing two preliminary results in the literature which are established as~\cite[Lemma~2.6]{Saejung-2012-Approximation} and the Minty's lemma in~\cite{Bauschke-2011-Convex} respectively. 
\begin{lemma}\label{lemma:sequence}
Let $\{s_t\}_{t \geq 0}$ be a sequence of nonnegative real numbers, $\{a_t\}_{t \geq 0}$ be a sequence in $(0, 1)$ such that $\sum_{t=0}^{+\infty} a_t = +\infty$ and $\{b_t\}_{t \geq 0}$ be a sequence of real numbers. Suppose that $s_{t+1} \leq (1-a_t)s_t + a_tb_t$ for all $t \geq 0$. If $\limsup_{j \rightarrow +\infty} b_{t_j} \leq 0$ for every subsequence $\{s_{t_j}\}_{j \geq 0}$ of $\{s_t\}_{t \geq 0}$ satisfying that $\liminf_{j \rightarrow +\infty} (s_{t_j+1} - s_{t_j}) \geq 0$, then $s_t \rightarrow 0$ as $t \rightarrow +\infty$. 
\end{lemma}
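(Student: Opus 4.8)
The plan is to prove $s_t \to 0$ by splitting into two cases according to whether $\{s_t\}$ is eventually non-increasing; the only algebraic fact used is the rearranged recursion: since $a_t \in (0,1)$, we have $a_t(s_t - b_t) \le s_t - s_{t+1}$ for every $t \ge 0$.

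\textbf{Case 1: $\{s_t\}$ is eventually non-increasing.} Then $s_t$ converges to some limit $s^\star \ge 0$, and the tail $\{s_t\}_{t \ge N}$ is itself a subsequence whose increments $s_{t+1}-s_t$ tend to $0$, so $\liminf_t (s_{t+1}-s_t) = 0 \ge 0$. The hypothesis of the lemma then gives $\limsup_t b_t \le 0$. If $s^\star > 0$, pick $N'$ with $s_t - b_t \ge s^\star/2$ for all $t \ge N'$ (possible since $s_t \to s^\star$ and $\limsup_t b_t \le 0$); summing the rearranged recursion from $N'$ to $T$ gives $(s^\star/2)\sum_{t=N'}^{T} a_t \le \sum_{t=N'}^{T}(s_t - s_{t+1}) = s_{N'} - s_{T+1} \le s_{N'}$, which is impossible as $T\to\infty$ because $\sum_t a_t = +\infty$. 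Hence $s^\star = 0$, i.e.\ $s_t \to 0$.

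\textbf{Case 2: $\{s_t\}$ is not eventually non-increasing.} Then the index set $I = \{t : s_t \le s_{t+1}\}$ is infinite, and I would invoke Maingé's construction: for $t$ beyond some $N$, set $\tau(t) = \max\{k \le t : k \in I\}$. This $\tau$ is nondecreasing, $\tau(t) \to \infty$, and it satisfies $s_{\tau(t)} \le s_{\tau(t)+1}$ as well as $s_t \le s_{\tau(t)+1}$ (the latter since, when $\tau(t)<t$, every index $k$ with $\tau(t) < k \le t$ lies outside $I$, so $s_{\tau(t)+1} > s_{\tau(t)+2} > \dots > s_t$, while the case $\tau(t)=t$ is immediate). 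Evaluating the recursion at index $\tau(t)$ and using $s_{\tau(t)} \le s_{\tau(t)+1}$ yields $a_{\tau(t)} s_{\tau(t)} \le a_{\tau(t)} b_{\tau(t)}$, hence $s_{\tau(t)} \le b_{\tau(t)}$. Along $\{s_{\tau(t)}\}$ the increments are $\ge 0$, so—after passing to the genuine strictly increasing subsequence carved out by the values of $\tau$—the hypothesis of the lemma gives $\limsup_t b_{\tau(t)} \le 0$; together with $0 \le s_{\tau(t)} \le b_{\tau(t)}$ this forces $s_{\tau(t)} \to 0$ and $b_{\tau(t)} \to 0$. Feeding this back into $s_{\tau(t)+1} \le (1-a_{\tau(t)})s_{\tau(t)} + a_{\tau(t)}b_{\tau(t)}$ gives $s_{\tau(t)+1}\to 0$, and finally $0 \le s_t \le s_{\tau(t)+1} \to 0$.

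\textbf{Where the difficulty lies.} The genuinely delicate step is Case 2: constructing the index map $\tau$ and verifying its two properties, especially $s_t \le s_{\tau(t)+1}$, which is precisely what transports the control obtained along the ``almost-increasing'' subsequence back to the whole sequence, and checking that $\{s_{\tau(t)}\}$ furnishes a legitimate subsequence satisfying the $\liminf$-of-increments condition required to invoke the lemma's hypothesis. Case 1, together with the telescoping and the use of $\sum_t a_t = +\infty$, is routine.
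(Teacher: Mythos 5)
Your proof is correct: the paper does not prove this lemma itself but quotes it as Lemma~2.6 of Saejung--Yotkaew, whose argument is precisely the Maing\'e-type case split you reproduce, so your route coincides with the standard one the paper relies on. Both your Case~1 (telescoping $a_t(s_t-b_t)\le s_t-s_{t+1}$ against $\sum_t a_t=+\infty$) and your Case~2 (the index map $\tau(t)=\max\{k\le t: s_k\le s_{k+1}\}$ with $s_{\tau(t)}\le s_{\tau(t)+1}$, $s_t\le s_{\tau(t)+1}$, and the passage to the strictly increasing subsequence of values of $\tau$ before invoking the hypothesis) are carried out correctly, with no gaps.
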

\begin{lemma}[Minty]\label{lemma:minty}
Let $T: \HCal \rightarrow \HCal$ be a continuous and monotone mapping on a closed and convex subset $\SCal$. Then $\widehat{\x}$ is a solution of the VI in Eq.~\eqref{prob:VI-general} if and only if $\langle \x - \widehat{\x}, T(\x)\rangle_\HCal \geq 0$ for all $\x \in \SCal$.  
\end{lemma}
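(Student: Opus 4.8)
The plan is to prove the stated equivalence as two separate implications, noting that they draw on different hypotheses. Writing the VI in Eq.~\eqref{prob:VI-general} at the point $\widehat{\x}$, the statement ``$\widehat{\x}$ is a solution'' means $\langle \x - \widehat{\x}, T(\widehat{\x})\rangle_\HCal \geq 0$ for all $\x \in \SCal$ (the Stampacchia form), while the target condition $\langle \x - \widehat{\x}, T(\x)\rangle_\HCal \geq 0$ for all $\x \in \SCal$ is the Minty form. I would establish the forward implication (Stampacchia $\Rightarrow$ Minty) using only monotonicity, and the reverse implication (Minty $\Rightarrow$ Stampacchia) using continuity together with the convexity of $\SCal$.

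For the forward direction, suppose $\langle \x - \widehat{\x}, T(\widehat{\x})\rangle_\HCal \geq 0$ for every $\x \in \SCal$. Monotonicity of $T$ gives $\langle T(\x) - T(\widehat{\x}), \x - \widehat{\x}\rangle_\HCal \geq 0$ for each such $\x$, and adding this to the assumed inequality yields $\langle \x - \widehat{\x}, T(\x)\rangle_\HCal \geq \langle \x - \widehat{\x}, T(\widehat{\x})\rangle_\HCal \geq 0$, which is precisely the Minty condition. This half uses neither continuity nor convexity.

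For the reverse direction, suppose the Minty condition holds and fix an arbitrary $\x \in \SCal$. The key device is to test the Minty inequality along the segment toward $\widehat{\x}$: for $t \in (0, 1]$ set $\x_t = (1-t)\widehat{\x} + t\x$, which lies in $\SCal$ by convexity. Substituting $\x_t$ into the Minty condition and using $\x_t - \widehat{\x} = t(\x - \widehat{\x})$ gives $t\,\langle \x - \widehat{\x}, T(\x_t)\rangle_\HCal \geq 0$; dividing by $t > 0$ and letting $t \to 0^+$, continuity of $T$ ensures $T(\x_t) \to T(\widehat{\x})$, so by continuity of the inner product $\langle \x - \widehat{\x}, T(\widehat{\x})\rangle_\HCal \geq 0$. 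Since $\x \in \SCal$ was arbitrary, $\widehat{\x}$ solves the VI in Eq.~\eqref{prob:VI-general}.

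The main (and essentially only) obstacle is the limiting argument in the reverse direction, where I must confirm that the segment $\x_t$ remains feasible---guaranteed by convexity of $\SCal$---and that passing to the limit is justified---guaranteed by continuity of $T$. Notably, only continuity of $T$ restricted to each line segment is actually needed, so the full continuity hypothesis is more than sufficient; everything else is routine manipulation of the monotonicity inequality.
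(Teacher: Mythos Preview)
Your proof is correct and is exactly the classical argument for Minty's lemma. Note, however, that the paper does not supply its own proof of this statement: it is listed among the preliminary results and simply attributed to~\cite{Bauschke-2011-Convex}, so there is no ``paper's proof'' to compare against. Your argument is the standard one found in that reference.
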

First, we show that the sequences $\{(\widetilde{\w}_t, \widetilde{\sigma}_t)\}_{t \geq 1}$ and $\{(\w_t, \sigma_t)\}_{t \geq 1}$ generated by Algorithm~\ref{Algorithm:PRG-IE} are both bounded in the following lemma.  
\begin{lemma}\label{lemma:boundedness}
Under Assumption~\ref{Assumption:main} and~\ref{Assumption:PRG-IE}, there exists a constant $M>0$ such that the sequences $\{(\widetilde{\w}_t, \widetilde{\sigma}_t)\}_{t \geq 0}$ and $\{(\w_t, \sigma_t)\}_{t \geq 1}$ generated by Algorithm~\ref{Algorithm:PRG-IE} satisfies that
\begin{equation*}
\|\widetilde{\w}_t\|^2 + \EE_q[\|\widetilde{\sigma}_t(\cb_d)\|_F^2] \leq M \quad \textnormal{and} \quad \|\w_t\|^2 + \EE_q[\|\sigma_t(\cb_d)\|_F^2] \leq M \quad \textnormal{for all } t \geq 1. 
\end{equation*}
\end{lemma}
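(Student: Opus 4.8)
The plan is to show that \emph{every} iterate produced by Algorithm~\ref{Algorithm:PRG-IE} stays inside the compact constraint sets---pointwise, in the case of $\sigma$---after which boundedness is immediate with an explicit constant. Concretely, using Assumption~\ref{Assumption:PRG-IE}(i), set $R_\WCal := \max_{\w \in \WCal}\|\w\|$ and $R_\XCal := \max_{\bar{X} \in \XCal}\|\bar{X}\|_F$; both are finite by compactness, and I will prove the claim with $M := R_\WCal^2 + R_\XCal^2$. Note that for this particular lemma neither the monotonicity nor the Lipschitz part of Assumption~\ref{Assumption:PRG-IE} is needed---only the geometry of $\WCal$ and $\XCal$ and the structure of the updates.

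First I would handle the auxiliary sequence $\{(\widetilde{\w}_t, \widetilde{\sigma}_t)\}$. The update for $\widetilde{\w}_{t+1}$ is an orthogonal projection onto $\WCal$, so $\widetilde{\w}_{t+1} \in \WCal$ and hence $\|\widetilde{\w}_{t+1}\| \leq R_\WCal$. For $\widetilde{\sigma}_{t+1}$ I would first record the standard fact that the projection $P_\Sigma$ onto the set of (equivalence classes of) functions valued in $\XCal$ acts pointwise, i.e.\ $P_\Sigma(\beta)(\cb_d) = P_\XCal(\beta(\cb_d))$ for a.e.\ $\cb_d$; this holds because $\Sigma$ is a ``product'' set in the Hilbert space $\HCal$ and minimizing $\EE_q[\|\beta(\cb_d) - \cdot\|_F^2]$ decouples across $\cb_d$. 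Consequently $\widetilde{\sigma}_{t+1}(\cb_d) \in \XCal$ a.e., so $\EE_q[\|\widetilde{\sigma}_{t+1}(\cb_d)\|_F^2] \leq R_\XCal^2$. Combining this with the initialization $\widetilde{\w}_0, \widetilde{\w}_1 \in \WCal$ and $\widetilde{\sigma}_0, \widetilde{\sigma}_1 \in \Sigma$ gives $\|\widetilde{\w}_t\|^2 + \EE_q[\|\widetilde{\sigma}_t(\cb_d)\|_F^2] \leq M$ for all $t \geq 0$.

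Next I would treat $\{(\w_t, \sigma_t)\}$ by induction on $t$, crucially invoking $\zero_m \in \WCal$ and $\zero_{n \times m} \in \XCal$ from Assumption~\ref{Assumption:PRG-IE}(i). The base case $\w_1 \in \WCal$, $\sigma_1 \in \Sigma$ is part of the initialization. For the inductive step, since $\delta_t = 1/t \in (0,1]$, I rewrite the averaging step as
\[
\w_{t+1} \ = \ \tfrac{\delta_t}{2}\,\w_t \ + \ (1-\delta_t)\,\widetilde{\w}_{t+1} \ + \ \tfrac{\delta_t}{2}\,\zero_m,
\]
whose three coefficients are nonnegative and sum to $1$; since $\w_t, \widetilde{\w}_{t+1}, \zero_m \in \WCal$ and $\WCal$ is convex, $\w_{t+1} \in \WCal$. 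The identical argument, applied pointwise and using the pointwise characterization of $P_\Sigma$ together with convexity of $\XCal$ and $\zero_{n \times m} \in \XCal$, shows $\sigma_{t+1}(\cb_d) \in \XCal$ a.e. Hence $\|\w_t\|^2 + \EE_q[\|\sigma_t(\cb_d)\|_F^2] \leq R_\WCal^2 + R_\XCal^2 = M$ for all $t \geq 1$.

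The only point requiring care---rather than a genuine obstacle---is the pointwise description of the projection $P_\Sigma$ onto the infinite-dimensional set $\Sigma \subseteq \HCal$ and the verification that $P_\Sigma$ is well-defined (which rests on $\Sigma$ being closed and convex in $\HCal$, itself a consequence of $\XCal$ being closed and convex). Once that is in hand, the argument reduces to compactness plus the observation that the Halpern-type averaging step is a genuine convex combination of points already known to lie in $\WCal \times \XCal$.
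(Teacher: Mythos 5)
Your proposal is correct and follows essentially the same route as the paper: the projection step keeps $(\widetilde{\w}_t,\widetilde{\sigma}_t)$ in $\WCal\times\Sigma$, the Halpern averaging step is a convex combination of $(\w_t,\sigma_t)$, $(\widetilde{\w}_{t+1},\widetilde{\sigma}_{t+1})$ and the zero element (using $\zero_m\in\WCal$, $\zero_{n\times m}\in\XCal$, and convexity), and compactness of $\WCal$ and $\XCal$ then yields the uniform bound. Your extra care about the pointwise action and well-definedness of $P_\Sigma$ is a welcome refinement of a step the paper leaves implicit, but it is not a different argument.
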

\begin{proof}
By the compactness of actions spaces $\WCal$ and $\XCal$, it suffices to show that $(\widetilde{\w}_t, \widetilde{\sigma}_t) \in \WCal \times \Sigma$ for all $t \geq 0$ and $(\w_t, \sigma_t) \in \WCal \times \Sigma$ for all $t \geq 1$. Indeed, the initialization step implies that $\widetilde{\w}_0, \widetilde{\w}_1, \w_1 \in \WCal$ and $\widetilde{\sigma}_0, \widetilde{\sigma}_1, \sigma_1 \in \Sigma$. Then, by the updating formula for $\{(\widetilde{\w}_t, \widetilde{\sigma}_t)\}_{t \geq 2}$, it is clear that for $(\widetilde{\w}_t, \widetilde{\sigma}_t) \in \WCal \times \Sigma$ for all $t \geq 0$. It remains to show that $(\w_t, \sigma_t) \in \WCal \times \Sigma$ for all $t \geq 2$. Since $\zero_{n \times m} \in \XCal$, we have $\zero_{n \times m}(\cdot) \in \Sigma$ where $\zero_{n \times m}(\cdot)$ is a constant function from $\br^n$ to $\br^{n \times m}$ with value $\zero_{n \times m}$. By the convexity of $\WCal$ and $\XCal$, we have the convexity of $\WCal \times \Sigma$. By the updating formula for $\{(\w_t, \sigma_t)\}_{t \geq 2}$, we find that $(\w_{t+1}, \sigma_{t+1})$ is a convex combination of $(\widetilde{\w}_{t+1}, \widetilde{\sigma}_{t+1})$, $(\w_t, \sigma_t)$ and $(\zero_m, \zero_{n \times m}(\cdot))$. This together with the convexity of $\WCal \times \Sigma$ implies the desired result. This completes the proof.
\end{proof}
We then present an important descent inequality for the sequences $\{(\widetilde{\w}_t, \widetilde{\sigma}_t)\}_{t \geq 0}$ and $\{(\w_t, \sigma_t)\}_{t \geq 1}$ generated by Algorithm~\ref{Algorithm:PRG-IE}. Note that our result can not be derived from~\cite[Lemma~3.1]{Malitsky-2015-Projected} due to the Halpern-type inertial extrapolation. 
\begin{lemma}\label{lemma:key-descent}
Under Assumption~\ref{Assumption:main} and~\ref{Assumption:PRG-IE}, the sequences $\{(\widetilde{\w}_t, \widetilde{\sigma}_t)\}_{t \geq 0}$ and $\{(\w_t, \sigma_t)\}_{t \geq 1}$ generated by Algorithm~\ref{Algorithm:PRG-IE} satisfies that
\begin{eqnarray*}
\|\widetilde{\w}_{t+1}-\w_\star\|^2+\EE_q[\|\widetilde{\sigma}_{t+1}(\cb_d)-\sigma_\star(\cb_d)\|_F^2] & \leq & \|\w_t-\w_\star\|^2 + \EE_q[\|\sigma_t(\cb_d)-\sigma_\star(\cb_d)\|_F^2] \\ 
& & \hspace{-20em} - \left(\frac{1}{2} - 2\gamma L\right)\left(\|\widetilde{\w}_{t+1}-\bar{\w}_t\|^2 + \EE_q[\|\widetilde{\sigma}_{t+1}(\cb_d)-\bar{\sigma}_t(\cb_d)\|_F^2]\right) + 6\gamma L\left(\|\widetilde{\w}_t-\bar{\w}_{t-1}\|^2 + \EE_q[\|\widetilde{\sigma}_t(\cb_d)-\bar{\sigma}_{t-1}(\cb_d)\|_F^2]\right) \\
& & \hspace{-20em} + \left(4 + 6\gamma L\right)\left(\|\w_t-\widetilde{\w}_t\|^2 + \EE_q[\|\sigma_t(\cb_d)-\widetilde{\sigma}_t(\cb_d)\|_F^2]\right) + 4\left(\|\w_{t-1}-\widetilde{\w}_{t-1}\|^2 + \EE_q[\|\sigma_{t-1}(\cb_d)-\widetilde{\sigma}_{t-1}(\cb_d)\|_F^2]\right) \\
& & \hspace{-20em} - 4\gamma \EE_q\left[\langle\nabla_\w\widehat{\theta}_l(\w_\star, \sigma_\star(\cb_d), \cb_l), \widetilde{\w}_t-\w_\star\rangle + \langle \nabla_{\bar{X}}\widehat{\theta}_d(\w_\star, \sigma_\star(\cb_d), \cb_d), \widetilde{\sigma}_t(\cb_d)-\sigma_\star(\cb_d)\rangle\right] \\
& & \hspace{-20em} + 2\gamma\EE_q\left[\langle\nabla_\w\widehat{\theta}_l(\w_\star, \sigma_\star(\cb_d), \cb_l), \widetilde{\w}_{t-1}-\w_\star\rangle + \langle \nabla_{\bar{X}}\widehat{\theta}_d(\w_\star, \sigma_\star(\cb_d), \cb_d), \widetilde{\sigma}_{t-1}(\cb_d)-\sigma_\star(\cb_d)\rangle\right] \\
& & \hspace{-20em} - \left(1 - 6\gamma L\right)\left(\|\bar{\w}_t-\w_t\|^2 + \EE_q[\|\bar{\sigma}_t(\cb_d)-\sigma_t(\cb_d)\|_F^2]\right), 
\end{eqnarray*}
where the point $(\w_\star, \sigma_\star) \in \WCal \times \Sigma$ is a unique Bayesian equilibrium, and the auxiliary sequence $(\bar{\w}_t, \bar{\sigma}_t)$ is defined by $(\bar{\w}_t, \bar{\sigma}_t) = (2\widetilde{\w}_t-\widetilde{\w}_{t-1}, 2\tilde{\sigma}_t-\widetilde{\sigma}_{t-1})$.
\end{lemma}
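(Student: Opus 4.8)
The plan is to recast Algorithm~\ref{Algorithm:PRG-IE} as a Malitsky-type projected reflected gradient iteration in the Hilbert space $\HCal$ of Eq.~\eqref{Def:inner-product} and run the usual energy estimate, being careful about the extra terms created by the Halpern-type averaging. Write $\beta_t=(\w_t,\sigma_t)$, $\widetilde\beta_t=(\widetilde\w_t,\widetilde\sigma_t)$, $\bar\beta_t=(\bar\w_t,\bar\sigma_t)=2\widetilde\beta_t-\widetilde\beta_{t-1}$ and $\beta_\star=(\w_\star,\sigma_\star)$; let $T$ be the operator of Eq.~\eqref{Eq:mapping-infinite-main} and $\SCal=\Sigma_\WCal\times\Sigma$. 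Then every $\EE_q[\cdot]$ in the statement is an $\HCal$-inner product or a squared $\HCal$-norm, the two coordinate projections $P_\WCal,P_\Sigma$ assemble into the orthogonal projection $P_\SCal$ onto $\SCal$, and the inner step of the algorithm is exactly $\widetilde\beta_{t+1}=P_\SCal(\beta_t-\gamma T(\bar\beta_t))$. I will freely use: (i) by Lemma~\ref{lemma:boundedness} all iterates lie in $\SCal$ and are bounded, so every quantity below is finite and $\nabla$ commutes with $\EE_q$ exactly as in the proof of Lemma~\ref{lemma:VI}; (ii) $T$ is $2L$-Lipschitz on $\HCal$ — this follows from Assumption~\ref{Assumption:PRG-IE}(iii) after applying $\|\w-\w'\|+\|\bar X-\bar X'\|_F\le\sqrt2(\|\w-\w'\|^2+\|\bar X-\bar X'\|_F^2)^{1/2}$ coordinatewise and integrating over $q$ — and $T$ is monotone by Assumption~\ref{Assumption:PRG-IE}(ii); (iii) $\beta_\star$ exists and is the unique solution of $\langle T\beta_\star,\beta-\beta_\star\rangle_\HCal\ge0$ for all $\beta\in\SCal$ (Corollary~\ref{corollary:VI-existence-fourth}).

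First I would apply the obtuse-angle characterization of $P_\SCal$ at iteration $t$ with the feasible test point $\beta_\star$, namely $\langle\beta_t-\gamma T(\bar\beta_t)-\widetilde\beta_{t+1},\beta_\star-\widetilde\beta_{t+1}\rangle_\HCal\le0$, and combine it with the three-point identity $2\langle\widetilde\beta_{t+1}-\beta_t,\widetilde\beta_{t+1}-\beta_\star\rangle_\HCal=\|\widetilde\beta_{t+1}-\beta_t\|_\HCal^2+\|\widetilde\beta_{t+1}-\beta_\star\|_\HCal^2-\|\beta_t-\beta_\star\|_\HCal^2$ to obtain
\[
\|\widetilde\beta_{t+1}-\beta_\star\|_\HCal^2\le\|\beta_t-\beta_\star\|_\HCal^2-\|\widetilde\beta_{t+1}-\beta_t\|_\HCal^2-2\gamma\langle T(\bar\beta_t),\widetilde\beta_{t+1}-\beta_\star\rangle_\HCal .
\]
Next I would split $\langle T(\bar\beta_t),\widetilde\beta_{t+1}-\beta_\star\rangle_\HCal=\langle T(\bar\beta_t),\widetilde\beta_{t+1}-\bar\beta_t\rangle_\HCal+\langle T(\bar\beta_t)-T(\beta_\star),\bar\beta_t-\beta_\star\rangle_\HCal+\langle T(\beta_\star),\bar\beta_t-\beta_\star\rangle_\HCal$. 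Monotonicity makes the middle inner product nonnegative, so it is dropped after the sign flip; substituting $\bar\beta_t-\beta_\star=2(\widetilde\beta_t-\beta_\star)-(\widetilde\beta_{t-1}-\beta_\star)$ into the last inner product yields precisely the two terms $-4\gamma\langle T(\beta_\star),\widetilde\beta_t-\beta_\star\rangle_\HCal$ and $+2\gamma\langle T(\beta_\star),\widetilde\beta_{t-1}-\beta_\star\rangle_\HCal$ of the claimed bound. It then remains to estimate the cross term $-2\gamma\langle T(\bar\beta_t),\widetilde\beta_{t+1}-\bar\beta_t\rangle_\HCal$.

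The cross term is the crux, and it is what generates the remaining error terms. I would write $-2\gamma\langle T(\bar\beta_t),\widetilde\beta_{t+1}-\bar\beta_t\rangle_\HCal=-2\gamma\langle T(\bar\beta_t)-T(\bar\beta_{t-1}),\widetilde\beta_{t+1}-\bar\beta_t\rangle_\HCal-2\gamma\langle T(\bar\beta_{t-1}),\widetilde\beta_{t+1}-\bar\beta_t\rangle_\HCal$: the first piece is bounded by $2L$-Lipschitzness of $T$ together with Young's inequality, and the second piece is transformed by invoking the obtuse-angle inequality for $P_\SCal$ at iteration $t-1$ with the feasible test point $\widetilde\beta_{t+1}$ (this replaces $\gamma T(\bar\beta_{t-1})$ by $\beta_{t-1}-\widetilde\beta_t$ up to an inequality), after which further applications of Young's inequality turn the resulting inner products of iterate differences into squared norms. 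Throughout this step the exact identities $\bar\beta_t-\beta_t=(\widetilde\beta_t-\widetilde\beta_{t-1})+(\widetilde\beta_t-\beta_t)$ and $\widetilde\beta_{t+1}-\bar\beta_t=(\widetilde\beta_{t+1}-\widetilde\beta_t)-(\widetilde\beta_t-\widetilde\beta_{t-1})$, together with their one-index shifts, are used to express everything in terms of the quantities $\|\widetilde\beta_{t+1}-\bar\beta_t\|_\HCal^2$, $\|\widetilde\beta_t-\bar\beta_{t-1}\|_\HCal^2$, $\|\beta_t-\widetilde\beta_t\|_\HCal^2$, $\|\beta_{t-1}-\widetilde\beta_{t-1}\|_\HCal^2$ and $\|\bar\beta_t-\beta_t\|_\HCal^2$ appearing in the statement. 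Finally I would split $-\|\widetilde\beta_{t+1}-\beta_t\|_\HCal^2$ from the first display through $\widetilde\beta_{t+1}-\beta_t=(\widetilde\beta_{t+1}-\bar\beta_t)+(\bar\beta_t-\beta_t)$, collect all coefficients, and invoke the stepsize restriction $0<\gamma<\min\{1,1/(100L)\}$ to keep $\tfrac12-2\gamma L>0$ and $1-6\gamma L>0$ and to absorb the slack generated by the Young steps into these two negative terms, which yields exactly the asserted inequality.

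The main obstacle is the bookkeeping in the last step: the weights in each application of Young's inequality and the precise way the reflection points $\bar\beta_{(\cdot)}$ are expanded into differences of the $\widetilde\beta_{(\cdot)}$ and $\beta_{(\cdot)}$ sequences must be chosen in a coordinated fashion so that the coefficients come out to be exactly $\tfrac12-2\gamma L$, $6\gamma L$, $4+6\gamma L$, $4$ and $1-6\gamma L$. The Halpern averaging ($\beta_t\neq\widetilde\beta_t$ in general) is precisely what obstructs a direct appeal to~\cite[Lemma~3.1]{Malitsky-2015-Projected} and forces the extra $\|\beta_t-\widetilde\beta_t\|_\HCal^2$ and $\|\beta_{t-1}-\widetilde\beta_{t-1}\|_\HCal^2$ terms into the bound; once those are carried along, the rest of the estimate parallels the classical projected reflected gradient analysis, and the interchanges of $\nabla_\w,\nabla_{\bar X}$ with $\EE_q$ are routine given Assumption~\ref{Assumption:main}.
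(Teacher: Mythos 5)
Your proposal follows essentially the same route as the paper's proof: the paper also starts from the projection optimality condition at iteration $t$ tested at $(\w_\star,\sigma_\star)$, uses the identity $2\langle a,b\rangle=\|a+b\|^2-\|a\|^2-\|b\|^2$ to get the first energy inequality, splits off $\langle T(\bar\beta_t),\bar\beta_t-\beta_\star\rangle_\HCal$, lower-bounds it by $\langle T(\beta_\star),\bar\beta_t-\beta_\star\rangle_\HCal$ via Assumption~\ref{Assumption:PRG-IE}(ii) and expands $\bar\beta_t-\beta_\star=2(\widetilde\beta_t-\beta_\star)-(\widetilde\beta_{t-1}-\beta_\star)$, and then treats the cross term by adding and subtracting $T(\bar\beta_{t-1})$, with Lipschitzness plus Young for one piece and the previous iteration's projection inequality for the other. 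The one imprecision worth fixing: to eliminate $-2\gamma\langle T(\bar\beta_{t-1}),\widetilde\beta_{t+1}-\bar\beta_t\rangle_\HCal$ you must invoke the iteration-$(t-1)$ obtuse-angle inequality at \emph{two} feasible test points, $\widetilde\beta_{t+1}$ and $\widetilde\beta_{t-1}$, and sum them, since $\widetilde\beta_{t+1}-\bar\beta_t=(\widetilde\beta_{t+1}-\widetilde\beta_t)+(\widetilde\beta_{t-1}-\widetilde\beta_t)$; testing only at $\widetilde\beta_{t+1}$, as you state, replaces $\gamma T(\bar\beta_{t-1})$ only against the direction $\widetilde\beta_{t+1}-\widetilde\beta_t$ and leaves the term $2\gamma\langle T(\bar\beta_{t-1}),\widetilde\beta_t-\widetilde\beta_{t-1}\rangle_\HCal$, which cannot be absorbed into the stated error terms by Lipschitzness alone because $\widetilde\beta_t-\widetilde\beta_{t-1}$ is not small. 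With that two-test-point summation (exactly what the paper does), the remainder of your bookkeeping with the decompositions $\bar\beta_t-\beta_t$ and $\widetilde\beta_{t+1}-\bar\beta_t$, Young's inequality, and the stepsize restriction reproduces the stated coefficients, and your handling of the Halpern averaging through the extra $\|\beta_t-\widetilde\beta_t\|_\HCal^2$ and $\|\beta_{t-1}-\widetilde\beta_{t-1}\|_\HCal^2$ terms matches the paper's.
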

\begin{proof}
Under Assumption~\ref{Assumption:main} and~\ref{Assumption:PRG-IE}, a unique Bayesian equilibrium $(\w_\star, \sigma_\star) \in \WCal \times \Sigma$ exists. Given that the sequence $(\bar{\w}_t, \bar{\sigma}_t) = (2\widetilde{\w}_t-\widetilde{\w}_{t-1}, 2\tilde{\sigma}_t-\widetilde{\sigma}_{t-1})$ is defined, we derive from the optimality condition of updating $(\widetilde{\w}_{t+1}, \widetilde{\sigma}_{t+1})$ and the definition of $P_\WCal$ and $P_\Sigma$ that,
\begin{eqnarray}\label{opt:PRGIE}
0 & \leq & \EE_q\left[\langle \w - \widetilde{\w}_{t+1}, \widetilde{\w}_{t+1} - \w_t + \gamma\nabla_\w\widehat{\theta}_l(\bar{\w}_t, \bar{\sigma}_t(\cb_d), \cb_l)\rangle\right. \quad \textnormal{for each } (\w, \sigma) \in \WCal \times \Sigma, \\
& & \left. + \langle \sigma(\cb_d) - \widetilde{\sigma}_{t+1}(\cb_d), \widetilde{\sigma}_{t+1}(\cb_d) - \sigma_t(\cb_d) + \gamma\nabla_{\bar{X}}\widehat{\theta}_d(\bar{\w}_t, \bar{\sigma}_t(\cb_d), \cb_d)\rangle\right],  \nonumber
\end{eqnarray}
By rearranging the terms in the above inequality with $(\w, \sigma) = (\w_\star, \sigma_\star) \in \WCal \times \Sigma$ and the equality $\langle a,b\rangle = (1/2)(\|a+b\|^2-\|a\|^2-\|b\|^2)$, we have
\begin{eqnarray}\label{inequality:PRGIE-key-first}
\|\widetilde{\w}_{t+1}-\w_\star\|^2+\EE_q[\|\widetilde{\sigma}_{t+1}(\cb_d)-\sigma_\star(\cb_d)\|_F^2] & \leq & \|\w_t-\w_\star\|^2 + \EE_q[\|\sigma_t(\cb_d)-\sigma_\star(\cb_d)\|_F^2] \\ 
& & \hspace{-18em} - \|\widetilde{\w}_{t+1}-\w_t\|^2 - \EE_q[\|\widetilde{\sigma}_{t+1}(\cb_d)-\sigma_t(\cb_d)\|_F^2] - 2\gamma\EE_q\left[\langle\nabla_\w\widehat{\theta}_l(\bar{\w}_t, \bar{\sigma}_t(\cb_d), \cb_l), \widetilde{\w}_{t+1} - \w_\star\rangle \right. \nonumber \\ 
& & \hspace{-18em} \left. + \langle \nabla_{\bar{X}}\widehat{\theta}_d(\bar{\w}_t, \bar{\sigma}_t(\cb_d), \cb_d), \widetilde{\sigma}_{t+1}(\cb_d) - \sigma_\star(\cb_d)\rangle\right]. \nonumber
\end{eqnarray}
Moreover, we have
\begin{eqnarray*}
\lefteqn{\EE_q\left[\langle\nabla_\w\widehat{\theta}_l(\bar{\w}_t, \bar{\sigma}_t(\cb_d), \cb_l), \widetilde{\w}_{t+1} - \w_\star\rangle + \langle \nabla_{\bar{X}}\widehat{\theta}_d(\bar{\w}_t, \bar{\sigma}_t(\cb_d), \widetilde{\sigma}_{t+1}(\cb_d) - \sigma_\star(\cb_d)\rangle\right]} \\ 
& =  \EE_q\left[\langle\nabla_\w\widehat{\theta}_l(\bar{\w}_t, \bar{\sigma}_t(\cb_d), \cb_l), \widetilde{\w}_{t+1} - \bar{\w}_t\rangle + \langle \nabla_{\bar{X}}\widehat{\theta}_d(\bar{\w}_t, \bar{\sigma}_t(\cb_d), \cb_d), \widetilde{\sigma}_{t+1}(\cb_d) - \bar{\sigma}_t(\cb_d)\rangle\right] \\ 
& + \EE_q\left[\langle\nabla_\w\widehat{\theta}_l(\bar{\w}_t, \bar{\sigma}_t(\cb_d), \cb_l), \bar{\w}_t - \w_\star\rangle + \langle \nabla_{\bar{X}}\widehat{\theta}_d(\bar{\w}_t, \bar{\sigma}_t(\cb_d), \cb_d), \bar{\sigma}_t(\cb_d) - \sigma_\star(\cb_d)\rangle\right]. 
\end{eqnarray*}
Using the first condition in Assumption~\ref{Assumption:PRG-IE} with $(\w, \sigma) = (\bar{\w}_t, \bar{\sigma}_t)$ and $(\w', \sigma')=(\w_\star, \sigma_\star)$, we have
\begin{eqnarray*}
\lefteqn{\EE_q\left[\langle\nabla_\w\widehat{\theta}_l(\bar{\w}_t, \bar{\sigma}_t(\cb_d), \cb_l), \bar{\w}_t-\w_\star\rangle + \langle \nabla_{\bar{X}}\widehat{\theta}_d(\bar{\w}_t, \bar{\sigma}_t(\cb_d), \cb_d), \bar{\sigma}_t(\cb_d)-\sigma_\star(\cb_d)\rangle\right]} \\
& \geq & \EE_q\left[\langle\nabla_\w\widehat{\theta}_l(\w_\star, \sigma_\star(\cb_d), \cb_l), \bar{\w}_t-\w_\star\rangle + \langle\nabla_{\bar{X}}\widehat{\theta}_d(\w_\star, \sigma_\star(\cb_d), \cb_d), \bar{\sigma}_t(\cb_d) - \sigma_\star(\cb_d)\rangle\right] \\ 
& = & 2\EE_q\left[\langle\nabla_\w\widehat{\theta}_l(\w_\star, \sigma_\star(\cb_d), \cb_l), \widetilde{\w}_t-\w_\star\rangle + \langle \nabla_{\bar{X}}\widehat{\theta}_d(\w_\star, \sigma_\star(\cb_d), \cb_d), \widetilde{\sigma}_t(\cb_d)-\sigma_\star(\cb_d)\rangle\right] \\
& & - \EE_q\left[\langle\nabla_\w\widehat{\theta}_l(\w_\star, \sigma_\star(\cb_d), \cb_l), \widetilde{\w}_{t-1}-\w_\star\rangle + \langle \nabla_{\bar{X}}\widehat{\theta}_d(\w_\star, \sigma_\star(\cb_d), \cb_d), \widetilde{\sigma}_{t-1}(\cb_d)-\sigma_\star(\cb_d)\rangle\right].  
\end{eqnarray*}
Putting these two inequalities together with Eq.~\eqref{inequality:PRGIE-key-first} yields that 
\begin{eqnarray}\label{inequality:PRGIE-key-second}
\|\widetilde{\w}_{t+1}-\w_\star\|^2+\EE_q[\|\widetilde{\sigma}_{t+1}(\cb_d)-\sigma_\star(\cb_d)\|_F^2] & \leq & \|\w_t-\w_\star\|^2 + \EE_q[\|\sigma_t(\cb_d)-\sigma_\star(\cb_d)\|_F^2] \\ 
& & \hspace{-18em} - \|\widetilde{\w}_{t+1}-\w_t\|^2 - \EE_q[\|\widetilde{\sigma}_{t+1}(\cb_d)-\sigma_t(\cb_d)\|_F^2] \nonumber \\ 
& & \hspace{-18em} - 2\gamma\EE_q\left[\langle\nabla_\w\widehat{\theta}_l(\bar{\w}_t, \bar{\sigma}_t(\cb_d), \cb_l)-\nabla_\w\widehat{\theta}_l(\bar{\w}_{t-1}, \bar{\sigma}_{t-1}(\cb_d), \cb_l), \widetilde{\w}_{t+1}-\bar{\w}_t\rangle\right] \nonumber \\
& & \hspace{-18em} - 2\gamma\EE_q\left[\langle \nabla_{\bar{X}}\widehat{\theta}_d(\bar{\w}_t, \bar{\sigma}_t(\cb_d), \cb_d)-\nabla_{\bar{X}}\widehat{\theta}_d(\bar{\w}_{t-1}, \bar{\sigma}_{t-1}(\cb_d), \cb_d), \widetilde{\sigma}_{t+1}(\cb_d)-\bar{\sigma}_t(\cb_d)\rangle\right] \nonumber \\
& & \hspace{-18em} - 2\gamma\EE_q\left[\langle\nabla_\w\widehat{\theta}_l(\bar{\w}_{t-1}, \bar{\sigma}_{t-1}(\cb_d), \cb_l), \widetilde{\w}_{t+1}-\bar{\w}_t\rangle + \langle \nabla_{\bar{X}}\widehat{\theta}_d(\bar{\w}_{t-1}, \bar{\sigma}_{t-1}(\cb_d), \cb_d), \widetilde{\sigma}_{t+1}(\cb_d)-\bar{\sigma}_t(\cb_d)\rangle\right] \nonumber \\
& & \hspace{-18em} - 4\gamma \EE_q\left[\langle\nabla_\w\widehat{\theta}_l(\w_\star, \sigma_\star(\cb_d), \cb_l), \widetilde{\w}_t-\w_\star\rangle + \langle \nabla_{\bar{X}}\widehat{\theta}_d(\w_\star, \sigma_\star(\cb_d), \cb_d), \widetilde{\sigma}_t(\cb_d)-\sigma_\star(\cb_d)\rangle\right] \nonumber \\
& & \hspace{-18em} + 2\gamma\EE_q\left[\langle\nabla_\w\widehat{\theta}_l(\w_\star, \sigma_\star(\cb_d), \cb_l), \w_{t-1}-\w_\star\rangle + \langle \nabla_{\bar{X}}\widehat{\theta}_d(\w_\star, \sigma_\star(\cb_d), \cb_d), \sigma_{t-1}(\cb_d)-\sigma_\star(\cb_d)\rangle\right]. \nonumber 
\end{eqnarray}
Then, we obtain two inequalities by changing the index $t$ in Eq.~\eqref{opt:PRGIE} to $t-1$ and letting $(\w, \sigma) = (\widetilde{\w}_{t-1}, \widetilde{\sigma}_{t-1})$ and $(\w, \sigma) = (\widetilde{\w}_{t+1}, \widetilde{\sigma}_{t+1})$ and add them to obtain that 
\begin{eqnarray*}
0 & \leq & \EE_q\left[\langle \widetilde{\w}_{t+1}-\bar{\w}_t, \widetilde{\w}_t - \w_{t-1} + \gamma\nabla_\w\widehat{\theta}_l(\bar{\w}_{t-1}, \bar{\sigma}_{t-1}(\cb_d), \cb_l)\rangle\right.  \\
& & \left. + \langle \widetilde{\sigma}_{t+1}(\cb_d)-\bar{\sigma}_t(\cb_d), \widetilde{\sigma}_t(\cb_d) - \sigma_{t-1}(\cb_d) + \gamma\nabla_{\bar{X}}\widehat{\theta}_d(\bar{\w}_{t-1}, \bar{\sigma}_{t-1}(\cb_d), \cb_d)\rangle \right].    
\end{eqnarray*}
By the definition of the sequence $(\bar{\w}_t, \bar{\sigma}_t)$, we further have
\begin{eqnarray*}
\lefteqn{- 2\gamma\EE_q\left[\langle\nabla_\w\widehat{\theta}_l(\bar{\w}_{t-1}, \bar{\sigma}_{t-1}(\cb_d), \cb_l), \widetilde{\w}_{t+1}-\bar{\w}_t\rangle + \langle \nabla_{\bar{X}}\widehat{\theta}_d(\bar{\w}_{t-1}, \bar{\sigma}_{t-1}(\cb_d), \cb_d), \widetilde{\sigma}_{t+1}(\cb_d)-\bar{\sigma}_t(\cb_d)\rangle\right]} \\ 
& \leq & 2\langle\widetilde{\w}_{t+1}-\bar{\w}_t, \widetilde{\w}_t - \w_{t-1}\rangle + 2\EE_q[\langle \widetilde{\sigma}_{t+1}(\cb_d)-\bar{\sigma}_t(\cb_d), \widetilde{\sigma}_t(\cb_d) - \sigma_{t-1}(\cb_d)\rangle] \\ 
& = & 2\langle \widetilde{\w}_{t+1}-\bar{\w}_t, \bar{\w}_t-\widetilde{\w}_t\rangle + 2\langle \widetilde{\w}_{t+1}-\bar{\w}_t, \widetilde{\w}_{t-1} - \w_{t-1}\rangle + 2\EE_q[\langle \widetilde{\sigma}_{t+1}(\cb_d)-\bar{\sigma}_t(\cb_d), \bar{\sigma}_t(\cb_d)-\widetilde{\sigma}_t(\cb_d)\rangle] \\
& & + 2\EE_q[\langle \widetilde{\sigma}_{t+1}(\cb_d)-\bar{\sigma}_t(\cb_d), \widetilde{\sigma}_{t-1}(\cb_d)-\sigma_{t-1}(\cb_d)\rangle] \\ 
& = & 2\langle\widetilde{\w}_{t+1}-\bar{\w}_t, \bar{\w}_t-\w_t\rangle + 2\EE_q[\langle \widetilde{\sigma}_{t+1}(\cb_d)-\bar{\sigma}_t(\cb_d), \bar{\sigma}_t(\cb_d)-\sigma_t(\cb_d)\rangle] + 2\langle \widetilde{\w}_{t+1}-\bar{\w}_t, \w_t-\widetilde{\w}_t\rangle \\ 
& & + 2\EE_q[\langle \widetilde{\sigma}_{t+1}(\cb_d)-\bar{\sigma}_t(\cb_d), \sigma_t(\cb_d)-\widetilde{\sigma}_t(\cb_d)\rangle] + 2\langle \widetilde{\w}_{t+1}-\bar{\w}_t, \widetilde{\w}_{t-1} - \w_{t-1}\rangle \\ 
& & + 2\EE_q\left[\langle \widetilde{\sigma}_{t+1}(\cb_d)-\bar{\sigma}_t(\cb_d), \widetilde{\sigma}_{t-1}(\cb_d)-\sigma_{t-1}(\cb_d)\rangle \right]. 
\end{eqnarray*}
Using the equality $\langle a,b\rangle = (1/2)(\|a+b\|^2-\|a\|^2-\|b\|^2)$ for the first two terms, and the Young's inequality $\langle a,b\rangle \leq (1/8)\|a\|^2 + 2\|b\|^2$ for the last four terms, we have
\begin{eqnarray}\label{inequality:PRGIE-key-third}
\lefteqn{- 2\gamma\EE_q\left[\langle\nabla_\w\widehat{\theta}_l(\bar{\w}_{t-1}, \bar{\sigma}_{t-1}(\cb_d), \cb_l), \widetilde{\w}_{t+1}-\bar{\w}_t\rangle + \langle \nabla_{\bar{X}}\widehat{\theta}_d(\bar{\w}_{t-1}, \bar{\sigma}_{t-1}(\cb_d), \cb_d), \widetilde{\sigma}_{t+1}(\cb_d)-\bar{\sigma}_t(\cb_d)\rangle\right]} \nonumber \\ 
& \leq & \|\widetilde{\w}_{t+1}-\w_t\|^2 + \EE_q[\|\widetilde{\sigma}_{t+1}(\cb_d)-\sigma_t(\cb_d)\|_F^2] - \frac{1}{2}\left(\|\widetilde{\w}_{t+1}-\bar{\w}_t\|^2 + \EE_q[\|\widetilde{\sigma}_{t+1}(\cb_d)-\bar{\sigma}_t(\cb_d)\|_F^2]\right) \nonumber \\ 
& & - \left(\|\bar{\w}_t-\w_t\|^2 + \EE_q[\|\bar{\sigma}_t(\cb_d)-\sigma_t(\cb_d)\|_F^2]\right) + 4\left(\|\w_t-\widetilde{\w}_t\|^2 + \EE_q[\|\sigma_t(\cb_d)-\widetilde{\sigma}_t(\cb_d)\|_F^2]\right) \nonumber \\
& & + 4\left(\|\w_{t-1}-\widetilde{\w}_{t-1}\|^2 + \EE_q[\|\sigma_{t-1}(\cb_d)-\widetilde{\sigma}_{t-1}(\cb_d)\|_F^2]\right).  
\end{eqnarray} 
By using the second condition in Assumption~\ref{Assumption:PRG-IE}, we have
\begin{eqnarray}\label{inequality:PRGIE-key-fourth}
\lefteqn{- 2\gamma\EE_q\left[\langle\nabla_\w\widehat{\theta}_l(\bar{\w}_t, \bar{\sigma}_t(\cb_d), \cb_l)-\nabla_\w\widehat{\theta}_l(\bar{\w}_{t-1}, \bar{\sigma}_{t-1}(\cb_d), \cb_l), \widetilde{\w}_{t+1}-\bar{\w}_t\rangle \right.} \nonumber \\
& & \left. + \langle \nabla_{\bar{X}}\widehat{\theta}_d(\bar{\w}_t, \bar{\sigma}_t(\cb_d), \cb_d)-\nabla_{\bar{X}}\widehat{\theta}_d(\bar{\w}_{t-1}, \bar{\sigma}_{t-1}(\cb_d), \cb_d), \widetilde{\sigma}_{t+1}(\cb_d)-\bar{\sigma}_t(\cb_d)\rangle\right] \\
& \leq & 2\gamma L\left(\|\bar{\w}_t-\bar{\w}_{t-1}\| + \EE_q[\|\bar{\sigma}_t(\cb_d)-\bar{\sigma}_{t-1}(\cb_d)\|_F]\right)\left(\|\widetilde{\w}_{t+1}-\bar{\w}_t\| + \EE_q[\|\widetilde{\sigma}_{t+1}(\cb_d)-\bar{\sigma}_t(\cb_d)\|_F]\right) \nonumber \\
& \leq & 2\gamma L\left(\|\bar{\w}_t-\bar{\w}_{t-1}\|^2 + \EE_q[\|\bar{\sigma}_t(\cb_d)-\bar{\sigma}_{t-1}(\cb_d)\|_F^2] + \|\widetilde{\w}_{t+1}-\bar{\w}_t\|^2 + \EE_q[\|\widetilde{\sigma}_{t+1}(\cb_d)-\bar{\sigma}_t(\cb_d)\|_F^2]\right) \nonumber \\
& \leq & 2\gamma L\left(\|\widetilde{\w}_{t+1}-\bar{\w}_t\|^2 + \EE_q[\|\widetilde{\sigma}_{t+1}(\cb_d)-\bar{\sigma}_t(\cb_d)\|_F^2]\right) + 6\gamma L\left(\|\bar{\w}_t-\w_t\|^2 + \EE_q[\|\bar{\sigma}_t(\cb_d)-\sigma_t(\cb_d)\|_F^2]\right) \nonumber \\
& & + 6\gamma L\left(\|\w_t-\widetilde{\w}_t\|^2 + \EE_q[\|\sigma_t(\cb_d)-\widetilde{\sigma}_t(\cb_d)\|_F^2] + \|\widetilde{\w}_t-\bar{\w}_{t-1}\|^2 + \EE_q[\|\widetilde{\sigma}_t(\cb_d)-\bar{\sigma}_{t-1}(\cb_d)\|_F^2]\right). \nonumber
\end{eqnarray}
Combining Eq.~\eqref{inequality:PRGIE-key-second}, Eq.~\eqref{inequality:PRGIE-key-third} and Eq.~\eqref{inequality:PRGIE-key-fourth} yields the desired inequality. 
\end{proof}

Based on the boundedness results in Lemma~\ref{lemma:boundedness} and the descent inequality in Lemma~\ref{lemma:key-descent}, we provide two additional inequalities for the sequences $\{(\widetilde{\w}_t, \widetilde{\sigma}_t)\}_{t \geq 0}$ and $\{(\w_t, \sigma_t)\}_{t \geq 1}$ generated by Algorithm~\ref{Algorithm:PRG-IE} in the following lemma.  

\begin{lemma}\label{lemma:PRGIE}
Under Assumption~\ref{Assumption:main} and~\ref{Assumption:PRG-IE}, the sequences $\{(\widetilde{\w}_t, \widetilde{\sigma}_t)\}_{t \geq 0}$ and $\{(\w_t, \sigma_t)\}_{t \geq 1}$ generated by Algorithm~\ref{Algorithm:PRG-IE} satisfies that
\begin{align}
r_{t+1} & \leq \ r_t + \delta_{t-1}M_1 - \left(\frac{1}{2} - 8\gamma L\right)\left(\|\widetilde{\w}_{t+1}-\bar{\w}_t\|^2 + \EE_q[\|\widetilde{\sigma}_{t+1}(\cb_d)-\bar{\sigma}_t(\cb_d)\|_F^2]\right) \label{inequality:PRGIE-first} \\
& \hspace{1em} - \left(1 - 6\gamma L\right)\left(\|\bar{\w}_t-\w_t\|^2 + \EE_q[\|\bar{\sigma}_t(\cb_d)-\sigma_t(\cb_d)\|_F^2]\right), \nonumber \\
r_{t+1} & \leq \ \left(1-\frac{3\delta_t}{4}\right)r_t + \frac{3\delta_t}{4}\left(4\delta_t M_1 - 2\langle \w_\star, \w_{t+1} - \w_\star\rangle - 2\EE_q[\langle \sigma_\star(\cb_d), \sigma_{t+1}(\cb_d)-\sigma_\star(\cb_d)\rangle]\right), \label{inequality:PRGIE-second} \\
& \hspace{1em} \textnormal{for all } t \geq 2 \textnormal{ and some constant } M_1 > 0. \nonumber
\end{align}
The residue sequence $\{r_t\}_{t \geq 0}$ is defined as:
\begin{eqnarray*}
r_t & = & \|\w_t-\w_\star\|^2 + \EE_q[\|\sigma_t(\cb_d)-\sigma_\star(\cb_d)\|_F^2] + 6\gamma L\left(\|\widetilde{\w}_t-\bar{\w}_{t-1}\|^2 + \EE_q[\|\widetilde{\sigma}_t(\cb_d)-\bar{\sigma}_{t-1}(\cb_d)\|_F^2]\right) \\
& & + 2\gamma\EE_q\left[\langle\nabla_\w\widehat{\theta}_l(\w_\star, \sigma_\star(\cb_d), \cb_l), \widetilde{\w}_{t-1}-\w_\star\rangle + \langle \nabla_{\bar{X}}\widehat{\theta}_d(\w_\star, \sigma_\star(\cb_d), \cb_d), \widetilde{\sigma}_{t-1}(\cb_d)-\sigma_\star(\cb_d)\rangle\right], 
\end{eqnarray*}
where $(\w_\star, \sigma_\star) \in \WCal \times \Sigma$ is a unique Bayesian equilibrium, and the auxiliary sequence $(\bar{\w}_t, \bar{\sigma}_t)$ is defined in Lemma~\ref{lemma:key-descent}. 
\end{lemma}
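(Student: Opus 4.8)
The plan is to derive both estimates from the one--step descent inequality of Lemma~\ref{lemma:key-descent}, evaluated at the (unique, by Corollary~\ref{corollary:VI-existence-fourth}) equilibrium $(\w_\star,\sigma_\star)$, fed into the update rule $(\w_{t+1},\sigma_{t+1})=\tfrac{\delta_t}{2}(\w_t,\sigma_t)+(1-\delta_t)(\widetilde\w_{t+1},\widetilde\sigma_{t+1})$ and combined with the uniform boundedness of Lemma~\ref{lemma:boundedness}. Three preliminary facts carry the argument. (i) Shifting the update index gives $\w_t-\widetilde\w_t=\delta_{t-1}(\tfrac12\w_{t-1}-\widetilde\w_t)$ and likewise for $\sigma$, so by boundedness the extrapolation gap $\|\w_t-\widetilde\w_t\|^2+\EE_q[\|\sigma_t(\cb_d)-\widetilde\sigma_t(\cb_d)\|_F^2]=O(\delta_{t-1}^2)$; similarly, by convexity of the squared norm and boundedness of the $\delta_t(\w_t,\sigma_t)$ and $\delta_t(\w_\star,\sigma_\star)$ pieces, $\|\w_{t+1}-\w_\star\|^2+\EE_q[\|\sigma_{t+1}(\cb_d)-\sigma_\star(\cb_d)\|_F^2]\le\|\widetilde\w_{t+1}-\w_\star\|^2+\EE_q[\|\widetilde\sigma_{t+1}(\cb_d)-\sigma_\star(\cb_d)\|_F^2]+O(\delta_t)$. (ii) Since $\widetilde\w_s\in\WCal$ and $\widetilde\sigma_s(\cb_d)\in\XCal$ a.e.\ (they are projections), Lemma~\ref{lemma:VI} applied to $(\w_\star,\sigma_\star)$ forces $h_s:=2\gamma\EE_q[\langle\nabla_\w\widehat\theta_l(\w_\star,\sigma_\star(\cb_d),\cb_l),\widetilde\w_s-\w_\star\rangle+\langle\nabla_{\bar X}\widehat\theta_d(\w_\star,\sigma_\star(\cb_d),\cb_d),\widetilde\sigma_s(\cb_d)-\sigma_\star(\cb_d)\rangle]\ge0$, i.e.\ the gradient--residual summand in $r_t$ is nonnegative. (iii) The stepsize bound $0<\gamma<\min\{1,1/(100L)\}$ makes $\tfrac12-8\gamma L>0$ and $1-6\gamma L>0$, and $t\ge2$ gives $\delta_t\le\tfrac12$, hence $\delta_t^2\le\tfrac12\delta_t$ and $\delta_{t-1}\le2\delta_t$.

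For \eqref{inequality:PRGIE-first} I would substitute the descent inequality into $r_{t+1}$ after replacing $\|\widetilde\w_{t+1}-\w_\star\|^2+\EE_q[\|\widetilde\sigma_{t+1}(\cb_d)-\sigma_\star(\cb_d)\|_F^2]$ using fact (i). The $-(\tfrac12-2\gamma L)$ coefficient on $\|\widetilde\w_{t+1}-\bar\w_t\|^2+\EE_q[\|\widetilde\sigma_{t+1}(\cb_d)-\bar\sigma_t(\cb_d)\|_F^2]$ combines with the $+6\gamma L$ copy of the same quantity inside $r_{t+1}$ to give $-(\tfrac12-8\gamma L)$; the $6\gamma L(\|\widetilde\w_t-\bar\w_{t-1}\|^2+\cdots)$ term is precisely the one carried in $r_t$; the gradient terms form $-2h_t+h_{t-1}$, where $h_{t-1}$ is carried in $r_t$ and $-h_t\le0$ is dropped by fact (ii); and the leftover $(4+6\gamma L)(\|\w_t-\widetilde\w_t\|^2+\cdots)+4(\|\w_{t-1}-\widetilde\w_{t-1}\|^2+\cdots)+O(\delta_t)=O(\delta_{t-1}^2)+O(\delta_{t-2}^2)+O(\delta_t)$ is absorbed into $\delta_{t-1}M_1$ (using $\delta_{t-2}^2\le C\delta_{t-1}$). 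The two uncancelled negative terms $-(\tfrac12-8\gamma L)(\cdots)$ and $-(1-6\gamma L)(\cdots)$ survive, which is exactly \eqref{inequality:PRGIE-first}.

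For \eqref{inequality:PRGIE-second} the extra ingredient is the Halpern/viscosity structure: $g(\z)=\z/2$ is a $\tfrac12$--contraction and the weights $(1-\delta_t),\delta_t$ sum to one, so $(\w_{t+1},\sigma_{t+1})-(\w_\star,\sigma_\star)=(1-\delta_t)\bigl[(\widetilde\w_{t+1},\widetilde\sigma_{t+1})-(\w_\star,\sigma_\star)\bigr]+\delta_t\bigl[g(\w_t,\sigma_t)-(\w_\star,\sigma_\star)\bigr]$ is a convex combination, whence $\|(\w_{t+1},\sigma_{t+1})-(\w_\star,\sigma_\star)\|_\HCal^2\le(1-\delta_t)\|(\widetilde\w_{t+1},\widetilde\sigma_{t+1})-(\w_\star,\sigma_\star)\|_\HCal^2+2\delta_t\langle g(\w_t,\sigma_t)-(\w_\star,\sigma_\star),(\w_{t+1},\sigma_{t+1})-(\w_\star,\sigma_\star)\rangle_\HCal$. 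Writing $g(\w_t,\sigma_t)-(\w_\star,\sigma_\star)=\tfrac12[(\w_t,\sigma_t)-(\w_\star,\sigma_\star)]-\tfrac12(\w_\star,\sigma_\star)$ splits the cross term into a contraction part — which, after Cauchy--Schwarz/Young and $\delta_t\le\tfrac12$, contributes $\le\tfrac{\delta_t}{2}\|(\w_t,\sigma_t)-(\w_\star,\sigma_\star)\|_\HCal^2$ plus $O(\delta_t^2)$ — and an anchor--drift part $-\delta_t\langle(\w_\star,\sigma_\star),(\w_{t+1},\sigma_{t+1})-(\w_\star,\sigma_\star)\rangle_\HCal=-\delta_t\bigl(\langle\w_\star,\w_{t+1}-\w_\star\rangle+\EE_q[\langle\sigma_\star(\cb_d),\sigma_{t+1}(\cb_d)-\sigma_\star(\cb_d)\rangle]\bigr)$, the last equality using that $\w_\star,\w_{t+1}$ are constant functions. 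Substituting the descent inequality for $\|(\widetilde\w_{t+1},\widetilde\sigma_{t+1})-(\w_\star,\sigma_\star)\|_\HCal^2$, the prefactor $1-\delta_t$ (which is $\le1-\tfrac{3\delta_t}{4}$) simultaneously contracts $\|(\w_t,\sigma_t)-(\w_\star,\sigma_\star)\|_\HCal^2$ (together with the extra $\tfrac{\delta_t}{2}$ piece, since $\delta_t\le\tfrac12$), the carried $6\gamma L(\|\widetilde\w_t-\bar\w_{t-1}\|^2+\cdots)$, and the carried $h_{t-1}$, so the right--hand side collapses to $(1-\tfrac{3\delta_t}{4})r_t$; the negative descent terms $-(\tfrac12-8\gamma L)(\cdots)$, $-(1-6\gamma L)(\cdots)$, $-h_t$ are discarded; and the remaining contributions — the $O(\delta_{t-1}^2)$ extrapolation gaps, the boundedness--controlled $O(\delta_t)$ remainders, and the discrepancy between the drift coefficient produced here and the $-2$ in the statement — are collected into $\tfrac{3\delta_t}{4}\cdot4\delta_tM_1$ by enlarging $M_1$. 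The step I expect to be most delicate is exactly this last accounting: choosing the Young's--inequality constants and the slack left in the contraction factor so that every leftover genuinely fits the $\delta_t$--budget with one $t$--independent $M_1$, while the contraction comes out precisely as $1-\tfrac{3\delta_t}{4}$ and the boundary term precisely as $-2\langle\w_\star,\w_{t+1}-\w_\star\rangle-2\EE_q[\langle\sigma_\star(\cb_d),\sigma_{t+1}(\cb_d)-\sigma_\star(\cb_d)\rangle]$; the rest is routine algebra, made workable by the observations that the extrapolation gap is a full order in $\delta$ smaller than the iterates and that the equilibrium's VI characterization makes the gradient--residual term nonnegative.
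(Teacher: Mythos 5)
Your overall route for \eqref{inequality:PRGIE-first} is the paper's: combine Lemma~\ref{lemma:key-descent} with the update rule for $(\w_{t+1},\sigma_{t+1})$, use Lemma~\ref{lemma:boundedness} together with the index-shifted update to get $\|\w_t-\widetilde{\w}_t\|^2+\EE_q[\|\sigma_t(\cb_d)-\widetilde{\sigma}_t(\cb_d)\|_F^2]=O(\delta_{t-1}^2)$, invoke the VI characterization of $(\w_\star,\sigma_\star)$ to make the gradient--residual terms nonnegative (drop the $-2h_t$ part, carry $h_{t-1}$ inside $r_t$), and absorb the remainders into $\delta_{t-1}M_1$. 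That part is sound and matches the paper's proof.

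The gap is in your derivation of \eqref{inequality:PRGIE-second}. You put the whole vector $\delta_t\bigl[g(\w_t,\sigma_t)-(\w_\star,\sigma_\star)\bigr]$ into the cross term and then estimate the contraction part $\delta_t\bigl\langle\tfrac12\bigl[(\w_t,\sigma_t)-(\w_\star,\sigma_\star)\bigr],(\w_{t+1},\sigma_{t+1})-(\w_\star,\sigma_\star)\bigr\rangle_\HCal$ by Cauchy--Schwarz/Young, claiming it contributes $\tfrac{\delta_t}{2}\|(\w_t,\sigma_t)-(\w_\star,\sigma_\star)\|_\HCal^2$ plus $O(\delta_t^2)$. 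Young's inequality does not give that: the companion term is $\tfrac{\delta_t}{2}\|(\w_{t+1},\sigma_{t+1})-(\w_\star,\sigma_\star)\|_\HCal^2$, which is $O(\delta_t)$, not $O(\delta_t^2)$; disposing of it requires moving it to the left-hand side and dividing by $1-O(\delta_t)$, a step you do not perform. Even after that repair, your coefficient on $\|(\w_t,\sigma_t)-(\w_\star,\sigma_\star)\|_\HCal^2$ is $\tfrac{\delta_t}{2}$, so the contraction factor comes out as $1-\tfrac{\delta_t}{2}$, which (since $r_t\geq 0$) is strictly weaker than the stated $1-\tfrac{3\delta_t}{4}$ and therefore does not prove \eqref{inequality:PRGIE-second} as written. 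Moreover, your plan to absorb ``the discrepancy between the drift coefficient produced here and the $-2$ in the statement'' into $\tfrac{3\delta_t}{4}\cdot 4\delta_t M_1$ cannot work: the anchor term $\langle\w_\star,\w_{t+1}-\w_\star\rangle+\EE_q[\langle\sigma_\star(\cb_d),\sigma_{t+1}(\cb_d)-\sigma_\star(\cb_d)\rangle]$ is an $O(1)$ quantity of indefinite sign, so changing its $\delta_t$-scale coefficient is not an $O(\delta_t^2)$ perturbation. The paper's derivation avoids both issues with a sharper decomposition: it writes $\w_{t+1}-\w_\star=\bigl[\delta_t\tfrac{\w_t-\w_\star}{2}+(1-\delta_t)(\widetilde{\w}_{t+1}-\w_\star)\bigr]-\tfrac{\delta_t}{2}\w_\star$ (and analogously for $\sigma$), applies $\|a+b\|^2\leq\|a\|^2+2\langle a+b,b\rangle$ with $b$ equal only to the anchor $-\tfrac{\delta_t}{2}(\w_\star,\sigma_\star)$, and uses Jensen on $\|a\|^2$; this produces the coefficient $\tfrac{\delta_t}{4}$ on $\|\w_t-\w_\star\|^2+\EE_q[\|\sigma_t(\cb_d)-\sigma_\star(\cb_d)\|_F^2]$ with no Young step, so $\tfrac{\delta_t}{4}+(1-\delta_t)=1-\tfrac{3\delta_t}{4}$ and the anchor term $-\delta_t\bigl(\langle\w_\star,\w_{t+1}-\w_\star\rangle+\EE_q[\langle\sigma_\star(\cb_d),\sigma_{t+1}(\cb_d)-\sigma_\star(\cb_d)\rangle]\bigr)$ appear exactly. (The paper's very last rewriting of $2\delta_t^2M_1-\delta_t X$ into the form $\tfrac{3\delta_t}{4}(4\delta_t M_1-2X)$ is itself loose, but its derivation up to that point delivers the correct $1-\tfrac{3\delta_t}{4}$ contraction and a $-\delta_t$ anchor, which is what Theorem~\ref{Theorem:PRGIE} actually needs; your argument, by contrast, relies on this kind of absorption in an essential way.)
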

\begin{proof}
By the updating formula for the sequence $\{(\w_t, \sigma_t)\}_{t \geq 1}$ and the Jensen's inequality, we have
\begin{eqnarray}\label{inequality:PRGIE-main-first}
& & \|\w_{t+1}-\w_\star\|^2 + \EE_q[\|\sigma_{t+1}(\cb_d)-\sigma_\star(\cb_d)\|_F^2] \\
& \leq & \frac{\delta_t}{2}\left(\|\w_t-\w_\star\|^2 + \EE_q[\|\sigma_t(\cb_d)-\sigma_\star(\cb_d)\|_F^2]\right) + \frac{\delta_t}{2}\left(\|\w_\star\|^2 + \EE_q[\|\sigma_\star(\cb_d)\|_F^2]\right) \nonumber \\
& & + \left(1-\delta_t\right)\left(\|\widetilde{\w}_{t+1}-\w_\star\|^2+\EE_q[\|\widetilde{\sigma}_{t+1}(\cb_d)-\sigma_\star(\cb_d)\|_F^2]\right). \nonumber 
\end{eqnarray}
Recall that the residue sequence $r_t \geq 0$ is defined by
\begin{eqnarray*}
r_t & = & \|\w_t-\w_\star\|^2 + \EE_q[\|\sigma_t(\cb_d)-\sigma_\star(\cb_d)\|_F^2] + 6\gamma L\left(\|\widetilde{\w}_t-\bar{\w}_{t-1}\|^2 + \EE_q[\|\widetilde{\sigma}_t(\cb_d)-\bar{\sigma}_{t-1}(\cb_d)\|_F^2]\right) \\
& & + 2\gamma\EE_q\left[\langle\nabla_\w\widehat{\theta}_l(\w_\star, \sigma_\star(\cb_d), \cb_l), \widetilde{\w}_{t-1}-\w_\star\rangle + \langle \nabla_{\bar{X}}\widehat{\theta}_d(\w_\star, \sigma_\star(\cb_d), \cb_d), \widetilde{\sigma}_{t-1}(\cb_d)-\sigma_\star(\cb_d)\rangle\right]. 
\end{eqnarray*}
Combining this with Eq.~\eqref{inequality:PRGIE-main-first} and Lemma~\ref{lemma:key-descent}, we have
\begin{eqnarray}\label{inequality:PRGIE-main-second}
r_{t+1} & \leq & r_t - \left(\frac{1}{2} - 8\gamma L\right)\left(\|\widetilde{\w}_{t+1}-\bar{\w}_t\|^2 + \EE_q[\|\widetilde{\sigma}_{t+1}(\cb_d)-\bar{\sigma}_t(\cb_d)\|_F^2]\right) + \frac{\delta_t}{2}\left(\|\w_\star\|^2 + \EE_q[\|\sigma_\star(\cb_d)\|_F^2]\right) \nonumber \\
& & \hspace{-2em} + \left(4 + 6\gamma L\right)\left(\|\w_t-\widetilde{\w}_t\|^2 + \EE_q[\|\sigma_t(\cb_d)-\widetilde{\sigma}_t(\cb_d)\|_F^2]\right) + 4\left(\|\w_{t-1}-\widetilde{\w}_{t-1}\|^2 + \EE_q[\|\sigma_{t-1}(\cb_d)-\widetilde{\sigma}_{t-1}(\cb_d)\|_F^2]\right) \nonumber \\ 
& & \hspace{-2em} - 2\gamma \EE_q\left[\langle\nabla_\w\widehat{\theta}_l(\w_\star, \sigma_\star(\cb_d), \cb_l), \widetilde{\w}_t-\w_\star\rangle + \langle \nabla_{\bar{X}}\widehat{\theta}_d(\w_\star, \sigma_\star(\cb_d), \cb_d), \widetilde{\sigma}_t(\cb_d)-\sigma_\star(\cb_d)\rangle\right] \nonumber \\
& & \hspace{-2em} - \left(1 - 6\gamma L\right)\left(\|\bar{\w}_t-\w_t\|^2 + \EE_q[\|\bar{\sigma}_t(\cb_d)-\sigma_t(\cb_d)\|_F^2]\right).    
\end{eqnarray}
Since $(\w_\star, \sigma_\star) \in \WCal \times \Sigma$ is a unique solution of the VI in Eq.~\eqref{prob:VI-infinite} under Assumption~\ref{Assumption:main} and~\ref{Assumption:PRG-IE}, we have
\begin{equation*}
\EE_q\left[\langle\nabla_\w\widehat{\theta}_l(\w_\star, \sigma_\star(\cb_d), \cb_l), \widetilde{\w}_t-\w_\star\rangle + \langle \nabla_{\bar{X}}\widehat{\theta}_d(\w_\star, \sigma_\star(\cb_d), \cb_d), \widetilde{\sigma}_t(\cb_d)-\sigma_\star(\cb_d)\rangle\right] \geq 0. 
\end{equation*}
Using Lemma~\ref{lemma:boundedness}, the updating formula for the sequence $\{(\w_t, \sigma_t)\}_{t \geq 1}$ and the uniqueness of $(\w_\star, \sigma_\star) \in \WCal \times \Sigma$, there exists a constant $M_1 > 0$ such that 
\begin{eqnarray*}
\|\w_\star\|^2 + \EE_q[\|\sigma_\star(\cb_d)\|_F^2] & \leq & M_1, \\
\|\w_t-\widetilde{\w}_t\|^2 + \EE_q[\|\sigma_t(\cb_d)-\widetilde{\sigma}_t(\cb_d)\|_F^2] & \leq & \frac{\delta_{t-1}^2M_1}{18} \ \leq \ \frac{\delta_{t-1}M_1}{18}.  
\end{eqnarray*}
Putting these pieces together with Eq.~\eqref{inequality:PRGIE-main-second} and the facts that $0 < \gamma < \min\{1, \frac{1}{100L}\}$ and the sequence $\{\delta_t\}_{t \geq 1}$ is non-increasing yields Eq.~\eqref{inequality:PRGIE-first}. Then, we proceed to prove Eq.~\eqref{inequality:PRGIE-second}. Using the inequality $\|a+b\|^2 \leq \|a\|^2+2\langle a+b, b\rangle$, together with the Jensen's inequality and the updating formula for the sequence $\{(\w_t, \sigma_t)\}_{t \geq 1}$ yields that 
\begin{eqnarray*}
& & \|\w_{t+1}-\w_\star\|^2 + \EE_q[\|\sigma_{t+1}(\cb_d)-\sigma_\star(\cb_d)\|_F^2] \\
& = & \|\delta_t(\w_t/2-\w_\star/2) + (1-\delta_t)(\widetilde{\w}_{t+1}-\w_\star) - \delta_t(\w_\star/2)\|^2 \\
& & + \EE_q[\|\delta_t(\sigma_t(\cb_d)/2-\sigma_\star(\cb_d)/2) + (1-\delta_t)(\widetilde{\sigma}_{t+1}(\cb_d)-\sigma_\star(\cb_d)) - \delta_t(\sigma_\star(\cb_d)/2)\|_F^2] \\
& \leq & \frac{\delta_t}{4}\left(\|\w_t-\w_\star\|^2 + \EE_q[\|\sigma_t(\cb_d)-\sigma_\star(\cb_d)\|_F^2]\right) + \left(1-\delta_t\right)\left(\|\widetilde{\w}_{t+1}-\w_\star\|^2+\EE_q[\|\widetilde{\sigma}_{t+1}(\cb_d)-\sigma_\star(\cb_d)\|_F^2]\right) \nonumber \\
& & - \delta_t\left(\langle \w_\star, \w_{t+1} - \w_\star\rangle + \EE_q[\langle \sigma_\star(\cb_d), \sigma_{t+1}(\cb_d)-\sigma_\star(\cb_d)\rangle]\right). \nonumber 
\end{eqnarray*}
Note that $1 - \delta_t \geq 1/2$ for all $t \geq 2$ and $0 < \gamma < \min\{1, \frac{1}{100L}\}$. By combining the above inequality and Lemma~\ref{lemma:key-descent}, we have
\begin{eqnarray}\label{inequality:PRGIE-main-third}
r_{t+1} & \leq & \left(1-\frac{3}{4}\delta_t\right)r_t - \delta_t\left(\langle \w_\star, \w_{t+1} - \w_\star\rangle + \EE_q[\langle \sigma_\star(\cb_d), \sigma_{t+1}(\cb_d)-\sigma_\star(\cb_d)\rangle]\right) \\ 
& & - \left(\frac{1}{4} - 7\gamma L\right)\left(\|\widetilde{\w}_{t+1}-\bar{\w}_t\|^2 + \EE_q[\|\widetilde{\sigma}_{t+1}(\cb_d)-\bar{\sigma}_t(\cb_d)\|_F^2]\right) \nonumber \\ 
& & + \left(4 + 6\gamma L\right)\left(\|\w_t-\widetilde{\w}_t\|^2 + \EE_q[\|\sigma_t(\cb_d)-\widetilde{\sigma}_t(\cb_d)\|_F^2]\right) \nonumber \\
& & + 4\left(\|\w_{t-1}-\widetilde{\w}_{t-1}\|^2 + \EE_q[\|\sigma_{t-1}(\cb_d)-\widetilde{\sigma}_{t-1}(\cb_d)\|_F^2]\right) \nonumber \\ 
& & - \left(\frac{1}{2} - 3\gamma L\right)\left(\|\bar{\w}_t-\w_t\|^2 + \EE_q[\|\bar{\sigma}_t(\cb_d)-\sigma_t(\cb_d)\|_F^2]\right), \quad \textnormal{for all } t \geq 2. \nonumber    
\end{eqnarray}
Note that $\delta_t \leq \delta_{t-1} \leq 2\delta_t$ for all $t \geq 2$ and there exists a constant $M_1 > 0$ such that 
\begin{equation*}
\|\w_t-\widetilde{\w}_t\|^2 + \EE_q[\|\sigma_t(\cb_d)-\widetilde{\sigma}_t(\cb_d)\|_F^2] \leq \frac{\delta_{t-1}^2M_1}{18} \leq \frac{2\delta_t^2M_1}{9}. 
\end{equation*}
Putting the above inequality and the fact that $0 < \gamma < \min\{1, \frac{1}{100L}\}$ together with Eq.~\eqref{inequality:PRGIE-main-third} yields that 
\begin{eqnarray*}
r_{t+1} & \leq & \left(1-\frac{3}{4}\delta_t\right)r_t + 2\delta_t^2 M_1 - \delta_t\left(\langle \w_\star, \w_{t+1} - \w_\star\rangle + \EE_q[\langle \sigma_\star(\cb_d), \sigma_{t+1}(\cb_d)-\sigma_\star(\cb_d)\rangle]\right) \\ 
& = & \left(1-\frac{3\delta_t}{4}\right)r_t + \frac{3\delta_t}{4}\left( 4\delta_t M_1 - 2\langle \w_\star, \w_{t+1} - \w_\star\rangle - 2\EE_q[\langle \sigma_\star(\cb_d), \sigma_{t+1}(\cb_d)-\sigma_\star(\cb_d)\rangle]\right).  
\end{eqnarray*}
This completes the proof.
\end{proof}
\textbf{Proof of Theorem~\ref{Theorem:PRGIE}.} It suffices to prove that $r_t \rightarrow 0$ as $t \rightarrow +\infty$. Suppose that $\{r_{t_j}\}_{j \geq 0}$ is any of the subsequences of the whole sequence $\{r_t\}_{t \geq 0}$ and satisfies that $\liminf_{j \rightarrow +\infty} \left(r_{t_j+1} - r_{t_j}\right) \geq 0$. From Eq.~\eqref{inequality:PRGIE-first} in Lemma~\ref{lemma:PRGIE}, we have
\begin{eqnarray*}
& & \limsup_{j \rightarrow +\infty} \left[\left(\frac{1}{2} - 8\gamma L\right) \left(\|\widetilde{\w}_{t_j+1}-\bar{\w}_{t_j}\|^2 + \EE_q[\|\widetilde{\sigma}_{t_j+1}(\cb_d)-\bar{\sigma}_{t_j}(\cb_d)\|_F^2]\right) \right. \\ 
& & \left. + \left(1 - 6\gamma L\right)\left(\|\bar{\w}_{t_j}-\w_{t_j}\|^2 + \EE_q[\|\bar{\sigma}_{t_j}(\cb_d)-\sigma_{t_j}(\cb_d)\|_F^2]\right)\right]  \\ 
& &\leq  \limsup_{j \rightarrow +\infty} \left(r_{t_j} - r_{t_j+1} + \delta_{t_j}M_1\right) \ \leq \ \limsup_{j \rightarrow +\infty} \left(r_{t_j} - r_{t_j+1}\right) + \limsup_{j \rightarrow +\infty} \delta_{t_j}M_1 \\ 
& &\leq  -\liminf_{j \rightarrow +\infty} \left(r_{t_j+1} - r_{t_j}\right) + \limsup_{j \rightarrow +\infty} \delta_{t_j}M_1 \ \leq \ \limsup_{j \rightarrow +\infty} \delta_{t_j}M_1.  
\end{eqnarray*}
Since $\delta_t \rightarrow 0$ and $0 < \gamma < \min\{1, \frac{1}{100L}\}$, we have
\begin{eqnarray*}
\limsup_{j \rightarrow +\infty}\left(\|\widetilde{\w}_{t_j+1}-\bar{\w}_{t_j}\|^2 + \EE_q[\|\widetilde{\sigma}_{t_j+1}(\cb_d)-\bar{\sigma}_{t_j}(\cb_d)\|_F^2]\right) & = & 0, \\
\limsup_{j \rightarrow +\infty}\left(\|\bar{\w}_{t_j}-\w_{t_j}\|^2 + \EE_q[\|\bar{\sigma}_{t_j}(\cb_d)-\sigma_{t_j}(\cb_d)\|_F^2]\right) & = & 0. 
\end{eqnarray*}
By Lemma~\ref{lemma:boundedness}, the sequence $\{(\widetilde{\w}_{t_j+1}, \widetilde{\sigma}_{t_j+1})\}_{j \geq 0}$ is bounded. This implies that there exists a subsequence $\{(\widetilde{\w}_{t_{j_i}+1}, \widetilde{\sigma}_{t_{j_i}+1})\}_{i \geq 0}$ such that $(\widetilde{\w}_{t_{j_i}+1}, \widetilde{\sigma}_{t_{j_i}+1})$ weakly converges to some point $(\widetilde{\w}, \widetilde{\sigma}) \in \WCal \times \Sigma$, and  
\begin{eqnarray}\label{inequality:PRGIE-main-fourth}
& & \limsup_{j \rightarrow +\infty} \left(- 2\langle \w_\star, \w_{t_j+1} - \w_\star\rangle - 2\EE_q[\langle \sigma_\star(\cb_d), \sigma_{t_j+1}(\cb_d)-\sigma_\star(\cb_d)\rangle]\right) \\
& & = \lim_{i \rightarrow +\infty} \left(- 2\langle \w_\star, \w_{t_{j_i}+1} - \w_\star\rangle - 2\EE_q[\langle \sigma_\star(\cb_d), \sigma_{t_{j_i}+1}(\cb_d)-\sigma_\star(\cb_d)\rangle]\right) \nonumber \\ 
& & = - 2\langle \w_\star, \widetilde{\w} - \w_\star\rangle - 2\EE_q[\langle \sigma_\star(\cb_d), \widetilde{\sigma}(\cb_d)-\sigma_\star(\cb_d)\rangle]. \nonumber
\end{eqnarray}
It is also clear that $(\bar{\w}_{t_{j_i}}, \bar{\sigma}_{t_{j_i}})$ and $(\w_{t_{j_i}}, \sigma_{t_{j_i}})$ both weakly converge to $(\widetilde{\w}, \widetilde{\sigma})$. Recall that the optimality condition of updating $(\widetilde{\w}_{t+1}, \widetilde{\sigma}_{t+1})$ in Eq.~\eqref{opt:PRGIE} is: 
\begin{eqnarray*}
0 & \leq & \EE_q\left[\langle \w - \widetilde{\w}_{t+1}, \widetilde{\w}_{t+1} - \w_t + \gamma\nabla_\w\widehat{\theta}_l(\bar{\w}_t, \bar{\sigma}_t(\cb_d), \cb_l)\rangle\right. \quad \textnormal{for each } (\w, \sigma) \in \WCal \times \Sigma, \\
& & \left. + \langle \sigma(\cb_d) - \widetilde{\sigma}_{t+1}(\cb_d), \widetilde{\sigma}_{t+1}(\cb_d) - \sigma_t(\cb_d) + \gamma\nabla_{\bar{X}}\widehat{\theta}_d(\bar{\w}_t, \bar{\sigma}_t(\cb_d), \cb_d)\rangle\right]. 
\end{eqnarray*}
Equivalently, we have
\begin{eqnarray*}
0 & \leq & \EE_q\left[\langle \w - \widetilde{\w}_{t+1}, \widetilde{\w}_{t+1} - \w_t\rangle + \langle \sigma(\cb_d) - \widetilde{\sigma}_{t+1}(\cb_d), \widetilde{\sigma}_{t+1}(\cb_d) - \sigma_t(\cb_d)\rangle\right] \quad \textnormal{for each } (\w, \sigma) \in \WCal \times \Sigma \\
& & + \gamma \EE_q\left[\langle \w - \bar{\w}_t, \nabla_\w\widehat{\theta}_l(\bar{\w}_t, \bar{\sigma}_t(\cb_d), \cb_l)\rangle + \langle \sigma(\cb_d) - \bar{\sigma}_t(\cb_d), \nabla_{\bar{X}}\widehat{\theta}_d(\bar{\w}_t, \bar{\sigma}_t(\cb_d), \cb_d)\rangle\right] \\ 
& & + \gamma \EE_q\left[\langle \bar{\w}_t - \widetilde{\w}_{t+1}, \nabla_\w\widehat{\theta}_l(\bar{\w}_t, \bar{\sigma}_t(\cb_d), \cb_l)\rangle + \langle \bar{\sigma}_t(\cb_d) - \widetilde{\sigma}_{t+1}(\cb_d), \nabla_{\bar{X}}\widehat{\theta}_d(\bar{\w}_t, \bar{\sigma}_t(\cb_d), \cb_d)\rangle\right]
\end{eqnarray*}
Using the first condition in Assumption~\ref{Assumption:PRG-IE} with $(\w', \sigma')=(\bar{\w}_t, \bar{\sigma}_t)$ and the fact that $\gamma > 0$, we have
\begin{eqnarray*}
\lefteqn{\gamma\EE_q\left[\langle \w - \bar{\w}_t, \nabla_\w\widehat{\theta}_l(\bar{\w}_t, \bar{\sigma}_t(\cb_d), \cb_l)\rangle + \langle \sigma(\cb_d) - \bar{\sigma}_t(\cb_d), \nabla_{\bar{X}}\widehat{\theta}_d(\bar{\w}_t, \bar{\sigma}_t(\cb_d), \cb_d)\rangle\right]} \\ 
& \leq & \gamma\EE_q\left[\langle \w - \bar{\w}_t, \nabla_\w\widehat{\theta}_l(\w_t, \sigma_t(\cb_d), \cb_l)\rangle + \langle \sigma(\cb_d) - \bar{\sigma}_t(\cb_d), \nabla_{\bar{X}}\widehat{\theta}_d(\w_t, \sigma_t(\cb_d), \cb_d)\rangle\right]. 
\end{eqnarray*}
Putting these two inequalities together with $t=t_{j_i}$ yields that, for each $(\w, \sigma) \in \WCal \times \Sigma$,
\begin{eqnarray*}
0 & \leq & \EE_q\left[\langle \w - \widetilde{\w}_{t_{j_i}+1}, \widetilde{\w}_{t_{j_i}+1} - \w_{t_{j_i}}\rangle + \langle \sigma(\cb_d) - \widetilde{\sigma}_{t_{j_i}+1}(\cb_d), \widetilde{\sigma}_{t_{j_i}+1}(\cb_d) - \sigma_{t_{j_i}}(\cb_d)\rangle\right] \\
& & + \gamma \EE_q\left[\langle \w - \bar{\w}_{t_{j_i}}, \nabla_\w\widehat{\theta}_l(\w, \sigma(\cb_d), \cb_l)\rangle + \langle \sigma(\cb_d) - \bar{\sigma}_{t_{j_i}}(\cb_d), \nabla_{\bar{X}}\widehat{\theta}_d(\w, \sigma(\cb_d), \cb_d)\rangle\right] \\ 
& & + \gamma \EE_q\left[\langle \bar{\w}_{t_{j_i}} - \widetilde{\w}_{t_{j_i}+1}, \nabla_\w\widehat{\theta}_l(\bar{\w}_{t_{j_i}}, \bar{\sigma}_{t_{j_i}}(\cb_d), \cb_l)\rangle + \langle \bar{\sigma}_{t_{j_i}}(\cb_d) - \widetilde{\sigma}_{t_{j_i}+1}(\cb_d), \nabla_{\bar{X}}\widehat{\theta}_d(\bar{\w}_{t_{j_i}}, \bar{\sigma}_{t_{j_i}}(\cb_d), \cb_d)\rangle\right]. 
\end{eqnarray*}
Letting $i \rightarrow +\infty$ in the above inequality, for each $(\w, \sigma) \in \WCal \times \Sigma$, we have
\begin{equation}\label{inequality:PRGIE-main-fifth}
\EE_q\left[\left\langle \w-\widetilde{\w}, \nabla_\w\widehat{\theta}_l(\w, \sigma(\cb_d), \cb_l)\right\rangle + \left\langle \sigma(\cb_d) - \widetilde{\sigma}(\cb_d), \nabla_{\bar{X}}\widehat{\theta}_d(\w, \sigma(\cb_d), \cb_d)\right\rangle\right] \ \geq \ 0, 
\end{equation}
Using Lemma~\ref{lemma:minty} and Eq.~\eqref{inequality:PRGIE-main-fifth}, the point $(\widetilde{\w}, \widetilde{\sigma})$ is the solution of the VI in Eq.~\eqref{prob:VI-infinite}. Under Assumption~\ref{Assumption:main} and~\ref{Assumption:PRG-IE}, the VI in Eq.~\eqref{prob:VI-infinite} has a unique solution. Thus, $(\widetilde{\w}, \widetilde{\sigma}) = (\w_\star, \sigma_\star)$ is a unique Bayesian equilibrium. 

Finally, we consider Lemma~\ref{lemma:sequence} with $s_t = r_t$, $a_t=3\delta_t/4$ and 
\begin{equation*}
b_t \ = \ 4\delta_t M_1 - 2\langle \w_\star, \w_{t+1} - \w_\star\rangle - 2\EE_q[\langle \sigma_\star(\cb_d), \sigma_{t+1}(\cb_d)-\sigma_\star(\cb_d)\rangle]. 
\end{equation*}
More specifically, we have (i) the sequence $\{s_t\}_{t \geq 0}$ is nonnegative and $\{a_t\}_{t \geq 0}$ is a sequence in $(0, 1)$ satisfying $\sum_{t=0}^{+\infty} a_t = +\infty$; (ii) Eq.~\eqref{inequality:PRGIE-second} implies that $s_{t+1} \leq (1-a_t)s_t + a_tb_t$ for all $t \geq 2$; (iii) for every subsequence $\{s_{t_j}\}_{j \geq 0}$ of $\{s_t\}_{t \geq 0}$ satisfying that $\liminf_{j \rightarrow +\infty} (s_{t_j+1} - s_{t_j}) \geq 0$, Eq.~\eqref{inequality:PRGIE-main-fourth} and the fact that $\delta_t \rightarrow 0$ implies that
\begin{equation*}
\limsup_{j \rightarrow +\infty} b_{t_j} \ = \ \limsup_{j \rightarrow +\infty} \left(\delta_{t_j} M_1 - 2\langle \w_\star, \w_{t_j+1} - \w_\star\rangle - 2\EE_q[\langle \sigma_\star(\cb_d), \sigma_{t_j+1}(\cb_d)-\sigma_\star(\cb_d)\rangle]\right) \ = \ 0. 
\end{equation*}
Therefore, we conclude that $r_t \rightarrow 0$ as $t \rightarrow +\infty$. This completes the proof. 

\section{Postponed Proofs in Section~\ref{sec:alg_rand}}
In this section, we provide the detailed proof for Theorem~\ref{Theorem:PGRBC}. We start by reviewing one preliminary result in the literature which is a fact of sequences first established in~\cite{Chung-1954-Stochastic} (although it does not appear to be widely known). 
\begin{lemma}\label{lemma:Chung}
Let $\{a_t\}_{t \geq 0}$ be a non-negative sequence such that 
\begin{equation*}
a_{t+1} \ \leq \ \left(1-\frac{P}{t^p}\right)a_t + \frac{Q}{t^{p+q}}, 
\end{equation*}
where $P > q > 0$, $0 < p \leq 1$ and $Q > 0$. Then:
\begin{equation*}
a_t \ \leq \ \left\{\begin{array}{cl}
\frac{Q}{P}\frac{1}{t^q}, & \textnormal{if } 0 < p < 1, \\
\frac{Q}{P-q}\frac{1}{t^q}, & \textnormal{if } p = 1. 
\end{array}\right.  
\end{equation*}
\end{lemma}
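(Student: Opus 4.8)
The plan is to establish both bounds by a single induction on $t$, comparing $\{a_t\}$ against the envelope $b_t = C\,t^{-q}$, with $C = Q/P$ in the case $0<p<1$ and $C = Q/(P-q)$ in the case $p=1$; note $C>0$ since $P>q>0$. The only auxiliary fact I need is the elementary estimate $t^{-q}-(t+1)^{-q} \le q\,t^{-q-1}$ for $t\ge 1$, which follows by applying the mean value theorem to $x\mapsto x^{-q}$ on $[t,t+1]$ (equivalently, $t^{-q}-(t+1)^{-q} = \int_t^{t+1} q\,x^{-q-1}\,dx \le q\,t^{-q-1}$); this controls how fast the envelope can decrease between consecutive indices.

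For the inductive step, assume $a_t \le C\,t^{-q}$, and assume we are past the finitely many indices on which $P/t^p \ge 1$ (on those indices the recursion together with $a_t\ge 0$ already gives $a_{t+1}\le Q\,t^{-p-q}$, so the induction is seeded from a suitable $t_0$). Substituting the hypothesis into the recursion,
\[
a_{t+1} \;\le\; \Bigl(1-\frac{P}{t^p}\Bigr)\frac{C}{t^q} + \frac{Q}{t^{p+q}} \;=\; \frac{C}{t^q} - \frac{PC-Q}{t^{p+q}},
\]
so it suffices to verify $C\bigl(t^{-q}-(t+1)^{-q}\bigr) \le (PC-Q)\,t^{-p-q}$, and by the auxiliary estimate it is enough to have $Cq \le (PC-Q)\,t^{1-p}$.

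When $p=1$ this reduces to $Cq \le PC-Q$, i.e.\ $C\ge Q/(P-q)$, which holds with equality for our choice of $C$; hence the induction closes for all $t\ge t_0$ and gives $a_t \le \frac{Q}{P-q}\,t^{-q}$. When $0<p<1$ the condition $Cq \le (PC-Q)t^{1-p}$ is more delicate, because the sharp constant $C=Q/P$ makes $PC-Q=0$, so a direct one-pass induction only reproduces $a_{t+1}\le C\,t^{-q}$ rather than the needed $C\,(t+1)^{-q}$. I would instead run the induction with the relaxed constant $C_\varepsilon = Q/P+\varepsilon$ for arbitrary $\varepsilon>0$: then $PC_\varepsilon-Q = P\varepsilon>0$, the requirement $C_\varepsilon q \le P\varepsilon\,t^{1-p}$ holds for all $t$ beyond $t_\varepsilon := (C_\varepsilon q/(P\varepsilon))^{1/(1-p)}$, and after seeding the base case at $t_\varepsilon$ the induction yields $a_t \le C_\varepsilon t^{-q}$ for all $t\ge t_\varepsilon$, whence $\limsup_t t^q a_t \le Q/P+\varepsilon$; sending $\varepsilon\downarrow 0$ recovers the constant $Q/P$ (the sense in which the bound is invoked downstream).

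I expect the real obstacle to be twofold: the bookkeeping at the base case — arranging that the envelope inequality can be made to hold at the index from which the induction is seeded, which is where one uses nonnegativity of $\{a_t\}$ and the sign of $1-P/t^p$ — and the borderline value $C=Q/P$ in the regime $0<p<1$, which rules out a clean single-pass induction and necessitates the $\varepsilon$-perturbation-and-limit argument above.
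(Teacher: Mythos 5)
The paper itself contains no proof of this lemma (it is quoted from Chung, 1954), so your argument has to stand on its own. Your inductive step and the mean-value estimate $t^{-q}-(t+1)^{-q}\le q\,t^{-q-1}$ are correct, but the seeding you defer to ``bookkeeping'' is a genuine gap, not bookkeeping. At the seed the only a priori information you have is $a_{t_0}\le Q(t_0-1)^{-p-q}$ at the first index past $P^{1/p}$ (and nothing at all if $P<1$, when no index satisfies $P/t^p\ge 1$), and this does not imply the envelope inequality with the stated constants: take $p=1$, $P=1.5$, $q=0.1$, $Q=1$, $a_1=0$, $a_2=1$; this satisfies the recursion ($a_2\le(1-1.5)a_1+1=1$) yet violates $a_2\le\frac{Q}{P-q}2^{-q}\approx 0.67$, so the pointwise-in-$t$ claim with the sharp constant is not provable (the lemma must be read asymptotically, which is also all that the $O(1/t)$ conclusion in Theorem~\ref{Theorem:PGRBC} needs). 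More seriously, your $\varepsilon$-relaxation for $0<p<1$ requires the base case $a_{t_\varepsilon}\le(Q/P+\varepsilon)t_\varepsilon^{-q}$ at an index $t_\varepsilon\to\infty$ as $\varepsilon\downarrow 0$, and no such bound is available: a preliminary induction with a large constant only yields $a_t\le C' t^{-q}$ with $C'=\max\{Q/(P-q),\,a_{t_0}t_0^q\}$, which in general exceeds $Q/P+\varepsilon$, so the relaxed induction cannot be seeded either; the same difficulty hits $p=1$ whenever $t^q a_t$ approaches $Q/(P-q)$ from above.

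The missing idea is to control the \emph{excess} over the envelope rather than to maintain the envelope by induction. Set $e_t:=a_t-Ct^{-q}$ with $C=Q/(P-q)$ if $p=1$ and $C=Q/P+\varepsilon$ if $0<p<1$. Your own envelope computation shows that for all $t\ge t_\star:=\max\{\lceil P^{1/p}\rceil,\,t_\varepsilon\}$ (where $Cq\le(PC-Q)t^{1-p}$ and $1-Pt^{-p}\ge 0$) one has $e_{t+1}\le(1-Pt^{-p})\,e_t$, without any hypothesis on the sign or size of $e_{t_\star}$. Hence the positive part satisfies $e_t^+\le e_{t_\star}^+\prod_{s=t_\star}^{t-1}(1-Ps^{-p})$, which is $O\bigl(\exp(-\tfrac{P}{1-p}t^{1-p})\bigr)$ for $p<1$ and $O(t^{-P})$ with $P>q$ for $p=1$; in both cases this is $o(t^{-q})$, so $\limsup_{t\to\infty} t^q a_t\le C$, and letting $\varepsilon\downarrow 0$ when $p<1$ recovers the stated constants in the asymptotic sense in which the lemma is used downstream. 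This decay-of-excess step is precisely the content of Chung's original argument and is what your proposal lacks; with it, the base-case issue disappears entirely.
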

\textbf{Proof of Theorem~\ref{Theorem:PGRBC}.} By Corollary~\ref{corollary:VI-existence-third}, the Bayesian regression game $G = (\WCal, \Sigma, \widehat{\theta}_l, \widehat{\theta}_d, \cb_l, q)$ has a unique Bayesian equilibrium $(\w_\star, \sigma_\star^1, \ldots, \sigma_\star^K)$ under Assumption~\ref{Assumption:main} and~\ref{Assumption:PG-RBC}. Define $E_t=\|\w_{t+1}-\w_\star\|^2+\sum_{k=1}^K \|\sigma_{t+1}^k-\sigma_\star^k\|_F^2$, we derive from the update formula in Algorithm~\ref{Algorithm:PG-RBC} and the fact that two orthogonal projection mappings $P_\WCal$ and $P_\XCal$ are nonexpansive that
\begin{eqnarray*}
& & \EE[E_{t+1} \mid (\w_t, \sigma_t^1, \ldots, \sigma_t^K)] \\
& & = \sum_{k=1}^K p_k \left[\|P_\WCal(\w_t - \gamma_t\nabla_\w\widehat{\theta}_l(\w_t, \sigma_t^k, \cb_l)) - \w_\star\|^2 + \|P_\XCal(\sigma_t^k - \gamma_t\nabla_{\bar{X}}\widehat{\theta}_d(\w_t, \sigma_t^k, \sv_k)) - \sigma_\star^k\|_F^2\right] \\ 
& & + \sum_{k=1}^K p_k \left[\sum_{j \neq k} \|\sigma_t^k - \sigma_\star^k\|_F^2 \right] \\
& &\leq  \sum_{k=1}^K p_k \left[\|\w_t - \gamma_t\nabla_\w\widehat{\theta}_l(\w_t, \sigma_t^k, \cb_l) - \w_\star\|^2 + \|\sigma_t^k - \gamma_t\nabla_{\bar{X}}\widehat{\theta}_d(\w_t, \sigma_t^k, \sv_k) - \sigma_\star^k\|_F^2 + \sum_{j \neq k} \|\sigma_t^k - \sigma_\star^k\|_F^2\right]. 
\end{eqnarray*}
Using the second condition in Assumption~\ref{Assumption:PG-RBC}, we have
\begin{eqnarray*}
\|\w_t - \gamma_t\nabla_\w\widehat{\theta}_l(\w_t, \sigma_t^k, \cb_l) - \w_\star\|^2 & \leq & \|\w_t - \w_\star\|^2 - 2\gamma_t\langle \w_t - \w_\star, \nabla_\w\widehat{\theta}_l(\w_t, \sigma_t^k, \cb_l)\rangle + \gamma_t^2 G^2, \\
\|\sigma_t^k - \gamma_t\nabla_{\bar{X}}\widehat{\theta}_d(\w_t, \sigma_t^k, \sv_k) - \sigma_\star^k\|_F^2 & \leq & \|\sigma_t^k - \sigma_\star^k\|_F^2 - 2\gamma_t\langle \sigma_t^k - \sigma_\star^k, \nabla_{\bar{X}}\widehat{\theta}_d(\w_t, \sigma_t^k, \sv_k)\rangle + \gamma_t^2 G^2. 
\end{eqnarray*}
Putting these pieces together with the fact that $\sum_{k=1}^K p_k = 1$ yields that 
\begin{equation}\label{inequality:PGRBC-first}
\EE\left[E_{t+1} \mid (\w_t, \sigma_t^1, \ldots, \sigma_t^K)\right] \leq E_t + 2\gamma_t^2 G^2 - 2\gamma_t\sum_{k=1}^K p_k\left[\langle \w_t - \w_\star, \nabla_\w\widehat{\theta}_l(\w_t, \sigma_t^k, \cb_l)\rangle + \langle \sigma_t^k - \sigma_\star^k, \nabla_{\bar{X}}\widehat{\theta}_d(\w_t, \sigma_t^k, \sv_k)\rangle \right].
\end{equation}
Since the point $(\w_\star, \sigma_\star^1, \ldots, \sigma_\star^K)$ is a Bayesian equilibrium, we have
\begin{equation}\label{inequality:PGRBC-second}
\sum_{k=1}^K p_k\left[\left\langle \w_t-\w_\star, \nabla_\w\widehat{\theta}_l(\w_\star, \sigma_\star^k, \cb_l)\right\rangle + \left\langle \sigma_t^k - \sigma_\star^k, \nabla_{\bar{X}}\widehat{\theta}_d(\w_\star, \sigma_\star^k, \sv_k)\right\rangle\right] \ \geq \ 0,
\end{equation}
Summing up Eq.~\eqref{inequality:PGRBC-first} and Eq.~\eqref{inequality:PGRBC-second} and using the first condition in Assumption~\ref{Assumption:PG-RBC}, we have
\begin{equation*}
\EE[E_{t+1} \mid (\w_t, \sigma_t^1, \ldots, \sigma_t^K)] \ \leq \ (1-2\lambda\gamma_t)E_t + 2\gamma_t^2 G^2.
\end{equation*}
Taking the expectation of both sides and using the definition of $\gamma_t$, we have
\begin{equation*}
\EE[E_{t+1}] \ \leq \ (1-2\lambda\gamma_t)\EE[E_t] + 2\gamma_t^2 G^2 \ = \ \left(1-\frac{2\lambda\gamma_0}{t}\right)\EE[E_t] + \frac{2\gamma_0^2 G^2}{t^2}, \quad \textnormal{for all } t \geq 1. 
\end{equation*}
Applying Lemma~\ref{lemma:Chung} with $P=2\lambda\gamma_0 > 1$, $Q=2\gamma_0^2 G^2$ and $p=q=1$, we have
\begin{equation*}
\EE[E_t] \ \leq \ \frac{2\gamma_0^2 G^2}{2\lambda\gamma_0-1}\frac{1}{t} \ = \ O\left(\frac{1}{t}\right). 
\end{equation*}
This completes the proof.

\end{document}